\theoremstyle{definition} 
\newtheorem{definition}{Definition}
\newtheorem{proposition}{Proposition}
\Crefname{equation}{Eq.}{Eqs.}
\Crefname{figure}{Fig.}{Figs.}
\Crefname{section}{Sec.}{Secs.}
\definecolor{mygreen}{rgb}{0.0, 0.5, 0.0}
\newcommand{\rev}[1]{{\color{black}{#1}}}
\journal{a journal}
\begin{document}

\begin{frontmatter}

\title{On the Robustness of Sparse Counterfactual Explanations \\to Adverse Perturbations}

\author{Marco Virgolin\corref{mycorrespondingauthor}}
\address{Centrum Wiskunde \& Informatica, Science Park 123, 1098 XG Amsterdam, the Netherlands}
\ead{marco.virgolin@cwi.nl}

\author{Saverio Fracaros}
\address{University of Trieste, via Weiss 2, 34128 Trieste, Italy}


\cortext[mycorrespondingauthor]{Corresponding author}


\begin{abstract}
Counterfactual explanations (CEs) are a powerful means for understanding how decisions made by algorithms can be changed.
Researchers have proposed a number of desiderata that CEs should meet to be practically useful, such as requiring minimal effort to enact, or complying with causal models.
We consider a further aspect to improve the usability of CEs: 
robustness to adverse perturbations, which may naturally happen due to unfortunate circumstances.
Since CEs typically prescribe a \emph{sparse} form of intervention (i.e., only a subset of the features should be changed), we \rev{study the effect of addressing robustness separately for the features that are recommended to be changed and those that are not.}
Our definitions are workable in that they can be incorporated as penalty terms in the loss functions that are used for discovering CEs.
To experiment with robustness, we create and release code where five data sets (commonly used in the field of fair and explainable machine learning) have been enriched with feature-specific annotations that can be used to sample meaningful perturbations.
Our experiments show that CEs are often not robust and, if adverse perturbations take place (even if not worst-case), the intervention they prescribe may require a much larger cost than anticipated, or even become impossible.
However, accounting for robustness in the search process, which can be done rather easily, allows discovering robust CEs systematically.
Robust CEs \rev{make additional intervention to contrast perturbations much less costly than non-robust CEs}.
\rev{We also find that robustness is easier to achieve for the features to change, posing an important point of consideration for the choice of what counterfactual explanation is best for the user.}
Our code is available at: \url{https://github.com/marcovirgolin/robust-counterfactuals}.
\end{abstract}

\begin{keyword}
counterfactual explanation\sep 
explainable machine learning\sep
explainable artificial intelligence\sep
robustness\sep
uncertainty
\end{keyword}

\end{frontmatter}


\section{Introduction}

Modern Artificial Intelligence (AI) systems often rely on machine learning models such as ensembles of decision trees and deep neural networks~\cite{friedman2001greedy,ke2017lightgbm,lecun2015deep}, 
which are \emph{massive} in terms of number of parameters.
Massive models are appealing because, under proper training and regularization regimes, they are often unmatched by smaller models~\cite{belkin2019reconciling,nakkiran2021deep}.
However, as massive models perform myriads of computations, it can be very difficult to interpret and predict their behavior.
Because of this, massive models are often called \emph{black-box models}, and ensuring that their use in high-stakes applications (e.g., of medicine and finance) is fair and responsible can be challenging~\cite{goodman2017european,jobin2019global}.

The field of eXplainable AI (XAI) studies methods to dissect and analyze black-box models~\cite{adadi2018peeking,guidotti2018survey} (as well as methods to generate interpretable models when possible~\cite{rudin2019stop}). 
Famous methods of XAI include feature relevance attribution~\cite{ribeiro2016should,lundberg2017unified}, explanation by analogy with prototypes~\cite{kim2016examples,chaofan2019this}, and, of focus in this work, \emph{counterfactual explanations}. 
Counterfactual explanations enable to reason by contrast rather than by analogy, as they show in what ways the input given to a black-box model needs to be changed for the model to make a different decision~\cite{wachter2017counterfactual,stepin2021survey}.
A classic example of counterfactual explanation is:
\emph{``Your loan request has been rejected. If your salary was \num{60000}\$  instead of \num{50000}\$ and your debt was \num{2500}\$ instead of \num{5000}\$, your request would have been approved.''}
A user who obtains an unfavourable decision can attempt to overturn it by intervening according to the counterfactual explanation.

Normally, the search of counterfactual explanations is formulated as an optimization problem (see \Cref{sec:problem-statement} for a formal description). 
Given the feature values that describe the user as starting point, we seek the minimal changes to those feature values that result in a point for which the black-box model makes a different (and oftentimes, a specific favourable) decision.
We wish the changes to be minimal for two reasons: 
one, to learn about the behavior of the black-box model \rev{for a neighborhood of data points, e.g., to assess its fairness (although this is not guaranteed in general, see e.g.,~\cite{slack2021counterfactual})}; 
two, in the hope that putting the counterfactual explanation into practice by means of real-life intervention will require minimal effort too.
For counterfactual explanations to be most useful, more desiderata than requiring minimal feature changes may need to be taken into account (see \Cref{sec:related})~\cite{barocas2020hidden}. 

In this paper, we consider a desideratum that can be very important for the usability of counterfactual explanations: \emph{robustness to adverse perturbations}.
By adverse perturbations we mean changes in feature values that happen due to unfortunate circumstances beyond the user's control, \rev{making reaching the desired outcome no longer possible, or requiring the user to put more effort than originally anticipated}.
These unfortunate circumstances can have various origins, e.g., time delays, measurement corrections, biological processes, and so on.
For example, if a counterfactual explanation for improving a patient's heart condition prescribes lowering the patient's blood pressure, the chosen treatment may need to be employed for longer, or even turn out to be futile, if the patient has a genetic predisposition to resist that treatment (for more examples, see \Cref{sec:datasets} and choices made in the coding of our experiments, in \texttt{robust\_cfe/dataproc.py}).

We show that, if adverse perturbations might happen, one can and \emph{should} seek counterfactual explanations that are robust to such perturbations.
A particular novelty of our work is that we distinguish between whether perturbations impact the features that counterfactual explanations prescribe to \emph{change} or \emph{keep as they are} \rev{(note that some features may be irrelevant and can be changed differently than how prescribed by a counterfactual explanation, we address this in \Cref{sec:sparsity})}.
This is because counterfactual explanations are normally required to be \emph{sparse} in terms of the intervention they prescribe (i.e., only a subset of the features should be changed), for better usability (see \Cref{sec:problem-statement}).
As it will be shown, making this discrimination allows to improve the effectiveness and efficiency with which robustness can be accounted for.
\rev{Consequently, one might need to consider carefully which counterfactual explanation to pursue, based on whether they are robust to features to change or keep as they are.}

In summary, this paper makes the following contributions:
\begin{enumerate}
    \item We propose two workable definitions of robustness of counterfactual explanations that concern, respectively, the features prescribed to be changed and those to be kept as they are;
    \item We release code to support further investigations, where five existing data sets are annotated with perturbations and plausibility constraints that are tailored to the features and type of user seeking recourse;
    \item We provide experimental evidence that accounting for robustness is important to prevent adverse perturbations from making it very hard or impossible to achieve recourse through counterfactual explanations, when adverse perturbations are sampled from a distribution (i.e., they are not necessarily worst-case ones);
    \item \rev{We show that robustness for the features to change is far more reliable and computationally efficient to account for than robustness for the features to keep as they are;}
    \item \rev{Additionally, we propose a simple but effective genetic algorithm that outperforms several existing gradient-free search algorithms for the discovery of counterfactual explanations.
    The algorithm supports plausibility constraints and implements the proposed definitions of robustness.}
\end{enumerate}



\section{Preliminaries}
\label{sec:problem-statement}
\rev{Let us assume we are given a point $\mathbf{x} = \left( x_1, \dots, x_d \right)$, where $d$ is the number of features.
Each feature takes values either in (a subset of) $\mathbb{R}$, in which case we call it a \emph{numerical} feature, or in (a subset of) $\mathbb{N}$, in which case we call it a \emph{categorical} feature.
For categorical features, we use natural numbers as a convenient way to identify their categories, but disregard ordering.
For example, for the categorical feature \emph{gender}, $0$ might mean \emph{male}, $1$ might mean \emph{female}, and $2$ might mean \emph{non-binary}.
Thus, $\mathbf{x} \in \mathbb{R}^{d_1} \times \mathbb{N}^{d_2}$, where $d_1+d_2=d$.
}

A \emph{counterfactual example}\footnote{\rev{Many authors use $\mathbf{x}^\prime$ to represent a counterfactual example for $\mathbf{x}$, instead of $\mathbf{z}$.
We chose $\mathbf{z}$ not to overload the notation with superscripts later on in the manuscript, for readability.
}} for a point $\mathbf{x}$ is a point $\mathbf{z} \in \mathbb{R}^{d_1} \times \mathbb{N}^{d_2}$ such that, given a classification (black-box) machine learning model $f : \mathbb{R}^{d_1} \times \mathbb{N}^{d_2} \rightarrow \{ c_1, c_2, \dots \}$ ($c_i$ is a decision or \emph{class}), $f(\mathbf{z}) \neq f(\mathbf{x})$.
We wish $\mathbf{z}$ to be \emph{close} to $\mathbf{x}$ under a meaningful distance function \rev{$\delta$} that is problem-specific and meets several desiderata (see \Cref{sec:related}).
For example, commonly-used distances that are capable of handling both numerical and categorical features are variants of Gower's distance~\cite{gower1971general} (see~\Cref{eq:loss-function} and, e.g., \cite{guidotti2018local} for a variant thereof). Often, when dealing with more than two classes, we also impose \rev{$f(\mathbf{z}) = t$, i.e., the \emph{target} class we desire $\mathbf{z}$ to be}. 
Other times, we wish to find a \emph{set} of counterfactual examples $\{ \mathbf{z}_1, \dots, \mathbf{z}_k \}$, possibly of different classes, to obtain multiple means of recourse or simply gain information on the decision boundary of $f$ nearby $\mathbf{x}$ (e.g., to explain $f$'s local behavior)~\cite{wachter2017counterfactual,sharma2020certifai,mothilal2020dice}.

For the sake of readability, we provide formal definitions only for the case \rev{when} all features are numerical \rev{(i.e., $\mathbf{x} \in \mathbb{R}^d, d_1=d, d_2=0$)}.
For completeness, we include explanations of how to deal with categorical features in the running text.
Furthermore, \rev{we assume feature independence.
While this assumption is rarely entirely met in real-world practice, it is commonly done in literature due to the lack of causal models (e.g., only four works consider causality in \Cref{sec:related}), and allows us to greatly simplify the introduction of the concepts hereby presented. 
We discuss the limitations that arise from this assumption in \Cref{sec:discussion}.}

A counterfactual \emph{explanation} is represented by a description of how $\mathbf{x}$ needs to be changed to obtain $\mathbf{z}$.
In other words, a counterfactual explanation is a prescription on what interventions should be made to \emph{reach} the respective counterfactual example.
For example, under the assumption of independence and all-numerical features, the difference $\mathbf{z} - \mathbf{x}$ is typically considered to be the counterfactual explanation for how to reach $\mathbf{z}$ from $\mathbf{x}$.
What particular form counterfactual explanations take is not crucial to our discourse, and we will use $\mathbf{z} - \mathbf{x}$ for simplicity.

We proceed by considering the following traditional setting \rev{where, for simplicity of exposition and without loss of generality, we will assume that features are pre-processed so that a difference in one unit in terms of feature $i$ is equivalent to a difference in one unit in feature $j$ (i.e., the user's effort is commensurate across different features). 
Alternatively, one can account for this in the computation of the distance (see, e.g., \Cref{eq:loss-function}).
We
} seek the (explanation relative to \rev{an}) \emph{optimal} $\mathbf{z}^\star$ with:
\begin{equation}\label{eq:traditional}
\begin{aligned} 
    \mathbf{z}^\star & \in \text{argmin}_\mathbf{z} \delta (\mathbf{z}, \mathbf{x}) \\
    \textit{with\ \ } & \delta(\mathbf{z}, \mathbf{x}) := || \mathbf{z} - \mathbf{x} ||_1  + \lambda || \mathbf{z} - \mathbf{x} ||_0 \\
    \textit{and subject to\ \ } & f(\mathbf{z}) = t \text{\ \ and\ \ } \mathbf{z} - \mathbf{x} \in \mathcal{P}.
\end{aligned}
\end{equation}
In other words, $\delta$ is a linear combination, weighed by $\lambda$, of the sum of absolute distances between the feature values of $\mathbf{x}$ and $\mathbf{z}$, and the count of feature values that are different between $\mathbf{x}$ and $\mathbf{z}$.
\rev{Note that $\mathbf{z}^\star$ needs not be unique, i.e., multiple optima may exist.}
Moreover, the difference $\mathbf{z} - \mathbf{x}$ must abide to some plausibility constraints specified in a collection $\mathcal{P}$.
\rev{We model plausibility constraints as a set of specifications, each relative to a feature $i$, concerning whether $z_i - x_i$ is allowed to be  $>0$, $<0$, and $\neq 0$, i.e., a feature can increase, decrease, or change at all (for categorical features, we only consider the latter).}
For example for a private individual who wishes to be granted a loan, one of such constraints may specify that they cannot reasonably intervene to change the \rev{value of a currency (such a feature is called \emph{mutable but not actionable}), i.e., counterfactual explanations must have $z_i - x_i = 0$, for $i$ representing currency value.
Similarly, the individual's age may increase but not decrease, i.e., $z_i - x_i > 0$, for $i$ representing age.}

We particularly consider the $L1$-norm (i.e., the term $||\cdot||_1$ of $\delta$ in \Cref{eq:traditional}) because it is reasonable to think that, for independent features, the total \emph{cost} of intervention (i.e., the effort the user must put) is the sum of the costs of intervention for each feature separately, and that these costs grow linearly.
Some works (e.g.,~\cite{guidotti2018local,laugel2018comparison}) choose the $L2$-norm ($||\cdot||_2$, also known as Euclidean norm) instead of the $L1$-norm; the definitions of robustness given in this paper can be easily adapted for the $L2$-norm.
Regarding the $L0$-norm (i.e., the term $||\cdot||_0$ of $\delta$ in \Cref{eq:traditional}), this term explicitly promotes a form of sparsity, as it seeks to minimize how many features have a different value between $\mathbf{z}$ and $\mathbf{x}$.
This is desirable because, oftentimes, the user can only reasonably focus on, and intervene upon, a limited number of features (even if this amounts to a larger total cost in terms of $L1$ compared to intervention on \emph{all} the features)~\cite{keane2020good}.

\section{\rev{Perturbations \& robustness}}
\label{sec:perturbations}
\rev{Unfortunate circumstances (e.g., inflation) might lead to more or different intervention to be needed, compared to what was originally prescribed by a counterfactual explanation (e.g., increase savings by \num{1000} \$ to be granted credit access).
Thus, instead of reaching $\mathbf{z}$ as intended by the counterfactual explanation, a different point $\mathbf{z}^\prime$ is obtained.
Note that while the effects of unfortunate circumstances can impact feature values, the circumstances themselves need not be encoded as feature values. 
In fact, we will only focus on the extent by which feature values may be perturbed by such circumstances.

Let us define the vector $\mathbf{w} = \mathbf{z} - \mathbf{z}^\prime$ as a \emph{perturbation} for the counterfactual example $\mathbf{z}$.
We assume that perturbations that afflict feature $i$ are sampled from some distribution $P(w_i)$ and we are interested in controlling for, or \emph{being robust to}, large magnitude perturbations that have reasonable risk. 
For example for normally-distributed perturbations, we might want to consider the values that can be sampled at the $95^\textit{th}$ or $99^\textit{th}$ percentile.
We will therefore assume that we can define a vector $\mathbf{p}=\left(p^{\{-\}}_1, p^{\{+\}}_1, \dots, p^{\{-\}}_{d}, p^{\{+\}}_{d} \right)$ where $p^{\{-\}}_{i} \leq 0$ and $p^{\{+\}}_{i} \geq 0$ represent, respectively, the smallest negative and largest positive perturbations that can reasonably happen to the $i^\textit{th}$ feature.
For example, if the $i^\textit{th}$ feature represents the blood pressure of a patient, then $p^{\{-\}}_i$ tells by how much the blood pressure might lower at most (e.g., as a consequence of dehydration) and $p^{\{+\}}_1$ tells by how much the blood pressure might raise at most (e.g., as a consequence of anti-inflammatory drug intake). 
Clinicians may be able to define this information from their experience or retrieve it from medical literature.
In general, the magnitudes of $p^{\{-\}}_i, p^{\{+\}}_i$ need not be the same, i.e., $|p^{\{-\}}_i| \neq |p^{\{+\}}_i|$.
}
Note that for an $i^\textit{th}$ feature that is categorical, decreases or increases $p^{\{-\}}_i, p^{\{+\}}_i$ as explained for numerical features are no longer meaningful. 
For categorical features, we will assume that $\mathbf{p}$ contains elements that represent what categorical perturbations are possible for that feature, i.e., $p_i$ will be a set of indices that represent categories.

\rev{Under the problem setting we considered in \Cref{sec:problem-statement}, perturbations that may afflict a counterfactual explanation define a \emph{box} (hyper-rectangle) of all possible points $\mathbf{z}^\prime$ that can be reached from $\mathbf{z}$ due to perturbations.
An example is illustrated in  \Cref{fig:first-example}.
We define the concept of \emph{$\mathbf{p}$-neighborhood} of $\mathbf{z}$ as follows:

\begin{definition}
\label{def:p-neighborhood} 
\emph{($\mathbf{p}$-neighborhood and $\mathbf{p}$-neighbors of a counterfactual example)} Given a model $f$, a point $\mathbf{x}$, a respective counterfactual example $\mathbf{z}$, and a vector of possible perturbations $\mathbf{p}$, the $\mathbf{p}$-neighborhood of $\mathbf{z}$ is the set: 
\begin{equation}\label{eq:p-neighborhood}
  N := \left\{ \mathbf{z}^\prime \mid z^\prime_i \in [z_i + p^{\{-\}}_i, z_i + p^{\{+\}}_i] \right\}.
\end{equation}
A point $\mathbf{z}^\prime \in N \text{ such that } \mathbf{z}^\prime \neq \mathbf{z}$ is called a $\mathbf{p}$-neighbor of $\mathbf{z}$.
\end{definition}

}

\begin{figure}
    \centering
    \includegraphics[width=0.5\linewidth]{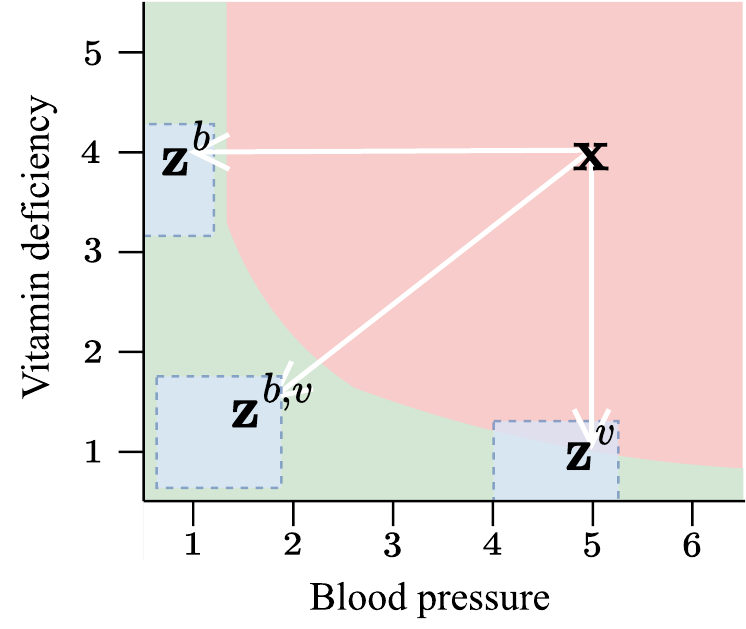}
    \caption{Example of considering robustness to perturbations when seeking counterfactual examples.
    The {\color{red}red} and {\color{mygreen}green} areas respectively represent \emph{high risk} and \emph{low risk} classifications of a cardiac condition according to a model $f$.
    The patient, represented by $\mathbf{x}$, is at high risk.
    Three possible counterfactual examples are shown for different interventions (white arrows): $\mathbf{z}^b$ for treating blood pressure, $\mathbf{z}^v$ for treating vitamin deficiency, and $\mathbf{z}^{b,v}$ for treating both.
    We assume to know the maximal extent of perturbation (under reasonable risk) for blood pressure and vitamin deficiency due to natural physiological events.
    This allows us to define the {\color{blue}blue} areas surrounding each counterfactual example.
    Perturbations $\mathbf{w}$ to one of the counterfactual examples can lead to any other point in the blue area.
    $\mathbf{z}^{v}$ is the best of the three in terms of proximity to $\mathbf{x}$ but its blue area partly overlaps with the red area.
    This means that there exist $\mathbf{w}$ such that $\mathbf{z}^{v} + \mathbf{w}$ leads to a point in the red area, invalidating the counterfactual explanation.
    In such cases, it is important to estimate if further intervention is possible so that can still be reached $\mathbf{z}^{v}$, and at what cost.
    }
    \label{fig:first-example}
\end{figure}

\rev{Not all perturbations are problematic. 
Our goal is to study robustness to \emph{adverse} perturbations, i.e., those for which $f(\mathbf{z} + \mathbf{w}) = f(\mathbf{z}^\prime) \neq t$.
In other words, we wish to seek counterfactual examples $\mathbf{z}$ that have no (or the fewest possible) $\mathbf{p}$-neighbors for which perturbations can cause the classification performed by $f$ to be different from $t$.
When that happens, we say that the counterfactual explanation has been \emph{invalidated} by the perturbation. 
However, it may be the case that invalidation is not \emph{permanent}: there may still exist intervention (i.e., a new counterfactual explanation) that adheres to the constraints in $\mathcal{P}$ and allows to overcome invalidation.
Therefore, in this work, we will seek to discover counterfactual explanations that are \emph{robust} in the sense that (i) if invalidated, further intervention remains possible, (ii) the cost of further intervention is small.
}

\rev{Unfortunately, if $f$ is assume to be a general model (e.g., not necessarily a linear one), then the following argument holds.

\begin{proposition}
\label{prop:general-f}
For a general $f$, information on the classification of a $\mathbf{p}$-neighbor (e.g., that $f(\mathbf{z}^\prime) = f(\mathbf{z})$ for $\mathbf{z}^\prime$ on the boundary of $N$) provides no information about the classification of another $\mathbf{p}$-neighbor (e.g., that $f(\mathbf{z}^{\prime}) \neq f(\mathbf{z}^{\prime\prime})$ for $\mathbf{z}^{\prime\prime}$ in the interior of $N$). 
\end{proposition}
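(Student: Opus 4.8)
The plan is to argue by an explicit non-uniqueness construction: to show that an observation about one $\mathbf{p}$-neighbor carries no implication for another, I would exhibit, for any fixed observation, two admissible models that are both consistent with it yet disagree at the point in question. First I would make the informal phrase ``provides no information'' precise in a logical (worst-case) sense: an observation provides no information about the class of a neighbor $\mathbf{z}^{\prime\prime}$ exactly when, for each candidate class, there exists a general model $f$ that matches the observation and assigns that class to $\mathbf{z}^{\prime\prime}$. Under this reading, ``no information'' is equivalent to the coexistence of models giving opposite answers at $\mathbf{z}^{\prime\prime}$.

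Next, I would fix any set $S \subset N$ of observed $\mathbf{p}$-neighbors together with a prescribed class for each (for instance, the single observation $f(\mathbf{z}^\prime) = f(\mathbf{z})$ with $\mathbf{z}^\prime$ on the boundary of $N$), and fix a target neighbor $\mathbf{z}^{\prime\prime} \in N \setminus S$ whose class is to be inferred. I would then construct two functions $f_1, f_2 : \mathbb{R}^{d} \rightarrow \{ c_1, c_2, \dots \}$ that agree on all of $S$ (and on $\mathbf{z}$) but satisfy $f_1(\mathbf{z}^{\prime\prime}) = c_1$ and $f_2(\mathbf{z}^{\prime\prime}) = c_2$ with $c_1 \neq c_2$. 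Because $\mathbf{z}^{\prime\prime} \notin S$ and no regularity whatsoever is imposed on a general $f$, each function may be specified pointwise — copying the prescribed values on $S$ and on $\mathbf{z}$, and choosing the value at $\mathbf{z}^{\prime\prime}$ independently — so $f_1$ and $f_2$ are both legitimate models compatible with the observation.

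The conclusion then follows immediately: since the same observation is consistent with $f_1$ (for which $f(\mathbf{z}^{\prime\prime}) = c_1$) and with $f_2$ (for which $f(\mathbf{z}^{\prime\prime}) = c_2$), no class at $\mathbf{z}^{\prime\prime}$ can be logically excluded, and the observation is therefore uninformative about it. I expect the only genuine subtlety — something to state carefully rather than a computational obstacle — to be the formalization of ``no information'' together with the explicit use of the generality hypothesis. The argument hinges on $f$ being entirely unconstrained: any coupling between values at nearby points (continuity, a Lipschitz or smoothness bound, or linearity, as hinted by the parenthetical in the statement) would forbid the independent redefinition at $\mathbf{z}^{\prime\prime}$ and would let observations on the boundary of $N$ genuinely constrain its interior. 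I would close by noting that this is precisely why the box $N$ cannot be certified by sampling alone for general models, motivating the reliance on more structured settings and on the penalty-based formulations developed in the remainder of the paper.
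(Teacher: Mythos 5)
Your proposal is correct and rests on the same core observation as the paper's proof: a fully general $f$ is subject to no regularity constraint, so its value at one $\mathbf{p}$-neighbor places no constraint on its value at another. The routes differ in how this is packaged, and the difference is worth noting. The paper argues by exhibiting a single pathological witness: it invokes the universal approximation theorem to claim $f$ may be ``any function,'' e.g.\ a Swiss-cheese-like function in which arbitrarily close points receive different classes. You instead first formalize ``provides no information'' as indistinguishability --- for any observed set $S$ of neighbor classifications and any target $\mathbf{z}^{\prime\prime} \notin S$, there exist two admissible models agreeing on $S$ but disagreeing at $\mathbf{z}^{\prime\prime}$ --- and then construct the pair pointwise. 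Your version is both more elementary and, strictly speaking, more watertight: the universal approximation theorem concerns uniform approximation of \emph{continuous} functions on compacta, so it does not literally deliver ``any function,'' whereas your pointwise construction needs no such appeal and makes explicit exactly which hypothesis (the absence of any coupling between values at nearby points) the conclusion hinges on. What the paper's phrasing buys in exchange is a concrete reminder that the adversarial behavior is realizable by the model classes actually used in practice (neural networks), which is the practically relevant point for the reader; a fully rigorous version would ideally combine your indistinguishability framing with that realizability remark.
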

}

\begin{proof}
We cannot preclude that the model $f$ is, for example, a neural network.
Under the universal approximation theorem~\cite{hornik1989multilayer}, $f$ may represent any function. 
Thus, $f$ may represent a \emph{Swiss cheese}-like function, where for example $f(\mathbf{z}^\prime) \neq f(\mathbf{z}^{\prime\prime})$ with $\mathbf{z}^{\prime\prime}:= \mathbf{z}^{\prime} + \mathbf{e}$ and $\mathbf{e} = \left( \varepsilon_1, \dots, \varepsilon_d \right)$ different from the zero-vector, however small $|\varepsilon_i|, \forall i$.
\end{proof}

This proposition means that if no information on, e.g., regularity or smoothness of $f$ is available, then we must check each and every $\mathbf{p}$-neighbor $\mathbf{z^\prime}$ of $\mathbf{z}$ to assess whether some of them may invalidate the explanation, i.e., $\exists \mathbf{z}^\prime$ such that  $f(\mathbf{z}^\prime) \neq t$.
Checking all neighbors is typically not feasible, e.g., as soon as some of the features are real valued. 
Thus, the best one can do is to take an approximate approach.
For example, a Monte-Carlo sampling approach can be used where a batch of random points within $N$ is considered, hoping that the batch is representative of all points in $N$.
As we will show in the next sections, a better strategy can be designed if sparsity is considered.

\rev{We conclude this section by noting that perturbations, as described so far, are \emph{absolute}, i.e., independent of the starting point $\mathbf{x}$, the counterfactual in consideration $\mathbf{z}$, or the intervention entailed by the counterfactual explanation $\mathbf{z}-\mathbf{x}$.
Perturbations to feature $i$ might however depend on $x_i$ and $z_i$, i.e., be sampled from a distribution $P(w_i | x_i, z_i)$.
For example, due to market fluctuations, a return on investment may be smaller than anticipated by $5\%$ of the expected value.
Such type of \emph{relative} perturbations entail different $\mathbf{p}$-neighborhoods for different $\mathbf{x}$ and $\mathbf{z}$.
For simplicity and without loss of generality,} we will proceed by assuming that perturbations can only be absolute. 
We explain how we also included \emph{relative} perturbations in the annotations used for our experiments in \Cref{sec:experimental-setup}.

\rev{
\section{Sparsity, features in $\mathcal{C}$ and $\mathcal{K}$}
\label{sec:sparsity}
}
We use the form of sparsity mentioned \rev{in \Cref{sec:problem-statement}} to partition the features into two sets.
\rev{As mentioned before, sparsity is an important desideratum because it may not be reasonable to expect that the user can realistically intervene on, and keep track of, \emph{all} the features to achieve recourse.}
\rev{Given a specific counterfactual explanation $\mathbf{z}$ for the point $\mathbf{x}$,} we call the set containing the (indices of the) features whose values should \emph{change}  
$\mathcal{C} = \{i \in \{1, \dots, d\} \mid z_i \neq x_i \}$, and its complement, i.e., the set of the (indices of the) features whose values should be \emph{kept as they are},
$\mathcal{K} = \{i \in \{1, \dots, d\} \mid z_i = x_i \}$.
Typically, because a sufficiently large $\lambda$ is used, or because of \rev{the plausibility constraints specified in} $\mathcal{P}$, $\mathcal{K} \neq \emptyset$. 

\rev{Note that the proposed partitioning between $\mathcal{C}$ and $\mathcal{K}$ implicitly assumes that all features are relevant to the counterfactual explanation. 
If certain features are irrelevant, perturbations to those features will have no effect on $f$'s decision, and thus those features need not be accounted for when assessing robustness.
This means that accounting for irrelevant features makes assessing robustness more computationally expensive than needed.
However, as $f$ is considered to be a black-box, we cautiously assume that all features are relevant for assessing robustness.
}

\rev{We will proceed by accounting for perturbations and respective robustness separately for features in $\mathcal{C}$ and $\mathcal{K}$}. 
\rev{Accounting for robustness separately is important because, as we will show, assessing robustness for features in $\mathcal{C}$ can be done far more efficiently and be more effective than for features in $\mathcal{K}$.
Knowing this, if multiple counterfactual explanations can be found, the user may want to choose the counterfactual explanation that fits him/her best based on the robustness it exhibits in terms of $\mathcal{C}$ and $\mathcal{K}$.
}
In the next section, we present our first notion of robustness, which concerns $\mathcal{C}$.

\section{Robustness for $\mathcal{C}$}
\label{sec:c-robust-cfe}
We begin by focusing on the features that the counterfactual explanation instructs to change, i.e., the features (whose indices are) in $\mathcal{C}$.
\rev{Recall that we assume that a vector of maximal perturbation magnitudes $\mathbf{p}$ can be defined. 
This leads us to the following definition.}

\begin{definition}
\label{def:c-perturbation}
\rev{\emph{($\mathcal{C}$-perturbation)}
Given a point $\mathbf{x}$, a respective counterfactual example $\mathbf{z}$, and the vector of maximal magnitude perturbations $\mathbf{p}$, a \emph{$\mathcal{C}$-perturbation} for the counterfactual explanation $\mathbf{z}- \mathbf{x}$ is a vector
\begin{align}
    & \mathbf{w}^c = (w^c_1, \dots, w^c_d), \text{\ where\ \ } \\
& 
\begin{cases}
    p^{\{-\}}_{i} \leq w^c_i \leq p^{\{+\}}_{i}   & \text{if } i \in \mathcal{C}, \\
    w^c_i = 0  & \text{otherwise, i.e., }i \notin \mathcal{C}.\\
\end{cases}
\end{align}
and such that $\exists i : w^c_i > 0$, i.e., $\mathbf{w}^c$ is not the zero-vector. 
}
\end{definition}
\rev{In other words, a $\mathcal{C}$-perturbation is a perturbation that acts only on features in $\mathcal{C}$, and at least on one of such features.
Next, we use the concept of $\mathcal{C}$-perturbations to introduce the one of $\mathcal{C}$-setbacks.

\begin{definition}
\label{def:c-setback}
\rev{\emph{($\mathcal{C}$-setback)}
A \emph{$\mathcal{C}$-setback} for the counterfactual explanation $\mathbf{z}- \mathbf{x}$ is a $\mathcal{C}$-perturbation such that
\begin{equation}
\begin{cases}
    p^{\{-\}}_{i} \leq w^c_i \leq 0  & \text{if } z_i - x_i > 0\\
    0 \leq w^c_i \leq p^{\{+\}}_{i}  & \text{if } z_i - x_i < 0\\
    w^c_i = 0  & \text{otherwise, i.e., }i \notin \mathcal{C}.\\
\end{cases}
\end{equation}
}
We denote $\mathcal{C}$-setbacks with $\mathbf{w}^{c,s}$.
\end{definition}
}
\rev{In words, a $\mathcal{C}$-setback is a $\mathcal{C}$-perturbation where each and every element of the perturbation $w^{c,s}_i$ is of opposite sign to the counterfactual explanation $z_i - x_i$.
We can interpret the meaning of $\mathcal{C}$-setbacks $\mathbf{w}^{c,s}$ as vectors that \emph{push the user away from $\mathbf{z}$ and back towards $\mathbf{x}$} along the direction of intervention.
Furthermore, we call a \emph{maximal $\mathcal{C}$-setback}, denoted by $\mathbf{w}^{c,s}_\text{max}$, $\mathcal{C}$-setback whose elements that correspond to features in $\mathcal{C}$ have maximal magnitude, i.e., $w^{c,s}_i = p^{-}_{i}$ if $z_i - x_i > 0$ and  $w^{c,s}_i = p^{+}_{i}$ if $z_i - x_i < 0$.
An example is given in \Cref{fig:robustness-c}.
}

\rev{
\begin{figure}
    \centering
    \includegraphics[width=.5\linewidth]{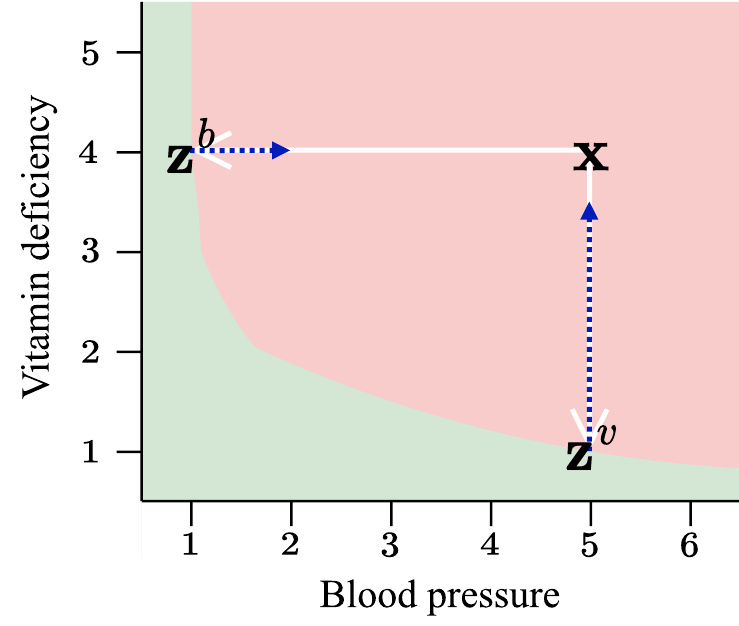} 
    \caption{\rev{Example of $\mathcal{C}$-setbacks.
    The {\color{red}red} and {\color{mygreen}green} areas represent \emph{high risk} and \emph{low risk} classifications of a cardiac condition according to a model $f$.
    The patient, represented by $\mathbf{x}$, is at high risk.
    Two treatments are possible but cannot be administered jointly due to drug incompatibility, hence sparsity of intervention is needed (only one of the two treatments can be pursued).
    The closest (and thus optimal) counterfactual example is $\mathbf{z}^v$ and concerns treating vitamin deficiency (white arrow pointing down).
    Another counterfactual example is $\mathbf{z}^b$ and concerns treating blood pressure (white arrow pointing left).
    Maximal $\mathcal{C}$-setbacks are shown for both counterfactual examples (blue dashed segments).
    The setbacks can make both counterfactual examples invalid, however further intervention (treatment administration) allows to reach them. 
    If maximal $\mathcal{C}$-setbacks are accounted for, $\mathbf{z}^b$ becomes more costly to pursue than $\mathbf{z}^v$ (cost of $4+1$ vs.~$3+2.5$, respectively).
    }
    }
    \label{fig:robustness-c}
\end{figure}
}

\begin{wrapfigure}{r}{0.5\textwidth}
    \centering
    \includegraphics[width=\linewidth]{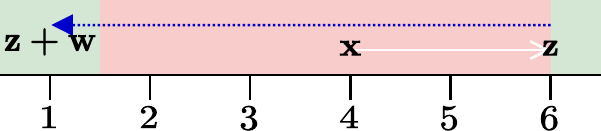}
    \caption{\rev{One-dimensional example of how a $\mathcal{C}$-setback can be advantageous if the magnitude of perturbation exceeds the magnitude of intervention.
    }}
    \label{fig:example-degenerate-setback}
\end{wrapfigure}
$\mathcal{C}$-setbacks are arguably more interesting than $\mathcal{C}$-perturbations because $\mathcal{C}$-setbacks are the subset of these perturbations that plays against the user.
In fact, certain $\mathcal{C}$-perturbations might be advantageous, enabling to reach $\mathbf{z}$ with less intervention than originally provisioned (i.e., when the sign of $w^c_i$ and that of $z_i - x_i$ matches).
To account for robustness, we are interested in understanding whether perturbations can prevent us to reach $\mathbf{z}$, hence we will proceed by focusing exclusively on $\mathcal{C}$-setbacks.

It is important to note that even $\mathcal{C}$-setbacks can be advantageous if one allows their perturbations to be of larger magnitude than intervention, i.e., if $|w^{c,s}_i| > |z_i - x_i|$ is allowed \rev{(\cite{fokkema2022attribution} discuss this aspect in detail)}. 
In a nutshell, if $|w^{c,s}_i| > |z_i - x_i|$, then a $\mathcal{C}$-setback can lead to a point that ``precedes'' $\mathbf{x}$ in terms of the direction of intervention. 
For that point, the intervention may be less costly than the one that was originally planned or entirely not needed because the point is of the target class (see, e.g., \Cref{fig:example-degenerate-setback}).
Advantageous situations are not interesting for robustness and counterfactual explanations that can be overturned by perturbations may well not be interesting to pursue.
\rev{We therefore consider any $\mathcal{C}$-setback to have elements capped by $|w^{c,s}_i| \leq |z_i - x_i|$.}

Perhaps the most interesting scenario for considering $\mathcal{C}$-setbacks is when dealing with $\mathbf{z}^\star$, since a counterfactual example that is optimal \rev{(i.e., one minimizes \Cref{eq:traditional})} is an ideal outcome.
The following simple result holds for $\mathbf{z}^\star$:
\begin{proposition}
\label{prop:optimal-c}
For any $\mathcal{C}$-setback $\mathbf{w}^{c,s}$ of $\mathbf{z}^\star$ \rev{(such that $|w^{c,s}_i| \leq |z^\star_i - x_i|$ for all $i$)}, $f(\mathbf{z}^\star + \mathbf{w}^{c,s}) \neq t$.
\end{proposition}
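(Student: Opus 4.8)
The plan is to argue by contradiction, leveraging the optimality of $\mathbf{z}^\star$. Suppose for contradiction that $f(\mathbf{z}^\star + \mathbf{w}^{c,s}) = t$, and set $\mathbf{z}' := \mathbf{z}^\star + \mathbf{w}^{c,s}$. I would show that $\mathbf{z}'$ is a \emph{feasible} counterfactual example for $\mathbf{x}$ whose cost $\delta(\mathbf{z}', \mathbf{x})$ is \emph{strictly smaller} than $\delta(\mathbf{z}^\star, \mathbf{x})$, which directly contradicts $\mathbf{z}^\star \in \text{argmin}$ in \Cref{eq:traditional}. Hence the assumption fails and $f(\mathbf{z}^\star + \mathbf{w}^{c,s}) \neq t$.

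First I would establish the geometric fact that $\mathbf{z}'$ lies coordinate-wise between $\mathbf{x}$ and $\mathbf{z}^\star$. For $i \notin \mathcal{C}$, \Cref{def:c-setback} forces $w^{c,s}_i = 0$, so $z'_i = z^\star_i = x_i$. For $i \in \mathcal{C}$, the setback entry $w^{c,s}_i$ has sign opposite to $z^\star_i - x_i$, and the cap $|w^{c,s}_i| \leq |z^\star_i - x_i|$ keeps $z'_i$ on the same side of $x_i$ as $z^\star_i$. Thus for every $i$, $z'_i - x_i$ shares the sign of $z^\star_i - x_i$ (or is zero), and $|z'_i - x_i| = |z^\star_i - x_i| - |w^{c,s}_i|$.

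From this I would derive the two cost inequalities. Summing the identity above gives $\| \mathbf{z}' - \mathbf{x} \|_1 \leq \| \mathbf{z}^\star - \mathbf{x} \|_1$; because $\mathbf{w}^{c,s}$ is nonzero (as required of a $\mathcal{C}$-perturbation), there is at least one index with $|w^{c,s}_i| > 0$, so this inequality is \emph{strict}. For the $L0$ term, both $\mathbf{z}'$ and $\mathbf{z}^\star$ can differ from $\mathbf{x}$ only on $\mathcal{C}$, and shrinking toward $\mathbf{x}$ can only turn some coordinates into exact matches with $x_i$, so $\| \mathbf{z}' - \mathbf{x} \|_0 \leq \| \mathbf{z}^\star - \mathbf{x} \|_0$. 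Since $\lambda \geq 0$, combining the strict $L1$ inequality with the weak $L0$ inequality yields $\delta(\mathbf{z}', \mathbf{x}) < \delta(\mathbf{z}^\star, \mathbf{x})$.

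The remaining and most delicate step is to confirm feasibility, i.e., $\mathbf{z}' - \mathbf{x} \in \mathcal{P}$. Here I would invoke that $\mathcal{P}$ is specified per feature as a sign/mutability restriction (whether $z_i - x_i$ may be $>0$, $<0$, or $\neq 0$). Since $z'_i - x_i$ has the same sign as $z^\star_i - x_i$ or is zero, and $\mathbf{z}^\star - \mathbf{x} \in \mathcal{P}$, reducing the magnitude of an allowed change (possibly to no change) never flips a coordinate into a disallowed direction, so $\mathbf{z}' - \mathbf{x} \in \mathcal{P}$. Together with the assumed $f(\mathbf{z}') = t$, this makes $\mathbf{z}'$ feasible yet strictly cheaper than $\mathbf{z}^\star$, completing the contradiction. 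I expect this feasibility check to be the main subtlety: it relies on $\mathcal{P}$ being closed under coordinate-wise shrinkage toward $\mathbf{x}$, which holds for the sign-based constraints described but would need revisiting for richer constraint families. The other point to handle with care is the strictness of the $L1$ decrease, since it is exactly what rules out the degenerate tie in which $\mathbf{z}'$ merely matches the cost of $\mathbf{z}^\star$.
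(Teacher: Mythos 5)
Your proposal is correct and follows essentially the same route as the paper's own proof: a reduction ad absurdum in which $\mathbf{z}' := \mathbf{z}^\star + \mathbf{w}^{c,s}$ is shown to be a feasible counterfactual example strictly closer to $\mathbf{x}$ under $\delta$, contradicting the optimality of $\mathbf{z}^\star$. The paper compresses the coordinate-wise monotonicity, the $L0$/$L1$ comparison, and the check that $\mathbf{z}' - \mathbf{x} \in \mathcal{P}$ into the single phrase ``by construction of $\mathbf{w}^{c,s}$,'' whereas you spell these out explicitly; your added feasibility argument via closure of the sign-based constraints under shrinkage toward $\mathbf{x}$ is a detail the paper leaves implicit but relies on.
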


\begin{proof}
We use \emph{reduction ad absurdum}. 
Let us assume the opposite of what was said in \Cref{prop:optimal-c}, i.e., there exists $\mathbf{w}^{c,s}$ such that $f(\mathbf{z}^\star + \mathbf{w}^{c,s}) = t$.
Let $\mathbf{z}^\prime := \mathbf{z}^\star + \mathbf{w}^{c,s}$, and so $f(\mathbf{z}^\prime) = t$.
By construction of $\mathbf{w}^{c,s}$, $\delta(\mathbf{z}^\prime, \mathbf{x}) = \delta(\mathbf{z}^\star + \mathbf{w}^{c,s}, \mathbf{x}) < \delta(\mathbf{z}^\star, \mathbf{x})$.
In other words, $\mathbf{z}^\prime$ is of the target class and is closer to $\mathbf{x}$ than $\mathbf{z}^\star$ is.
This contradicts the fact that $\mathbf{z}^\star$ is optimal.
\end{proof}

Now, because of~\Cref{prop:optimal-c}, we are \emph{guaranteed} that if a $\mathcal{C}$-setback $\mathbf{w}^{c,s}$ happens to $\mathbf{z}^\star$, the resulting point will no longer be classified as $t$.
\rev{Intuitively, this is a natural consequence of the fact that optimal counterfactual examples lay on the border of the decision boundary as otherwise they would not be optimal.
Also, since $\mathbf{z}^\star$ is optimal, the respective $L0$ component for the distance between $\mathbf{x}$ and $\mathbf{z}^\star$ is minimal, i.e., all features in $\mathcal{C}$ and thus in $\mathbf{w}^{c,s}$ are relevant for the classification.
Given the premises just made, it becomes important to understand whether invalidation to $\mathbf{z}^\star$ can be averted with further intervention and, if so, whether the cost of such intervention can be minimized.
}

\rev{
We proceed by noting that, importantly, invalidation of a counterfactual explanation by a $\mathcal{C}$-setback can always be averted, i.e., further intervention to reach the intended $z_i$ for all $i \in \mathcal{C}$ is always  possible.
To see this, consider the fact that the intervention entailed by the counterfactual explanation $\mathbf{z}-\mathbf{x}$ must adhere to the plausibility constraints specified in $\mathcal{P}$ (else, $\mathbf{z}-\mathbf{x}$ would not be a possible counterfactual explanation).
Since $\mathcal{C}$-setbacks are aligned with the direction of the original intervention, the point $\mathbf{z}+\mathbf{w}^{c,s}-\mathbf{x}$, which is in between $\mathbf{x}$ and $\mathbf{z}$, must meet $\mathcal{P}$. 
It therefore suffice to apply further intervention along the originally-intended direction to recover the desired counterfactual example.
Under the $L1$-norm (as per the choice of $\delta$ in \Cref{eq:traditional}), the cost associated with the additional intervention needed to overcome a $\mathcal{C}$-setback $\mathbf{w}^{c,s}$ is simply $||\mathbf{w}^{c,s}||_1$.

Finally, under the reasonable assumption that the user keeps track of how the value of $x_i$ changes for $i\in \mathcal{C}$ over the course of intervention (otherwise, (s)he would not know when to stop and realize the counterfactual explanation), we can use \Cref{prop:optimal-c} in order to seek counterfactual examples that are optimal (i.e., require minimum intervention cost) under maximal $\mathcal{C}$-setbacks $\mathbf{w}^{c,s}_\text{max}$.
In the following definition, to highlight that  $\mathcal{C}$-setbacks depend on the specific $\mathbf{z}$ and $\mathbf{x}$ (as they determine $\mathcal{C}$) and avoid confusion, we use the function notation $W^{c,s}_\text{max}(\mathbf{z},\mathbf{x})$ in place of $\mathbf{w}^{c,s}_\text{max}$.
}

\begin{definition}\label{def:c-robust-cfe}
\emph{(Optimal counterfactual example under $\mathcal{C}$-setbacks)} 
Given a model $f$, a point $\mathbf{x}$, and a vector $\mathbf{p}$, 
\rev{we call a point $\mathbf{z}^{\star,c}$ such that
\begin{equation}
\label{eq:opt-with-worst-case-c}
\mathbf{z}^{\star,c} - W^{c,s}_\text{max}(\mathbf{z}^{\star,c},\mathbf{x}) \in \text{argmin}_{\left(\mathbf{z}-W^{c,s}_\text{max}(\mathbf{z}, \mathbf{x})\right)} \delta \left(  \mathbf{z} - W^{c,s}_\text{max}(\mathbf{z}, \mathbf{x}), \mathbf{x} \right),
\end{equation}
\emph{an optimal counterfactual example under $\mathcal{C}$-setbacks}.
}
\end{definition}

This definition gives us a way to \rev{seek a (multiple may exist) counterfactual explanation that entails minimal intervention cost when accounting for maximal $\mathcal{C}$-setbacks.}
\rev{Indeed, it suffices to equip a given search algorithm with \Cref{eq:opt-with-worst-case-c}, i.e., perform the following steps: (1) for any $\mathbf{z}$ to be evaluated, compute the respective $\mathbf{w}^{c,s}_\text{max}$, (2) instead of computing $\delta(\mathbf{z}, \mathbf{x})$, compute $\delta((\mathbf{z}-\mathbf{w}^{c,s}_\text{max}), \mathbf{x})$, and (3) at the end of the search, return the point that minimizes such distance, i.e., $\mathbf{z}^{\star,c}$.}

\rev{
Performing the computations just mentioned takes linear time in the number of features ($O(d)$) because we only need to build $\mathbf{w}^{c,s}_\text{max}$ (step 1 above) and subtract it from $\mathbf{z}$ prior to computing $\delta$ (step 2 above) for any given $\mathbf{z}$ ($f$ should still be evaluated on $\mathbf{z}$).
This is relatively fast (as demonstrated in \ref{sec:apdx-additional-runtime-robustness}), especially compared to the situation described in \Cref{sec:perturbations}, where one would need to use $f$ to predict the class of a number of neighbors of $\mathbf{z}$.
}
Note \rev{also} that in \Cref{eq:opt-with-worst-case-c} setbacks are subtracted from counterfactual examples when computing $\delta$, to account for the fact that the cost should increase (recall the construction of $\mathcal{C}$-setbacks in Def.~\ref{def:c-setback}).

\section{Robustness for $\mathcal{K}$}
\label{sec:k-robust-cfe}

We now consider $\mathcal{K}$, i.e., the set concerning the features that should be kept to their current value.
\rev{Mirroring the notion of $\mathcal{C}$-perturbation (\Cref{def:c-perturbation}), we can define a $\mathcal{K}$-perturbation to be a vector $\mathbf{w}^k$ such that $p^{\{-\}}_i \leq w^k_i \leq p^{\{+\}}_i$ if $i\in \mathcal{K}$ and $w^k_i = 0$ if $i \notin \mathcal{K}$.}
\rev{
Similarly, we can cast the concept of neighborhood from \Cref{def:p-neighborhood} to consider only $\mathcal{K}$-perturbations, leading to:
}

\begin{definition}
\label{def:k-neighborhood} 
\emph{($\mathcal{K}$-neighborhood and $\mathcal{K}$-neighbors of a counterfactual example)} Given a model $f$, a point $\mathbf{x}$, a respective counterfactual example $\mathbf{z}$, and a vector of possible perturbations $\mathbf{p}$, the $\mathcal{K}$-neighborhood of $\mathbf{z}$ under $\mathbf{p}$ is the set: 
\begin{equation}\label{eq:k-neighborhood}
  K := \left\{ \mathbf{z}^\prime \mid
  \begin{array}{@{}l@{}}
    z^\prime_i \in [z_i + p^{\{-\}}_i, z_i + p^{\{+\}}_i], \text{ if } i \in \mathcal{K}
    \\
    z^\prime_i  = z_i \text{ otherwise }
  \end{array} \, 
  \right\}.
\end{equation}
A point $\mathbf{z}^\prime \in K \text{ such that } \mathbf{z}^\prime \neq \mathbf{z}$ is called a $\mathcal{K}$-neighbor of $\mathbf{z}$.
\end{definition}

For a categorical feature $i \in \mathcal{K}$, the neighborhood can be built by swapping $z_i$ with one of the possibilities opportunely listed in $p_i \in \mathbf{p}$, where $p_i$ will be a set containing categories perturbations can lead to.

Next, we use $K$ to define the concept of \rev{vulnerability to  $\mathcal{K}$-perturbations}:
\begin{definition}\label{def:k-robust-cfe}
\emph{(\rev{Vulnerability to  $\mathcal{K}$-perturbations})}
Given a model $f$, a point $\mathbf{x}$, and a vector $\mathbf{p}$, a counterfactual example $\mathbf{z}$ is \rev{vulnerable to  $\mathcal{K}$-perturbations if} $\rev{\exists} \mathbf{z}^\prime \in N(\mathbf{z}, \mathbf{p})$ such that $f(\mathbf{z}^\prime) \neq f(\mathbf{z})$.
\end{definition}
Informally, this definition says that $\mathbf{z}$ is \rev{vulnerable to $\mathcal{K}$-perturbations} if the decision boundary surrounding $\mathbf{z}$ is \rev{not} sufficiently loose with respect to the features \rev{in $\mathcal{K}$}.
\rev{\Cref{fig:robustness-k} shows an example.}
\rev{The reason why vulnerability to $\mathcal{K}$-perturbations is particularly important is that, differently from the case of $\mathcal{C}$-perturbations, a $\mathcal{K}$-perturbation can invalidate the counterfactual explanation \emph{permanently}.
In fact, a $\mathcal{K}$-perturbation changes $\mathbf{z}$ along a different direction than the one of intervention.
Thus, a $\mathcal{K}$-perturbation can lead to a point $\mathbf{z}^\prime$ from which there exists no plausible intervention to reach the originally-intended $\mathbf{z}$ from.

For example, consider the feature $i$ to represent \emph{inflation} as a mutable but not actionable feature, i.e., a feature that can be changed (e.g., by global market trends) but not by the user.
$\mathcal{P}$ will state that no (user) intervention can exist to change $i$, i.e., $\mathcal{P}$ imposes $z_i - x_i = 0$.
However, an unfortunate circumstance such as a the financial crisis of 2008 may lead to a large inflation increase ($p^+_\textit{i} > 0$). 
Consequently, it may become impossible for the user to obtain the desired loan, e.g., because the bank does not hand out certain loans when the inflation is too high.
}

\begin{figure}
    \centering
    \includegraphics[width=.5\linewidth]{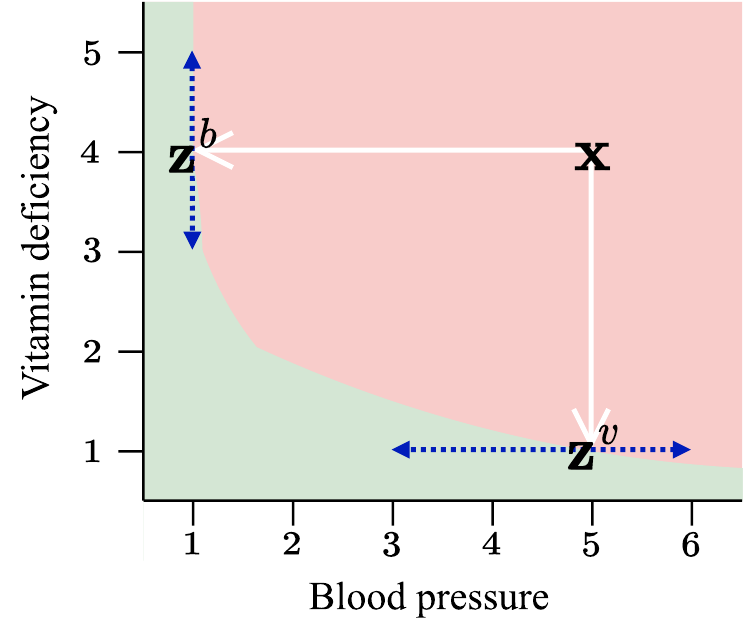} 
    \caption{\rev{Example of $\mathcal{K}$-perturbations.
    The counterfactual example $\mathbf{z}^v$ is vulnerable to $\mathcal{K}$-perturbations because these can lead to the red area; the same is not true for $\mathbf{z}^b$.
    If it is not plausible to reduce blood pressure, then $\mathcal{K}$-perturbations to $\mathbf{z}^v$ can lead to permanent invalidity.
    Else, they can be resolved with further intervention, in terms of blood pressure.
    }
    }
    \label{fig:robustness-k}
\end{figure}

\rev{Now,} recall that the reason why \Cref{def:c-robust-cfe} can be used for the case of $\mathcal{C}$-perturbations is that  \Cref{prop:optimal-c} holds, i.e., there cannot exist points of class $t$ between $\mathbf{x}$ and an optimal counterfactual example $\mathbf{z}^\star$.
\rev{The same does not hold for $\mathcal{K}$-perturbations, i.e., since the features in $\mathcal{K}$ are orthogonal to the direction of intervention, it can happen that the maximal perturbation to a feature $i \in \mathcal{K}$ leads to a point $\mathbf{z}^\prime$ for which $f(\mathbf{z}^\prime)=t$, while a non-maximal perturbation to the same feature can lead to a point $\mathbf{z}^{\prime\prime}$ for which $f(\mathbf{z}^{\prime\prime}) \neq t$.
Thus, checking for maximal perturbations is no longer sufficient: we must check instead for all points in the $\mathcal{K}$-neighborhood $K$.
}

\rev{As mentioned in \Cref{sec:perturbations}, checking each and every point in a neighborhood may not be feasible.}
Thus, we propose to approximate the assessment of how $\mathcal{K}$-robust (i.e., non-vulnerable to perturbations in $\mathcal{K}$) counterfactual explanations can be, with Monte-Carlo sampling.
Let $\mathbf{1}_{f(\mathbf{z})} : K \rightarrow \{0,1\}$ be the indicator function that returns $1$ for $\mathcal{K}$-neighbors that share the same class of $\mathbf{z}$ (i.e., $f(\mathbf{z})$), and $0$ for those that do not.
Taken a random sample of $m$ $\mathcal{K}$-neighbors, we define the following score:
\begin{equation}
\label{eq:practical-k-robust}
    \mathcal{K}\text{-robustness score}(\mathbf{z},m) = \frac{1}{m}
      \sum_{i=1}^{m} \mathbf{1}_{f(\mathbf{z})}(\mathbf{z}^\prime_i).
\end{equation}
\rev{We remark that even if} $\mathcal{K}\text{-robustness score}(\mathbf{z}, m) = 1$, we are not guaranteed that $\mathbf{z}$ is $\mathcal{K}\text{-robust}$\rev{, because the score is an approximation}.
Still, this score can be used to determine which counterfactual examples are preferable to pursue in that they are associated with a smaller risk that adverse perturbations will \rev{invalidate them (permanently or not)}.

\section{Experimental setup}
\label{sec:experimental-setup}
In this section, we \rev{firstly describe the preparation of the data sets used in our experiments. 
Secondly, we describe the search algorithms considered for finding near-optimal counterfactual explanations. 
Lastly, we describe the loss function considered, as well as how to incorporate the proposed notions of robustness into it.}

\subsection{Data sets} 
\label{sec:datasets}
\Cref{tab:datasets} summarizes the data sets we consider.
For each data set, we make an assumption on the type of user who seeks recourse, e.g., the user could be a private individual seeking to increase their income, or a company seeking to improve the productivity of its employees. 
\rev{Based on this, we manually define} the target class $t$\rev{,} the set of plausibility constraints $\mathcal{P}$ on what interventions are reasonably plausible\rev{, and the collection $\mathbf{p}$ of maximal magnitudes from which perturbations can be sampled (we will consider uniform and normal distributions)}.
We named the data sets in \Cref{tab:datasets} to represent their purpose.
Originally, \emph{Credit risk} (abbreviated to Cre) is known as \emph{South German Credit Data}~\cite{groemping2019south}, which is a recent update that corrects inconsistencies in the popular \emph{Statlog German Credit Data}~\cite{hofmann1994statlog}.
\emph{Income} (Inc) is often called \emph{Adult} or 
\emph{Census income}~\cite{kohavi1996adult,kohavi1996scaling}.
\emph{Housing price} (Hou) is also known as \emph{Boston housing}~\cite{harrison1978hedonic}
and is often used for research on fairness and interpretability because one of its features raises ethical concerns~\cite{carlisle2019racist}.
\emph{Productivity} (Pro) concerns the productivity levels of employees producing garments~\cite{imran2021mining}.
Lastly, \emph{Recidivism} (Rec) is a data set collected by an investigation of ProPublica about possible racial bias in the commercial software \emph{COMPAS}, which intends to estimate the risk that an inmate will re-offend~\cite{larson2016how}.
Examples of recent works in fair and explainable machine learning that adopted (some of) these data sets (each) are~\cite{guidotti2018local,kearns2018preventing,laugel2019dangers,ding2021retiring,lacava2020genetic,virgolin2021mlpie,dominguez2021adversarial}.

We pre-process the data sets similarly to how done often in the literature. 
This includes, e.g., removal of redundant features and of observations with missing values, and limiting the number of observations considered for Rec.
\rev{Regarding our annotations for the perturbations, } numerical features can have perturbations that increase or decrease the feature value, in absolute or relative terms; we compute relative perturbations with respect to $\mathbf{z}$.
For example, for the numerical feature \emph{capital-gain} of Inc, we assume that perturbations can happen that lead up to a relative $5\%$ increase or $10\%$ decrease of that feature, based on the value to achieve for that feature.
For categorical features, we define only absolute perturbations, i.e., possible changes of category are not conditioned to the current category.
The choices we made to build $\mathbf{p}$ are subjective, we elaborate on this in \Cref{sec:discussion}.
We sample the amount of perturbation using a uniform or normal distribution, as indicated in \rev{\Cref{sec:results-robustness}}.
\rev{\Cref{tab:examples-perturbations} shows some examples of maximal perturbations we annotated.}
As mentioned before, we also define plausibility constraints $\mathcal{P}$ for each data set.
Each constraint is specific to a feature.
For an $i^\textit{th}$ numerical feature, possible constraints are $z_i - x_i \geq 0$, $z_i - x_i \leq 0$, $z_i - x_i = 0$, and \emph{none}. 
For an $i^\textit{th}$ categorical feature, possible constraints are $z_i = x_i$ and \emph{none}.
Full details about our pre-processing and definition of $\mathbf{p}$ and $\mathcal{P}$ are documented in the form of comments in our code, in \texttt{robust\_cfe/dataproc.py}.

\begin{sidewaystable}[]
    \centering
    \caption{
        Considered data sets, where $n$ and $d$ (resp., $d_2$) indicate the number of observations and features (only categorical) after pre-processing. The column $t$ is the target class for the (simulated) user.
        Plausib.~constr.~reports the number of plausibility constraints that allow features to only increase ($\geq$), remain equal ($=$), and decrease ($\leq$).
        The column Perturb.~reports the number of perturbations concerning numerical (N) and categorical (C) features.
        Finally, $f$'s acc.~reports the average (across five folds) test accuracy of random forest.
    }
    \begin{tabular}[t]{l c c c c c c c l c}
        \toprule
        Data set (abbrev.) & {$n$} & {$d$} & {$d_2$} & Classes & User & {$t$} & {Plausib.~constr.} & {Perturb.} & {$f$'s acc.} \\
        \midrule
        Credit risk (Cre) & 1000 & 20 & 6 & \small High, low & \small Individual & \small Low & $\geq$:3, $=$:8, $\leq$:0 & N:6, C:0 &
        0.76\\
        \small Income (Inc) & 1883 & 12 & 7 & \small High, low & \small Individual & \small High & $\geq$:2, $=$:3, $\leq$:0 & N:4, C:4 & 0.83\\
        \small House price (Hou) & 506 & 13 & 1 & \small High, low & \small Municipality & \small Low & $\geq$:0, $=$:3, $\leq$:1 & N:11, C:0 & 0.93\\
        \small Productivity (Pro) & 1196 & 12 & 5 & \small High, med., low & \small Company & \small High & $\geq$:0, $=$:0, $\leq$:0 & N:5, C:2 & 0.79\\
        \small Recidivism risk (Rec) & 2000 & 10 & 6 & \small High, low & \small Inmate & \small Low & $\geq$:2, $=$:2, $\leq$:0 & N:3, C:2 & 0.80\\
        \bottomrule
    \end{tabular}
    \label{tab:datasets}
\end{sidewaystable}

\begin{table}[]
    \centering
    \rev{
    \resizebox{\columnwidth}{!}{%
    \begin{tabular}{lcccl}
        \toprule
        \multirow{2}{*}{D.set} & \multirow{2}{*}{Feature} & Decrease & Increase & \multirow{2}{*}{Note}\\
        & & \multicolumn{2}{c}{or Categories}\\
        \midrule
        Cre & 
        \emph{Savings} & $10\%$ & $10\%$  & \emph{Might happen to save less or more relative to what intended.} \\
        Inc & 
        \emph{Marital status} & \multicolumn{2}{c}{\{ single, married, widowed, \dots  \}} & \emph{Unforeseen change due to, e.g., proposal, divorce, death.} \\
        Hou & \emph{Crime rate} & $1\%$ & $5\%$ & \emph{Relative, might increase more than decrease.}\\
        Pro & \emph{Overtime} & 3 & 3 & \emph{Up to 3 more or less days of overtime might be needed.} \\
        Rec & \emph{Age} & 0 & 2 & \emph{Judicial system delays for up to 2 years.}\\
        \bottomrule
    \end{tabular}
    }
    \caption{Examples of perturbations that we manually annotated on the considered data sets.
    We take relative perturbations (those with $\%$) with respect to the value of the feature in the intended counterfactual example $\mathbf{z}$ in consideration by the search algorithm.}
    }
    \label{tab:examples-perturbations}
\end{table}

\rev{
\subsection{Black-box models}
}
We consider random forest \rev{and neural networks (with standard multi-layer perceptron architecture)} as black-box machine learning models $f$.
We use Scikit-learn’s implementations~\cite{pedregosa2011scikit}.
We \rev{assume that we can only access the predictions of} $f$\rev{, and no other information such as model parameters or gradients}.
Our experiments are repeated across a stratified five-fold cross-validation, and each model is obtained by grid-search hyper-parameter tuning.
\rev{Once trained, the models obtain test accuracy varying from $70\%$ to more than $90\%$ on average across the different data sets, i.e., meaningful decision boundaries are learned. See \ref{sec:apdx-hyperparams-random-forest} for details on hyper-parameter tuning, and the accuracy of the models on the different data sets.}
For the discovery of counterfactual examples, we consider observations $\mathbf{x}$ such that $f(\mathbf{x}) \neq t$, from the test sets of the cross-validation.

\subsection{Counterfactual search algorithms}
\rev{To provide experimental results concerning robustness (\Cref{sec:results-robustness}), we firstly seek a counterfactual search algorithm that performs best overall among several candidates.
To that end, we consider and benchmark the following algorithms from the literature, that can operate upon black-box $f$:}
\rev{\emph{Diverse Counterfactual Explanations} (DiCE)~\cite{mothilal2020dice},}
\emph{Growing Spheres} (GrSp)~\cite{laugel2018comparison}, \emph{LOcal Rule-based Explanations} (LORE)~\cite{guidotti2018local,guidotti2019factual}, 
and the \emph{Nelder-Mead method} (NeMe)~\cite{nelder1965simplex,gao2012implementing}. 
\rev{Furthermore, we devise our own algorithm, a genetic algorithm that we name \emph{Counterfactual Genetic Search} (CoGS)\footnote{\url{https://github.com/marcovirgolin/cogs}}.

The settings used for the algorithms are reported in \Cref{tab:alg-settings}.
We describe the algorithms below.
Note that all of the algorithms are heuristics with no guarantee of discovering optimal (i.e., minimal distance) counterfactual examples, given the nature of the search problem (general, black-box $f$).
}

\begin{table}[h]
    \centering
    \caption{
        Settings of the considered counterfactual search algorithms. For NeMe, we only set the maximum number of iterations to \num{100} to achieve commensurate runtimes to those of CoGS (other settings are default).
        \rev{For DiCE, we consider two configurations (``a'' and ``b'')}.
        The loss used (except for DiCE a) is \Cref{eq:loss-function}.
    }
    \label{tab:alg-settings}
    \begin{minipage}[t]{.4\linewidth}
        \centering
        \scalebox{0.90}{
        \begin{tabular}[t]{l c}
            \toprule
            \multicolumn{2}{c}{CoGS}\\
            Setting & {Value}\\
            \midrule
            Population size & \num{1000}\\
            Num.~generations & \num{100}\\
            Tournament size & \num{2}\\
            $s_\textit{mut}$ & $25\%$\\
            \bottomrule
        \end{tabular}
        }
    \end{minipage}\hfill%
    \begin{minipage}[t]{.6\linewidth}
        \centering
        \scalebox{0.9}{
        \begin{tabular}[t]{l c}
            \toprule
            \multicolumn{2}{c}{\rev{DiCE (a, b)}}\\
            Setting & {Value}\\
            \midrule
            Method & Genetic \\
            Total CEs & a : $20$, b : $100$\\
            Max.~iterations & a : $500$, b : $100$ \\
            \multirow{2}{*}{\makecell[l]{Loss weights}} & a : Default, \\
            & b : \small{$0.5$ prox., $0.5$ spars., $0$ div.}\\
            \bottomrule
        \end{tabular}
        }
    \end{minipage}\hfill%
    \begin{minipage}[t]{\linewidth}
    \mbox{}
    \end{minipage}
    \hfill%
    \begin{minipage}[t]{.4\linewidth}
    \centering
    \scalebox{0.90}{
    \begin{tabular}[t]{l c}
        \toprule
        \multicolumn{2}{c}{GrSp}\\
        Setting & {Value}\\
        \midrule
        Num.~in~layer & \num{2000}\\
        First radius & \num{0.1}\\
        Decrease radius & \num{10}\\
        Sparse & True\\
        \bottomrule
    \end{tabular}
    }
    \end{minipage}\hfill%
    \begin{minipage}[t]{.6\linewidth}
        \centering
        \scalebox{0.90}{
        \begin{tabular}[t]{l c}
            \toprule
            \multicolumn{2}{c}{LORE}\\
            Setting & {Value}\\
            \midrule
            Population size & \num{1000}\\
            Num.~generations & \num{10}\\
            Discrete use probabilities & False \\
            Continuous function estim. & False \\
            \bottomrule
        \end{tabular}
        }
    \end{minipage}\hfill%
        
\end{table}


\rev{
\subsubsection{DiCE}
DiCE is actually a library that includes three algorithms: random sampling, KD-tree search (i.e., a fast-retrieval data structure built upon the points in the training set), and a genetic algorithm. 
Of the three, we consider the latter because it performed substantially better in preliminary experiments (and simply refer to it by DiCE). 
DiCE is configured to return a collection of counterfactual examples rather than a single one.
However, three of the other algorithms we consider return a single counterfactual example.
Thus, to compare the algorithms on an equal footing, we set DiCE to return a single counterfactual example too.
We achieve this by ranking each counterfactual example in the collection according to the loss function in consideration (explained below, see \Cref{sec:loss}), and picking the best-ranking point.
We will further consider two different configuration of DiCE: 
\begin{itemize}
    \item \emph{Configuration ``a''} uses the default settings except for allowing for a longer number of iteration, to match the same computational budget given to the other algorithms. 
    \item \emph{Configuration ``b''} uses custom settings that are aligned to be similar to those used for CoGS, since both DiCE and CoGS are genetic algorithms.
\end{itemize}
}

\subsubsection{GrSp}
GrSp is a greedy algorithm that iteratively samples neighbors of the starting point $\mathbf{x}$ within spheres (i.e., in an $L2$ sense) that have increasing radius, until counterfactual examples are found.
GrSp includes feature selection to promote sparsity.
Unfortunately, GrSp can only handle numerical features.
To be able to use GrSp in our comparison, we let GrSp operate on categorical features as if they were numerical ones (categories are encoded as integers). 
At the end of the optimization, we transform numerical values back to categories by rounding.
\rev{Note that this is sub-optimal because an artificial ordering is introduced between categories.}

\subsubsection{LORE} 
LORE works by generating a neighborhood around $\mathbf{x}$ with random search or with a genetic algorithm, finding multiple counterfactual explanations at different distance.
We consider the variant that adopts the genetic algorithm, because it performed substantially better in preliminary experiments.
After the neighborhood is determined, LORE fits a decision tree upon it.
Since each path from the root of the decision tree to a leaf represents a classification rule (e.g., ``\texttt{AGE >= 3.4 \& SALARY\_CATEGORY = HIGH} $\rightarrow$ \texttt{$t$}''), LORE essentially returns multiple counterfactual explanations expressed as rules.
To be able to compare with the other algorithms (which return a single counterfactual example), we build one counterfactual example $\mathbf{z}$ by taking the shortest rule returned by LORE, and \emph{applying} the rule to the starting point $\mathbf{x}$ (e.g., using the rule above, we set the $\mathbf{x}$'s \emph{age} and \emph{salary} to $3.4$ and $\textit{high}$, respectively).

We found (confirmed by a discussion with the authors) that applying LORE's rules may results in points that are not actually classified as $t$.
When that happens, we perform up to 15 attempts at generating a counterfactual example from the (shortest returned) rule, by focusing on numerical features that are prescribed to be $>$, $\geq$ (or $<$, $\leq$) than a certain value.
In particular, in applying such part of the rule to $\mathbf{x}$, we add (or subtract) to the prescribed value a term $\epsilon$, which is initially set to $10^{-3}$ and is doubled at every attempt.
Moreover, since we found LORE to be computationally expensive to run (see \Cref{fig:runtimes-methods}), we used a fraction of the computation budget allowed for the other algorithms (see \Cref{tab:alg-settings}).

\subsubsection{NeMe}
\rev{NeMe is a classic simplex-based algorithm for gradient-free optimization.}
Like GrSp, also NeMe cannot naturally handle categorical features.
Thus, we use the same approximation used for GrSp, i.e., encode categories with integers, let NeMe treat categories as numerical values, and map such values back to integers (and thus categories) by rounding at the end.
We use SciPy's implementation with default parameters~\cite{virtanen2020scipy}.

\rev{
\subsubsection{CoGS}
We design CoGS as a relatively standard genetic algorithm,} adapted for the search of points neighboring $\mathbf{x}$ (especially in terms of the $L0$-norm).
\rev{CoGS operates as follows.}
First, an initial \emph{population} of candidate solutions is generated by sampling feature values uniformly within an interval for numerical features, and from the possible categories for categorical features.
\rev{These intervals can be specified or taken automatically from the training set.}
With probability of $2/d$ ($d$ being the total number of features), the feature value of a candidate solution is \emph{copied} from $\mathbf{x}$ rather than sampled.
Every iteration of the algorithm (in the jargon of evolutionary computation, \emph{generation}), offspring solutions are produced from the current population by \emph{crossover} and \emph{mutation}.
\rev{Following this, \emph{survival of the fittest} is applied to form the population for the next generation.}

Our version of crossover produces two offspring solutions by simply swapping the feature values of two random parents, uniformly at random.
Our version of mutation produces one offspring solution from one parent solution by randomly altering its feature values.
A feature value is altered with probability of $1/d$ (else, it is left untouched). 
If the feature to alter is categorical, then the category is swapped with another category, uniformly at random.
If the feature to alter is numerical, firstly a random number $r$ is sampled uniformly at random between $-s_\textit{mut}/2$ and $+s_\textit{mut}/2$, where $s_\textit{mut} \in (0,1]$ is a hyper-parameter that represents the maximal extent of allowed mutations; 
secondly, the original feature value is changed by adding $r \times (\max_i - \min_i)$, where $\max_i$ and $\min_i$ are, respectively, the maximum and minimum values that are possible for that feature. 

\rev{After crossover and mutation,} the quality (\emph{fitness}) of offspring solutions is evaluated using the loss function (\Cref{eq:loss-function}) as fitness function (minimization is sought).
\rev{Finally, we use} tournament selection~\cite{miller1995genetic}) \rev{to form the population for the next generation.}

\rev{We set CoGS to allow for plausibility constraints ($\mathcal{P}$) to be specified.}
If plausibility constraints are used, then mutation is restricted to plausible changes (e.g., the feature that represents age can only increase). 
If mutation makes a numerical feature obtain a value bigger than $\max_i$ (resp., smaller than $\min_i$), then the value of that feature is set to $\max_i$ (resp., $\min_i$).

CoGS is written in Python and relies heavily on NumPy~\cite{harris2020array} for the sake the speed (e.g., the population is encoded as a matrix and crossover and mutation operate upon it with matrix operations).

\subsection{Loss}
\label{sec:loss}
We use the following loss to drive the search of counterfactual examples (where $f(\mathbf{z})$ and $t$ are treated as integers):
\begin{align}
\label{eq:loss-function}
    \frac{1}{2} \gamma(\mathbf{z}, \mathbf{x}) &+ \frac{1}{2} \frac{|| \mathbf{z} - \mathbf{x} ||_0}{d} + || f(\mathbf{z}) - t ||_0, \text{where } \\
    \gamma(\mathbf{z},\mathbf{x}) &= \frac{1}{d} \left( \sum_{i}^{d_1} \frac{ | z_i - x_i | }{\max_i - \min_i} + \sum_{j}^{d_2} || z_j - x_j ||_0 \right).
\end{align}
The function $\gamma$ in the equation above is Gower's distance~\cite{gower1971general,d2021distances}, where features indexed by $i$ are numerical and those indexed by $j$ are categorical (with values treated as integers); 
the maximal and minimal values of a numerical feature, $\max_i$ and $\min_i$, can be taken from the (training) data set or, as done in our case, are provided as extra annotations of the data sets.
The term $|| \mathbf{z} - \mathbf{x} ||_0/d$ promotes sparsity of intervention and, like Gower's distance, ranges from zero to one.
The third and last term requires the execution of the machine learning model $f$, and simply returns zero when $f(\mathbf{z}) = t$ and one when $f(\mathbf{z}) \neq t$.

\rev{
\subsubsection{Incorporating robustness in the loss}
}
\label{sec:incorp-rob}
\rev{To seek robust counterfactual examples, we make use of the notions described in \Cref{sec:c-robust-cfe} and \Cref{sec:k-robust-cfe}.}
When optimizing for \rev{robustness to perturbations concerning} $\mathcal{C}$, \rev{we use \Cref{def:c-robust-cfe}, i.e.,} maximal $\mathcal{C}$-setbacks are computed on the fly for the candidate $\mathbf{z}$ and their contribution is used to update the contribution of $\gamma$ to the loss function. 
When optimizing for \rev{robustness to perturbations concerning} $\mathcal{K}$, we compute the $\mathcal{K}$-robustness score with \Cref{eq:practical-k-robust} and add $\frac{1}{2}\left( 1 - \mathcal{K}\text{-robustness score} \right)$ to the loss.
In the results presented below, we use $m=64$ to compute the $\mathcal{K}$-robustness score; 
an analysis on the impact of $m$ is provided in \ref{sec:apdx-sensitivity-k-robustness}.

\section{\rev{Preliminary Results: Choosing a Suitable Counterfactual Search Algorithm}}
\label{sec:results-algorithms}

\rev{This section reports on the benchmarking of the considered search algorithms, to identify an overall best.}
We repeat the execution of each algorithm five times and consider the best-found counterfactual example out of the five repetitions.
We search for a counterfactual example for each $\mathbf{x}$ in the test sets from the five cross-validation, for $\mathbf{x}$ such that $f(\mathbf{x}) \neq t$.
Since LORE takes much longer to execute than the other algorithms (see \Cref{fig:runtimes-methods}), we perform three repetitions instead of five, and consider only the first five $\mathbf{x}$ in each test set of the five folds.
Since only \rev{DiCE and} CoGS support plausibility constraints, we do not use plausibility constraints in this comparison.

\subsection{Runtimes}
\label{sec:apdx-runtime-algs}
\Cref{fig:runtimes-methods} shows the runtime of the algorithms across the different data sets, irrespective of whether they succeed or fail to find a counterfactual example, i.e., a point for which $f$ predicts $t$.
The experiments were run on a cluster where the computing nodes can have slightly different CPUs, thus we invite to consider the order of magnitude of the runtimes rather than the exact numbers.
The figure shows that\rev{, using random forest,} CoGS \rev{and DiCE (configuration a) are} the fastest algorithms (or, \rev{at least, have fastest implementations}), but GrSp and NeMe are competitive.
LORE is much slower to execute than the other algorithms. 
\rev{When using a neural network, inference times are generally faster, and CoGS, DiCE-a, GrSp and NeMe are competitive.}

\begin{figure}[h]
   \centering
   \begin{tabular}{lcc}
   \rotatebox{90}{\rev{\hspace{1cm}$f=\text{random forest}$}} &
   \includegraphics[width=0.45\linewidth]{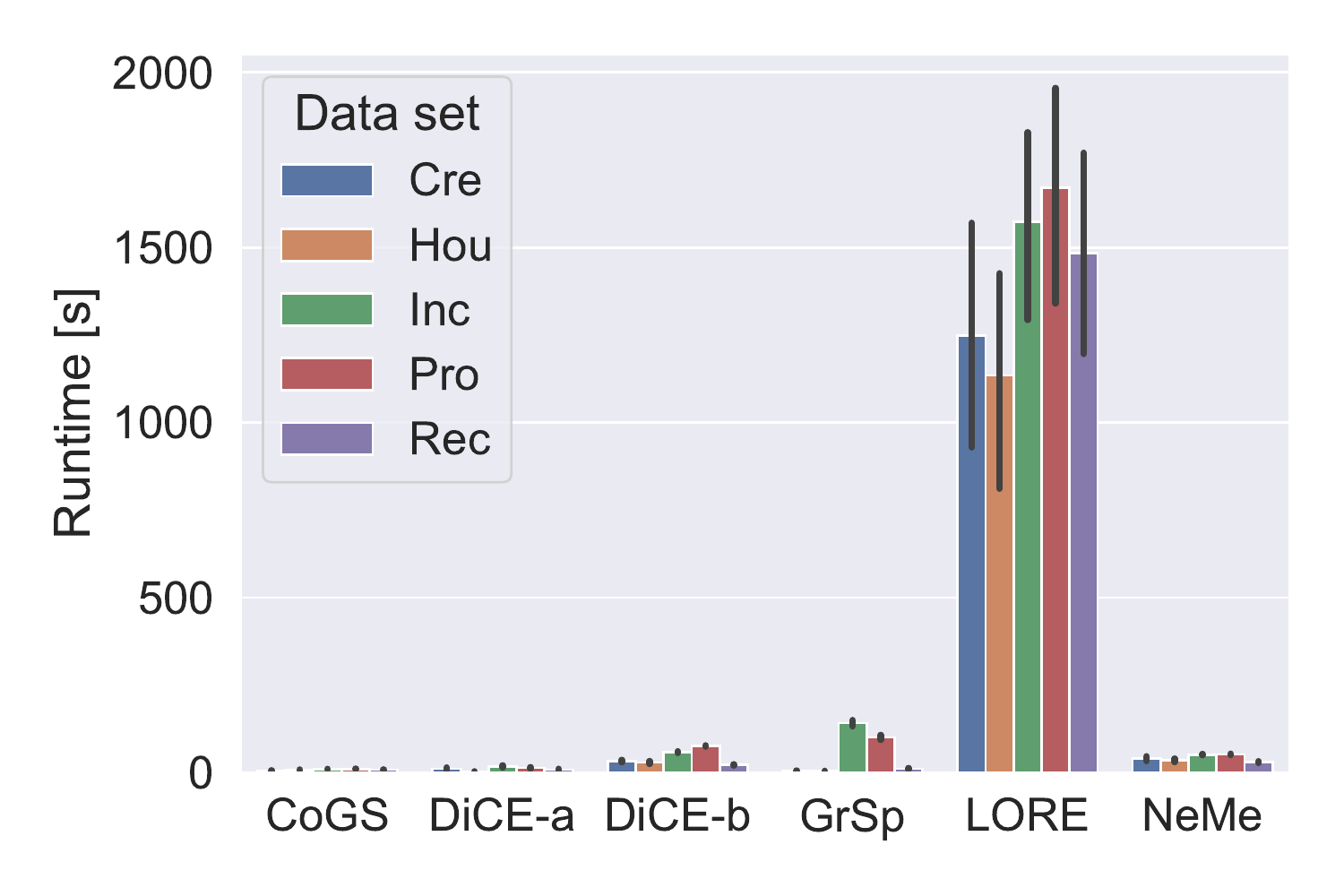}  &  \includegraphics[width=0.45\linewidth]{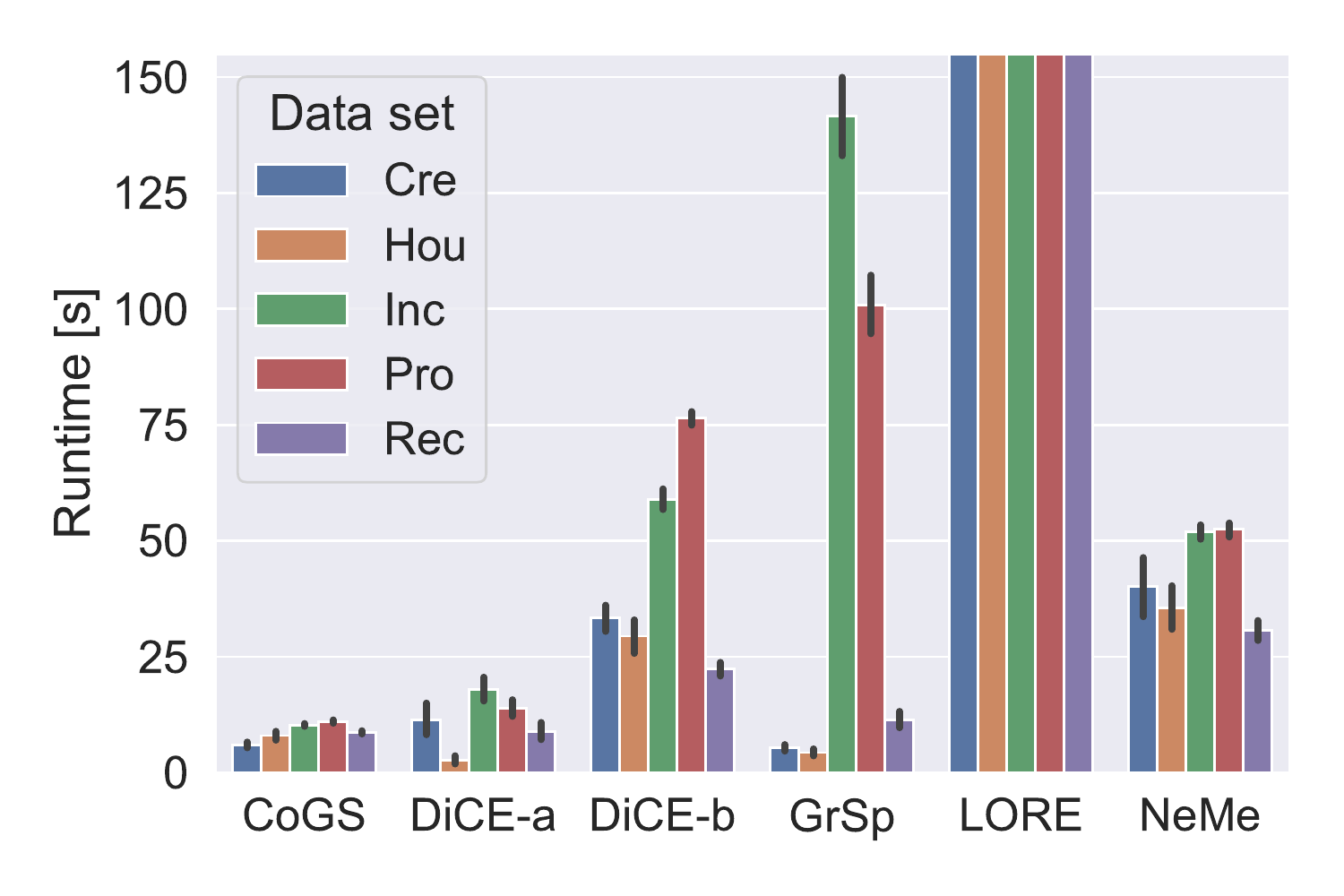}\\
   \rotatebox{90}{\rev{\hspace{1cm}$f=\text{neural network}$}} &
   \includegraphics[width=0.45\linewidth]{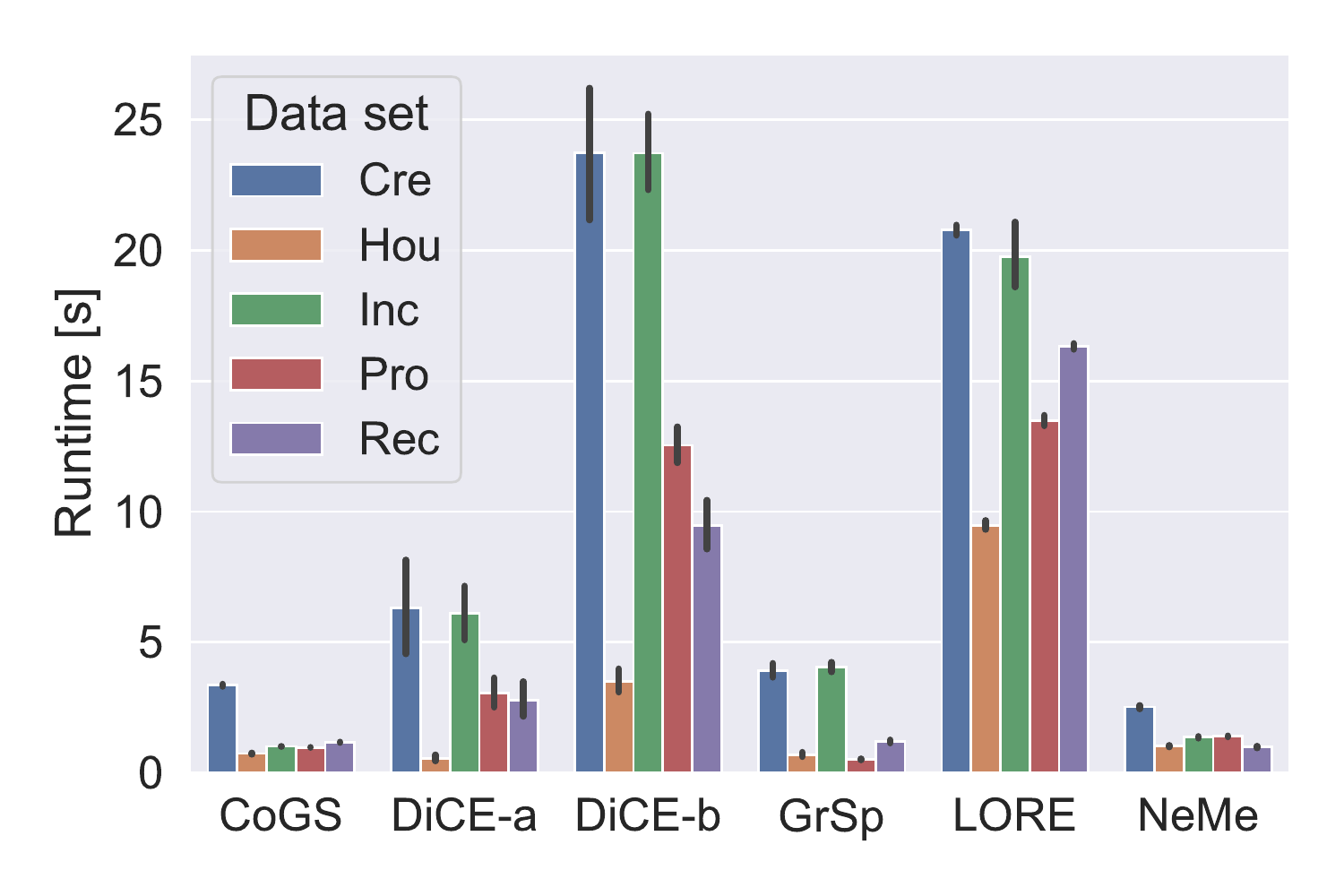}  &  \includegraphics[width=0.45\linewidth]{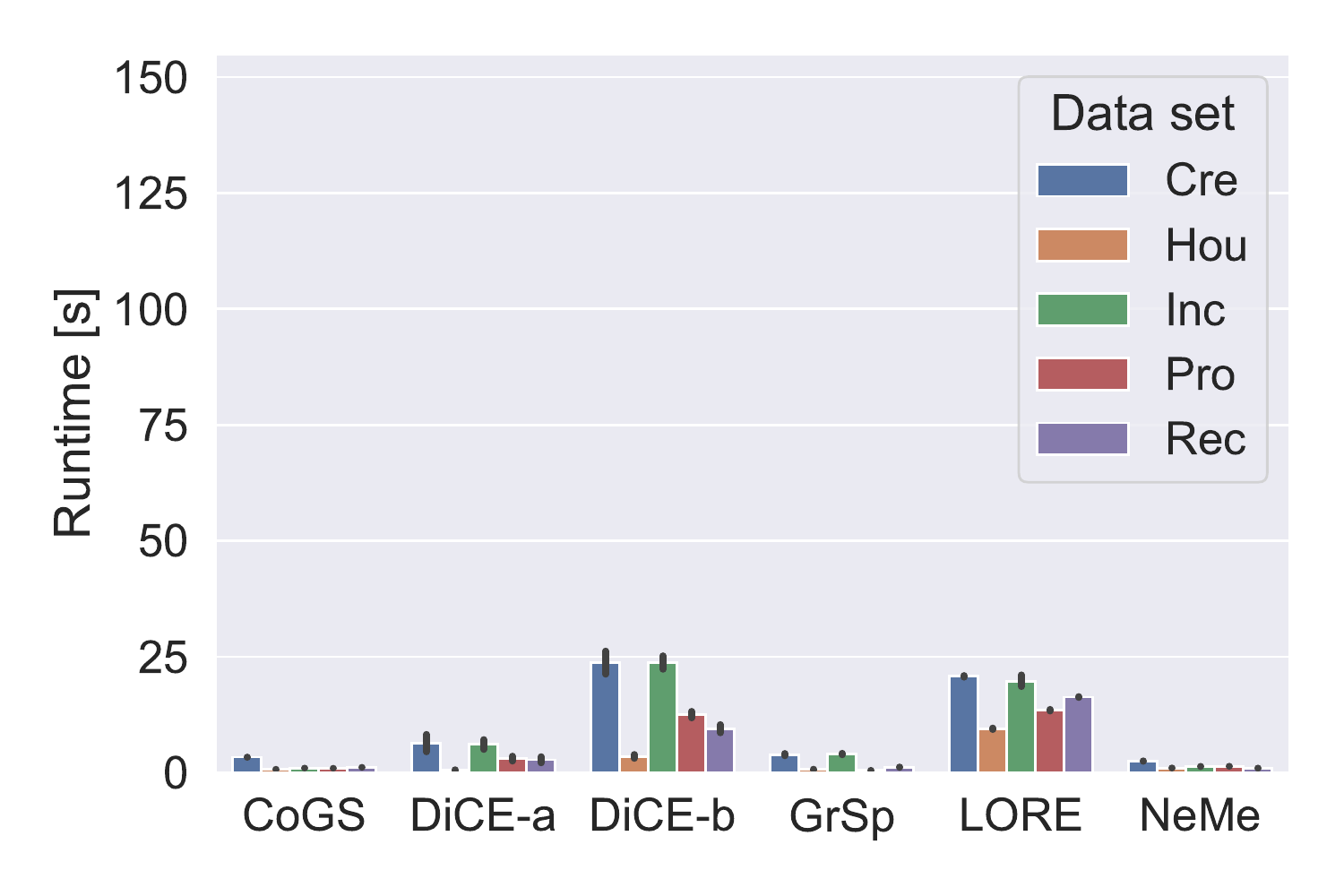}
   \end{tabular}
   \caption{Runtimes (means and 95\% confidence intervals) of the counterfactual search algorithms for the considered data sets \rev{and black-boxes (random forest and neural network)}. 
   The right plots are zoomed-in versions of the left ones.}
    \label{fig:runtimes-methods}
\end{figure}

\subsection{Success in discovering counterfactual examples}
\Cref{tab:success-rates} shows the frequency with which the counterfactual search algorithms succeed in finding a counterfactual example, i.e., a point for which $f$ predicts $t$.
CoGS \rev{and the two variants of DiCE} succeed systematically, whereas the other algorithms do not.
GrSp performs \rev{third}-best overall. 
In particular, GrSp always finds counterfactual examples on Hou, which is a data set with a single categorical feature. 
Since GrSp is intended to operate solely with numerical features, this results nicely supports the hypothesis that GrSp works well when (almost all) features are numerical.
Although LORE supports both numerical and categorical features, it does not perform better than GrSp on most data sets; 
at least for the limited number of runs conducted with LORE due to excessive runtime, as explained before. 
Lastly, NeMe often performs substantially worse than all other algorithms.

\begin{table}[h]
    \centering
    \caption{Mean $\pm$ standard deviation across five cross-validation folds of the frequency with which the counterfactual search algorithms succeed in finding a counterfactual example. 
    Plausibility constraints are not considered here because not all algorithms support them.}
    \sisetup{separate-uncertainty,scientific-notation=fixed,round-mode=places}
    
    \begin{tabular}{llccccc}
    \toprule
    & {Alg.} & {Cre} & {Inc} & {Hou} & {Pro} & {Rec} \\
    \midrule
    \multirow{6}{*}{\rotatebox{90}{\scriptsize $f=\text{random forest}$}}
     & CoGS & 1.00 (0) & 1.00 (0) & 1.00 (0) & 1.00 (0) & 1.00 (0)  \\ 
     & DiCE-a & 1.00 (0) & 1.00 (0) & 1.00 (0) & 1.00 (0) & 1.00 (0)  \\ 
     & DiCE-b & 1.00 (0) & 1.00 (0) & 1.00 (0) & 1.00 (0) & 1.00 (0)  \\ 
     & GrSp & 0.46 (11) & 0.89 (6) & 1.00 (0) & 0.86 (4) & 0.30 (15)  \\ 
     & LORE & 0.56 (20) & 0.20 (13) & 0.68 (20) & 0.24 (20) & 0.60 (38)  \\ 
     & NeMe & 0.08 (3) & 0.05 (2) & 0.04 (5) & 0.03 (1) & 0.14 (2)  \\ 
    \midrule
    \multirow{6}{*}{\rev{\rotatebox{90}{\scriptsize $f=\text{neural network}$}}}
     & CoGS & 1.00 (0) & 1.00 (0) & 1.00 (0) & 1.00 (0) & 1.00 (0)  \\ 
     & DiCE-a & 1.00 (0) & 1.00 (0) & 1.00 (0) & 1.00 (0) & 1.00 (0)  \\ 
     & DiCE-b & 1.00 (0) & 1.00 (0) & 1.00 (0) & 1.00 (0) & 1.00 (0)  \\ 
     & GrSp & 0.87 (7) & 0.25 (4) & 1.00 (0) & 0.51 (12) & 0.49 (10)  \\ 
     & LORE & 0.52 (20) & 0.28 (16) & 0.68 (10) & 0.76 (23) & 0.84 (23)  \\ 
     & NeMe & 0.14 (7) & 0.11 (3) & 0.09 (1) & 0.11 (4) & 0.51 (3)  \\ 
    \bottomrule
    \end{tabular}
   
    \label{tab:success-rates}
\end{table}

\subsection{Quality of discovered counterfactual examples}
\rev{As last part in our benchmarking effort, we consider what algorithm manages to produce near-optimal counterfactual examples (i.e., those with smallest loss).}
\rev{In particular, we report} the relative change in loss for the best-found counterfactual example with respect to the loss obtained by CoGS, only for success cases.
\rev{Since we consider only successes, the last term of the loss (\Cref{eq:loss-function}) is always null, i.e.,  $|| f(\mathbf{z}) - t ||_0 = 0$.}
The relative change in loss with respect to CoGS for another algorithm \emph{Alg} is:
\begin{equation*}
    \frac{\mathcal{L}_\textit{Alg}(\mathbf{z}) - \mathcal{L}_\text{CoGS}(\mathbf{z}) }{\mathcal{L}_\text{CoGS}(\mathbf{z})}.
\end{equation*}

\Cref{fig:rel-change-loss} shows the relative change in loss of \rev{DiCE,} GrSp, LORE, and NeMe with respect to CoGS.
\rev{DiCE,} GrSp and LORE typically (but not always) find points that have larger loss than those found by CoGS. 
NeMe performs very similarly to CoGS, however NeMe seldom succeeds (cfr.~\Cref{tab:success-rates}). 
This suggests that NeMe can explore a small neighborhood of $\mathbf{x}$ particularly well, but fails if counterfactual examples are relatively distant from $\mathbf{x}$.

\begin{figure}[h]
    \centering
    \begin{tabular}{lc}
    \rotatebox{90}{\hspace{.75cm}$f=\text{random forest}$}
    &
    \includegraphics[width=.5\linewidth]{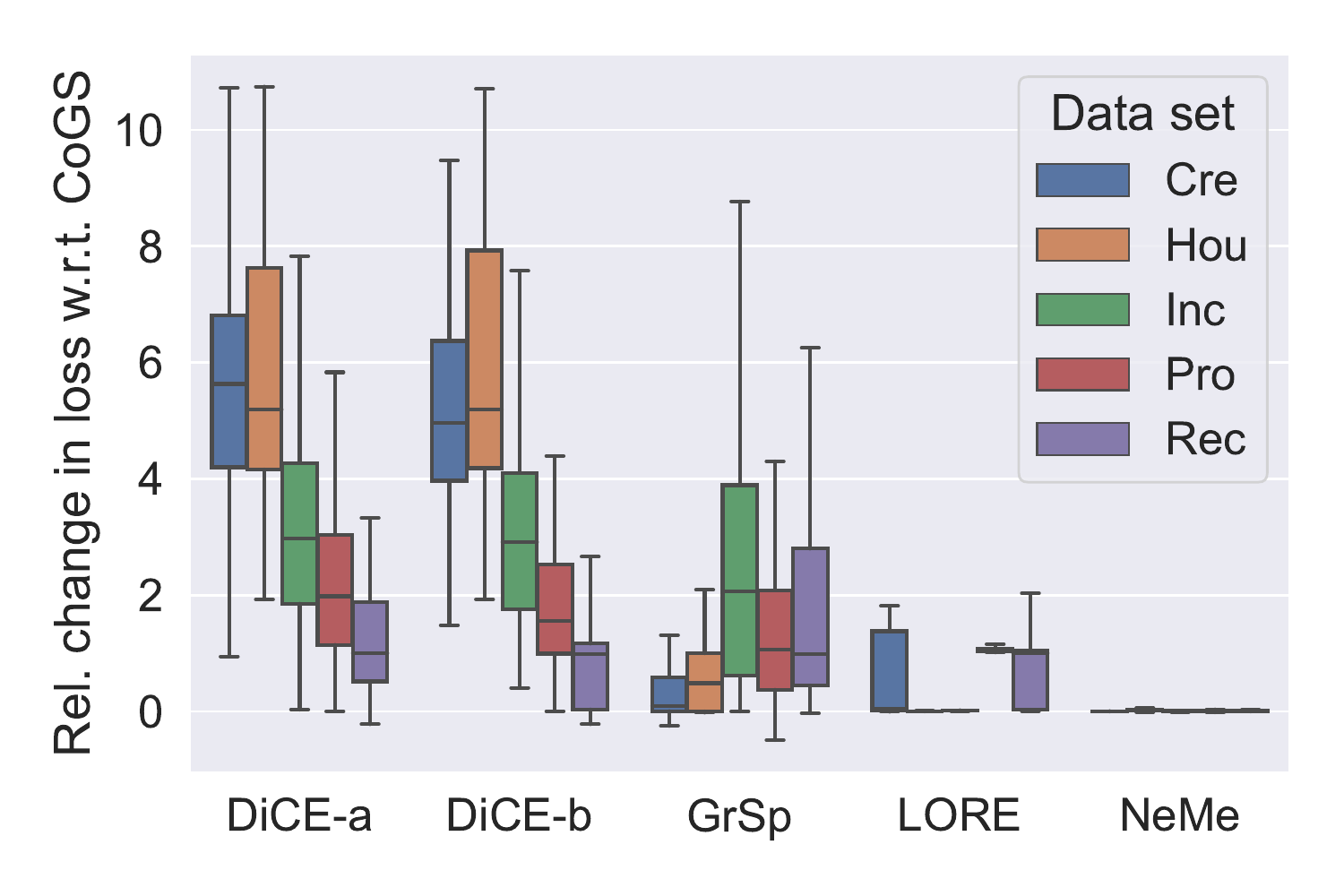} \\
    \rotatebox{90}{\hspace{.5cm}$f=\text{neural network}$}
    &
    \includegraphics[width=.5\linewidth]{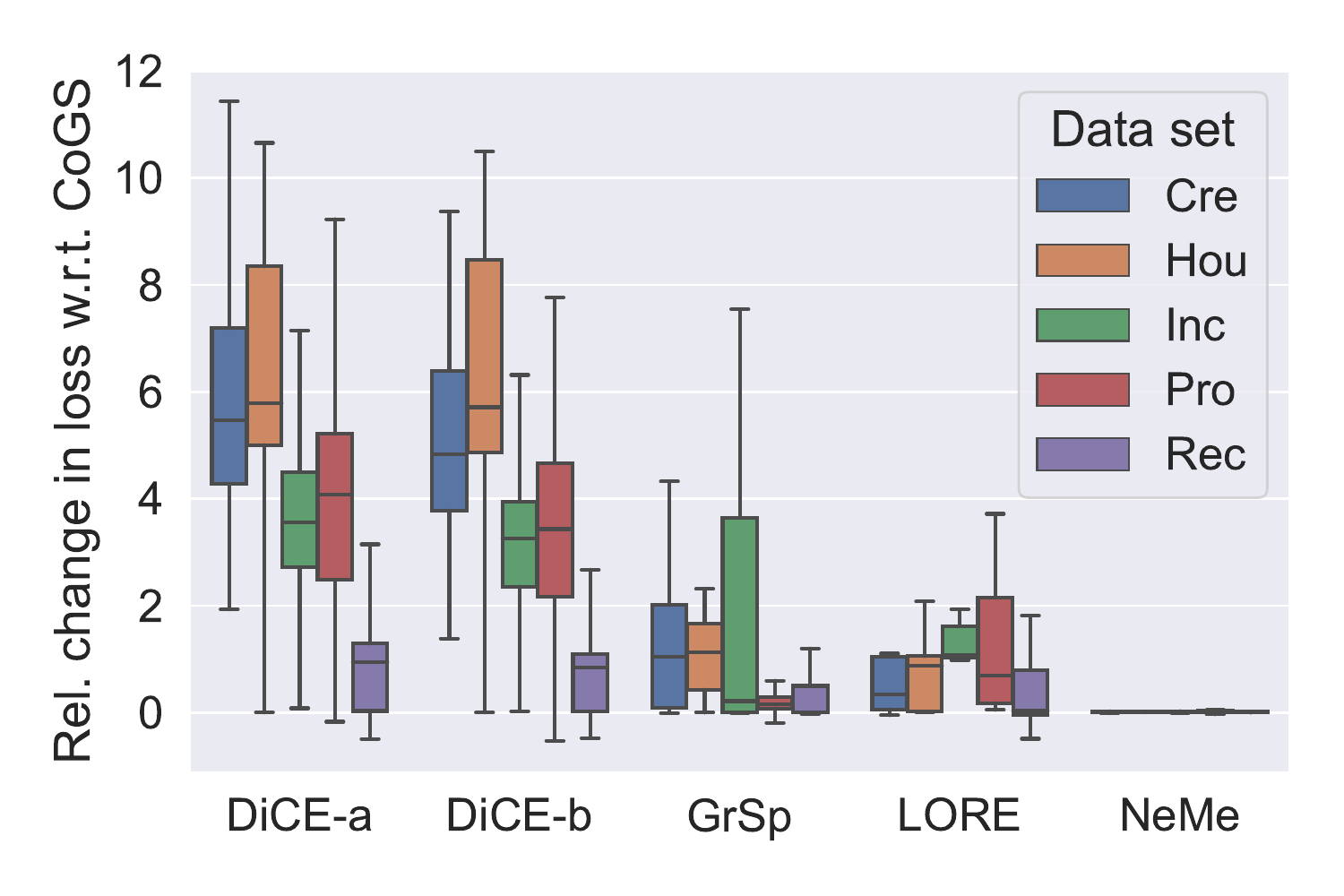} \\
    \end{tabular}
    \caption{Boxplots of relative change in loss with respect to CoGS for GrSp, LORE, and NeMe, on the different data sets \rev{and black-boxes} for success cases.}
    \label{fig:rel-change-loss}
\end{figure}

\rev{
\subsection{Conclusion of benchmarking}
The results show that, overall, CoGS performs best.
DiCE (in particular, DiCE-a) is the closest competitor in terms of speed and success rates, but the algorithm finds counterfactual examples  that are substantially more distant from $\mathbf{x}$ (i.e., have larger loss) than those found by CoGS.
GrSp has good runtime and generally finds closer counterfactual examples (i.e., lower loss) than DiCE, but it remains inferior to CoGS, both in terms of distance (loss) and success rate.
LORE has worse success rate than GrSp, and NeME worse of all.
Therefore, we use CoGS for the following experiments on robustness.
}

\section{Experimental Results: Robustness}
\label{sec:results-robustness}

\rev{We proceed with presenting the experimental results regarding robustness to perturbations in $\mathcal{C}$,  $\mathcal{K}$, and jointly}.
\rev{We focus on results that allow us to answer} what we believe to be important research questions: 
(RQ1) \emph{Do we need to account for robustness to discover robust counterfactual examples?}
(RQ2) \emph{Does a lack of robustness compromise the feasibility of correcting perturbations with additional intervention?}
(RQ3) \emph{Are robust counterfactual explanations advantageous in terms of additional intervention cost?}
These questions are addressed, in order, in the next subsections.
Because of space limitations, a number of additional results is reported in \ref{sec:apdx-additional-results}\rev{, including runtime taken to account for robustness w.r.t.~$\mathcal{C}$ and $\mathcal{K}$, and the effect of varying $m$ when computing the $\mathcal{K}$-robustness score}.
We \rev{now account for plausibility constraints} $\mathcal{P}$ in \rev{all of the} following experiments.
We remark that in all our experiments, CoGS always succeeded in discovering a counterfactual example for which $f$ predicts $t$\rev{, except for having a mean success rate of $99\%$ (st.dev.~of $1\%$) on the Rec data set when $f$ is implemented as a neural network}.

\subsection{(RQ1) Do we need to account for robustness to discover robust counterfactual examples?}
\label{sec:results-rq1}

\Cref{tab:cogs-happen-to-find-robust} shows the frequency with which robust counterfactual examples are discovered accidentally.
To realize this, we compare the best-found counterfactual example that is discovered by CoGS when robustness \emph{is not} accounted for, and the one that is found when $\mathcal{C}$- or $\mathcal{K}$-robustness \emph{is} accounted for \rev{(as indicated in \Cref{sec:incorp-rob})}.
\rev{We take the frequency by which the two match as indication of whether robust counterfactual examples can be discovered by accident.}
Since numerical feature values may differ only slightly between two best-found counterfactual examples, we consider the values to match if they are sufficiently close to each other, according to a tolerance level of \rev{$1\%$, $5\%$, or $10\%$} of the range of that feature. 
\rev{As reasonable to expect, the results show that the larger the tolerance level, the more a $\mathbf{z}^\star$ discovered when not accounting for robustness matches the respective one that is discovered when accounting for robustness. 
In general, the result depends on the data set in consideration, and also (albeit arguably less so) on whether random forest or a neural network is used as black-box model $f$.

For brevity, we now focus on the tolerance level of $5\%$ and random forest.}
On Inc, best-found counterfactual examples \rev{rarely match with those discovered when accounting for  $\mathcal{C}$-robustness} ($4\%$ \rev{on average for the tolerance of $5\%$})\rev{, while the vice versa happens on Hou ($84\%$ on average for the same tolerance).}
\rev{For $\mathcal{K}$-robustness like} for $\mathcal{C}$-robustness, the result depends on the data set.
Importantly, the data sets where the frequencies are high for $\mathcal{C}$-robustness and $\mathcal{K}$-robustness are not necessarily the same.
On Inc, best-found counterfactual examples are rarely optimal under $\mathcal{C}$-setbacks, but \rev{can often match} with counterfactual examples discovered when penalizing low $\mathcal{K}$-robustness scores (\rev{$40\%$ on average for the tolerance of $5\%$}).
This should not be surprising because $\mathcal{C}$- and $\mathcal{K}$-robustness are orthogonal to each other under the assumption of feature independence.
The last row shows how often best-found counterfactual examples happen to be both robust to perturbations to $\mathcal{C}$ and $\mathcal{K}$. 
The frequencies are clearly always lower than for the previous triplets of rows.
Hou is the only data set for which the frequency of discovering a counterfactual example that happens to be both robust w.r.t.~$\mathcal{C}$ and $\mathcal{K}$ by chance is relatively large (\rev{e.g.,} above $50\%$ \rev{for the tolerance of $5\%$}).

\rev{
When using the neural network instead of random forest, the trends mentioned before remain the same, but the specific magnitudes can differ.
For example, the accidental discovery of robust counterfactual examples w.r.t.~$\mathcal{C}$ and/or $\mathcal{K}$ is lower on Cre with the neural network compared to random forest, but the opposite holds for Hou (with some exceptions, e.g., the tolerance level of $1\%$ when both $\mathcal{C}$- and $\mathcal{K}$-robustness are sought).

Overall, this result indicates that, except for lucky cases (e.g., Hou with $f$ being the neural network), it is unlikely to discover robust counterfactual examples by chance.
Hence, if one wishes to achieve robustness, the search must be explicitly instructed to that end.
In the next sections, we investigate whether achieving robustness can actually be important.
}


\begin{table}[]
    \centering
    \caption{Mean $\pm$ standard deviation of the frequency with which the best-found (among five search repetitions) counterfactual example when not accounting for robustness is accidentally robust w.r.t.~$\mathcal{C}$ or $\mathcal{K}$.
    For numerical features, we consider them to match in value if they are within a tolerance level (Tol.) of 1\%, 5\% or 10\% of the range for that feature.}
    \sisetup{separate-uncertainty,scientific-notation=fixed,round-mode=places}
    \resizebox{\linewidth}{!}{%
    \begin{tabular}{llcccccc}
\toprule
& {Robustness} & {Tol.} &  {Cre} & {Inc} & {Hou} & {Pro} & {Rec} \\
\midrule
\multirow{9}{*}{\rotatebox{90}{$f=\text{random forest}$}}
& \multirow{3}{*}{Only $\mathcal{C}$}
 & 1\% & 0.40 (6) & 0.02 (0) & 0.76 (10) & 0.53 (5) & 0.27 (6) \\
& & 5\% & 0.42 (7) & 0.04 (2) & 0.84 (9) & 0.57 (6) & 0.37 (7) \\
& & 10\% & 0.43 (7) & 0.05 (2) & 0.85 (9) & 0.58 (6) & 0.40 (9) \\
\cline{2-8}
& \multirow{3}{*}{Only $\mathcal{K}$}
 & 1\% & 0.37 (1) & 0.06 (2) & 0.33 (24) & 0.26 (5) & 0.04 (4) \\
& & 5\% & 0.44 (3) & 0.40 (8) & 0.63 (17) & 0.37 (6) & 0.08 (3) \\
& & 10\% & 0.46 (4) & 0.58 (7) & 0.67 (16) & 0.46 (7) & 0.12 (2) \\
\cline{2-8}
& \multirow{3}{*}{Both $\mathcal{C},\mathcal{K}$}
 & 1\% & 0.23 (4) & 0.00 (0) & 0.21 (21) & 0.19 (6) & 0.03 (3) \\
& & 5\% & 0.27 (3) & 0.00 (0) & 0.54 (21) & 0.26 (5) & 0.06 (4) \\
& & 10\% & 0.30 (5) & 0.00 (0) & 0.60 (19) & 0.34 (6) & 0.08 (4) \\
\midrule
\multirow{9}{*}{\rev{\rotatebox{90}{$f=\text{neural network}$}}}
& \multirow{3}{*}{Only~$\mathcal{C}$}
 & 1\% & 0.25 (12) & 0.01 (1) & 0.96 (2) & 0.87 (5) & 0.50 (8) \\
& & 5\% & 0.27 (12) & 0.02 (1) & 0.97 (2) & 0.89 (5) & 0.56 (5) \\
& & 10\% & 0.29 (11) & 0.02 (1) & 0.97 (2) & 0.89 (5) & 0.57 (4) \\
\cline{2-8}
& \multirow{3}{*}{Only~$\mathcal{K}$}
 & 1\% & 0.13 (7) & 0.35 (2) & 0.07 (5) & 0.08 (6) & 0.01 (0) \\
& & 5\% & 0.26 (8) & 0.52 (3) & 0.80 (12) & 0.42 (19) & 0.01 (1) \\
& & 10\% & 0.39 (2) & 0.70 (4) & 0.93 (4) & 0.58 (14) & 0.02 (2) \\
\cline{2-8}
& \multirow{3}{*}{Both~$\mathcal{C},\mathcal{K}$}
 & 1\% & 0.02 (2) & 0.00 (0) & 0.07 (5) & 0.06 (6) & 0.00 (1) \\
& & 5\% & 0.06 (3) & 0.00 (0) & 0.69 (9) & 0.38 (18) & 0.01 (1) \\
& & 10\% & 0.12 (8) & 0.00 (0) & 0.93 (4) & 0.52 (14) & 0.01 (2) \\
\bottomrule
    \end{tabular}
    }
    \label{tab:cogs-happen-to-find-robust}
\end{table}

\subsection{(RQ2) Does a lack of robustness compromise the feasibility of correcting perturbations with additional intervention?}
\label{sec:results-rq2}

\begin{figure}
    \centering
    \begin{tabular}{cc}
        Uniformly-distributed perturbations, \rev{$f=\text{random forest}$} \\ 
        \includegraphics[width=0.75\linewidth]{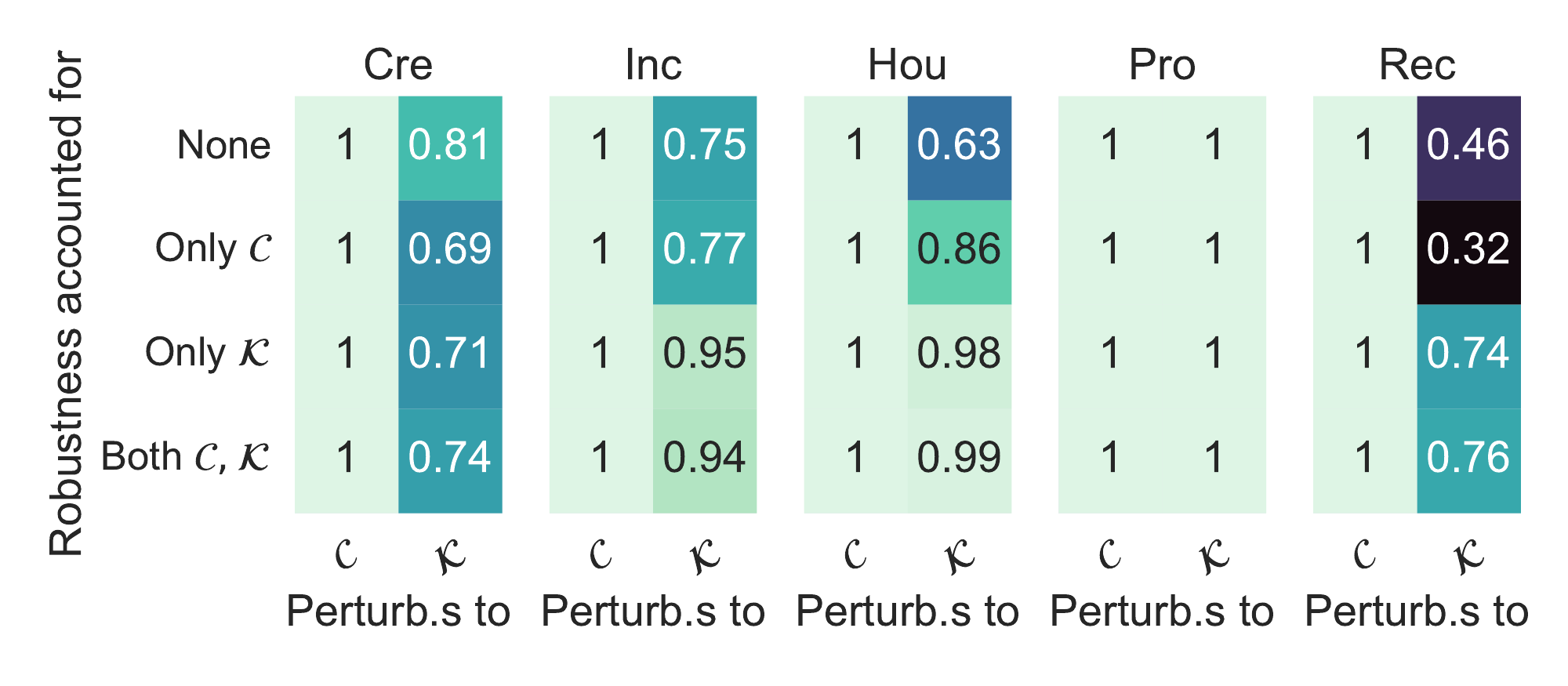} \\
        Normally-distributed perturbations, \rev{$f=\text{random forest}$}\\ \includegraphics[width=0.75\linewidth]{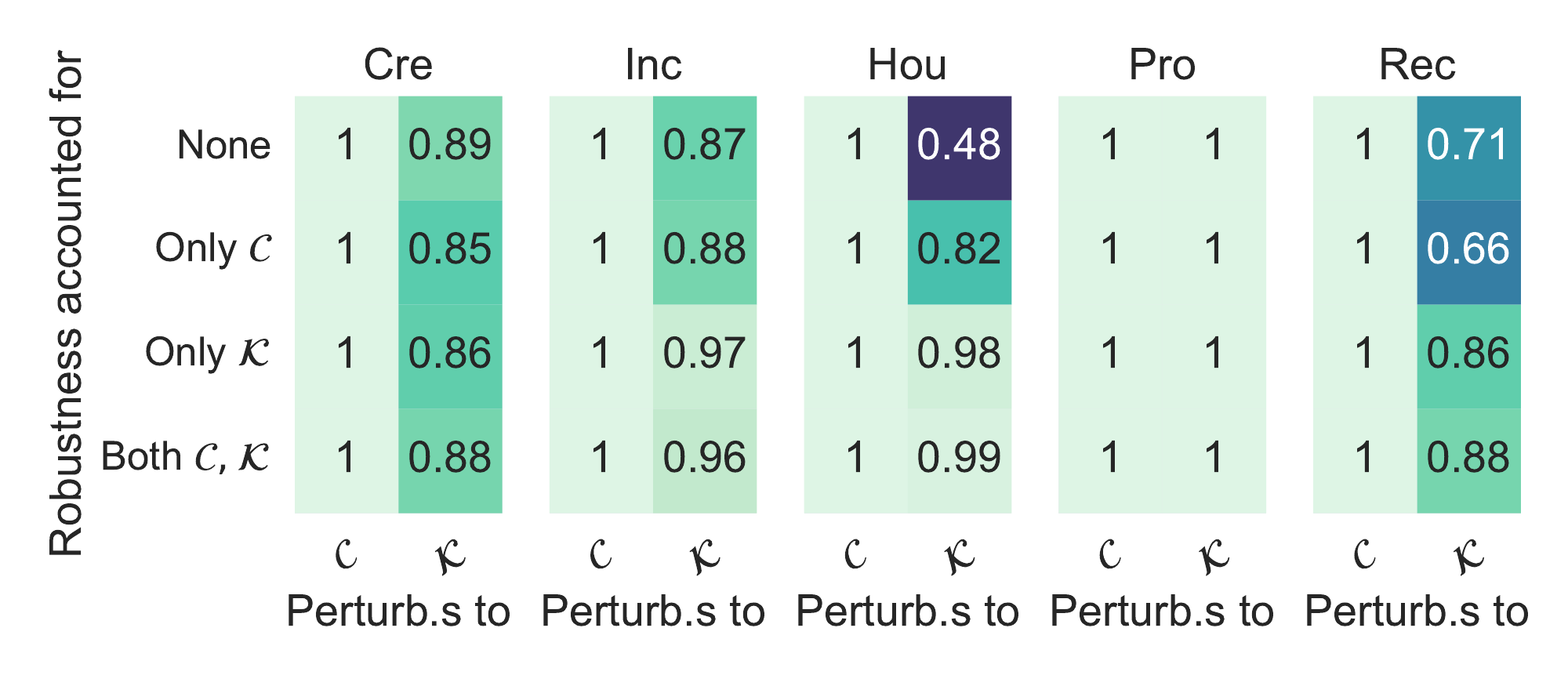}\\
    \end{tabular}
    \caption{Mean frequency with which a plausible additional intervention exists, to contrast the perturbations and reach the intended counterfactual example (uniformly-distributed categorical changes and normally- or uniformly-distributed numerical changes, \rev{$f=\text{random forest}$}).
    Darker colors represent worse cases.}
    \label{fig:fixability-rf}
\end{figure}

\begin{figure}
    \centering
    \begin{tabular}{cc}
        Uniformly-distributed perturbations, \rev{$f=\text{neural network}$} \\ 
        \includegraphics[width=0.75\linewidth]{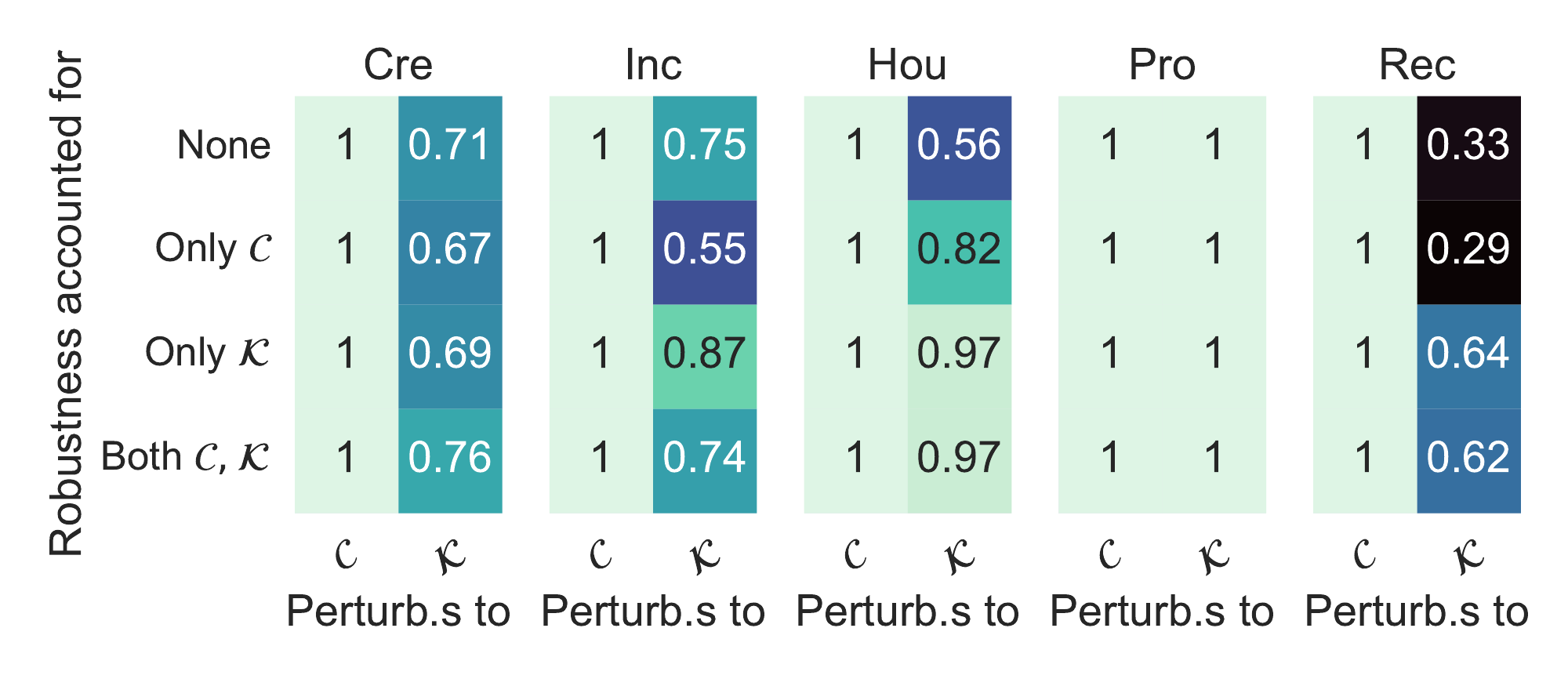} \\
        Normally-distributed perturbations, \rev{$f=\text{neural network}$}\\ \includegraphics[width=0.75\linewidth]{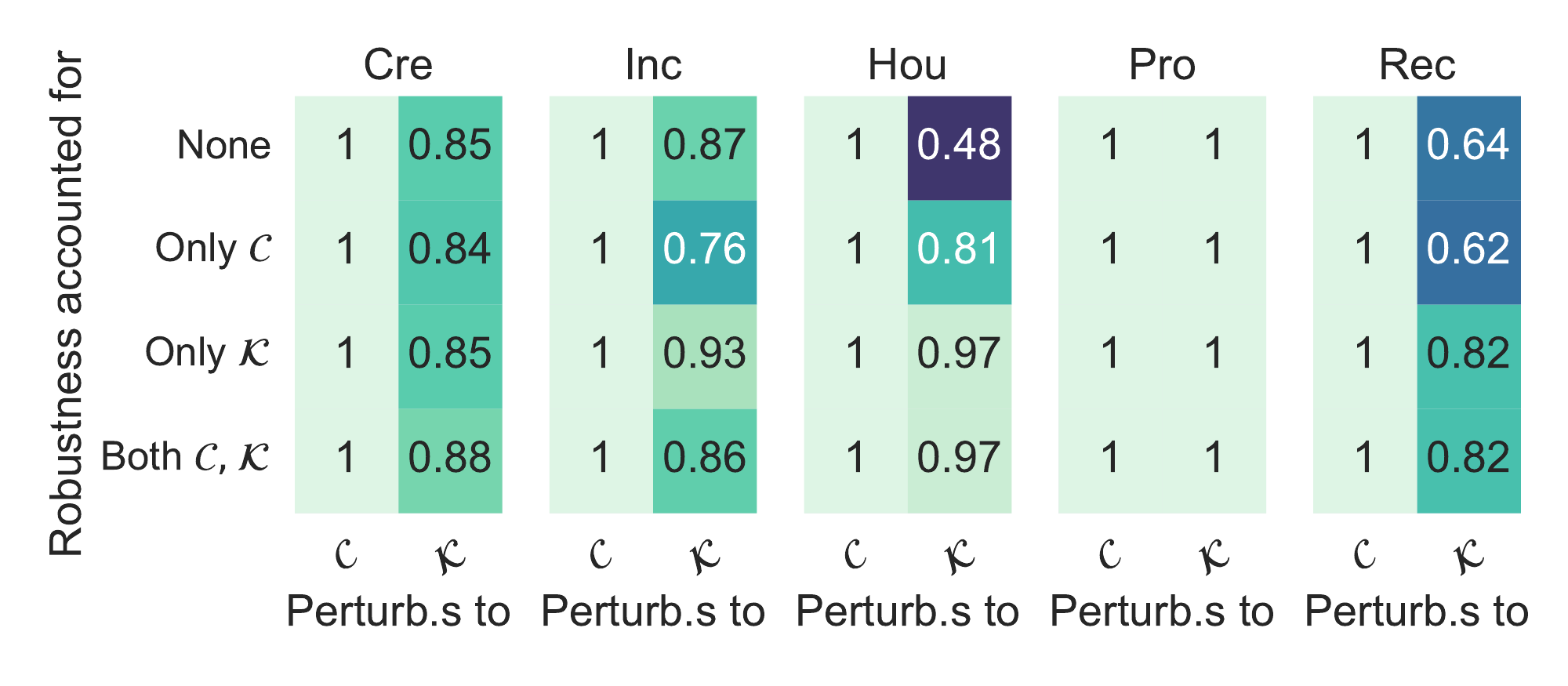}\\
    \end{tabular}
    \caption{Mean frequency with which a plausible additional intervention exists, to contrast the perturbations and reach the intended counterfactual example (uniformly-distributed categorical changes and normally- or uniformly-distributed numerical changes, \rev{$f=\text{neural network}$}).
    Darker colors represent worse cases.}
    \label{fig:fixability-nn}
\end{figure}

At this point, current works on the robustness of counterfactual explanations typically consider the extent by which robustness helps preventing the invalidation of counterfactual explanations (see \Cref{sec:related}).
In other words, they consider whether the point $\mathbf{z}^\prime$ that is given by perturbing the best-found counterfactual example is still classified as $t$.
For completeness, we report on this in \ref{sec:apdx-invalidity}.
Current works do not, however, consider whether an additional intervention that allows to correct the perturbation and obtain $t$ might exist.


\Cref{fig:fixability-rf,fig:fixability-nn} show the frequency with which achieving the intended counterfactual explanation remains possible after random perturbations take place.
The frequency is computed by applying, for each counterfactual explanation outcome of the search, \num{100} perturbations that are sampled uniformly at random from the categorical possibilities for categorical features, and normally (with st.dev.~of \num{0.1}) or uniformly within the numerical intervals for numerical features, as defined in $\mathbf{p}$.
\rev{We note that similar results are obtained between choosing random forest or a neural network as $f$.}

As expected, it is always possible to contrast $\mathcal{C}$-setbacks, \rev{because these happen along the direction of intervention}.
Instead, perturbations concerning $\mathcal{K}$ can lead to a $\mathbf{z}^\prime$ such that \rev{no further plausible intervention exists to reach the originally intended counterfactual example}. 
We do not report a result for perturbations concerning both $\mathcal{C}$ and $\mathcal{K}$ \rev{at the same time} because, by construction, it is the same as the result for perturbations concerning only $\mathcal{K}$.
Like for the results of \Cref{sec:results-rq1}, the extent by which perturbations to $\mathcal{K}$ reduce the possibility for further intervention depends on the data set. 
On Pro, all perturbations can be contrasted by an additional intervention because there are no plausibility constraints (see \Cref{tab:datasets}).
Conversely, on Rec, perturbations to $\mathcal{K}$ can often make it impossible to reach the originally-intended counterfactual example, unless $\mathcal{K}$-robustness is accounted for.
In fact, accounting for $\mathcal{K}$-robustness generally improves the chances that further intervention is possible, at times substantially (e.g., on Inc, Hou, and Rec).
Cre represents the only exception to this, as accounting for $\mathcal{K}$-robustness \rev{performs similar (or sometimes worse) than accounting for none}. 
This suggests that the decision boundary learned by $f$ on this data set may not be very smooth, making the use of the $\mathcal{K}$-robustness score \rev{a too coarse approximation to be helpful}.
\rev{Generally, accounting for perturbations to $\mathcal{C}$ \emph{alone} does not help achieving substantial robustness to perturbations to $\mathcal{K}$, except for on Hou.
This suggests that, on Hou, $f$ learns decision boundaries that incorporate interesting interactions between certain features.
}
Importantly, accounting for $\mathcal{C}$-robustness \emph{together} with accounting for $\mathcal{K}$-robustness does not \rev{substantially} compromise the gains obtained by accounting for $\mathcal{K}$-robustness alone, even though perturbations to $\mathcal{C}$ always admit further intervention.
Overall, these results show that accounting for robustness can be crucial to ensure that, if perturbations happen, additional intervention to obtain $t$ remains possible.

\subsection{(RQ3) Are robust counterfactual explanations advantageous in terms of additional intervention cost?}
\label{sec:results-rq3}

We present the following results in terms of a \emph{relative cost}, namely, the ratio between the cost of intervention to reach \rev{the intended} $\mathbf{z}$ when random perturbations take place (i.e., initial the cost of reaching $\mathbf{z}$ from $\mathbf{x}$ plus the cost of reaching $\mathbf{z}$ from the perturbed $\mathbf{z}^\prime$), and the \rev{\emph{ideal cost}}, i.e., the cost \rev{incurred in complete absence of perturbations} (i.e., the cost of reaching $\mathbf{z}$ from $\mathbf{x}$).
We compute this relative cost when the notions of robustness are or are not accounted for.
The ideal cost is computed when \emph{not} accounting robustness.
The cost is modeled by $\frac{1}{2}\gamma(\mathbf{z},\mathbf{x}) + \frac{1}{2}\frac{||\mathbf{z} - \mathbf{x}||_0}{d}$ (i.e., the first part of \Cref{eq:loss-function}).
Moreover, if $f(\mathbf{z}^\prime)=t$, we assume no additional intervention to be needed, and thus \rev{the additional cost is zero and the relative cost is $1$}.

\begin{figure}
    \centering
    \setlength{\tabcolsep}{0pt}
    \begin{tabular}{lc}
    & Uniformly-distributed perturbations\\
    \multirow{2}{*}{\rotatebox{90}{\rev{$f=\text{random forest}$}}} & 
    \includegraphics[width=0.98\linewidth]{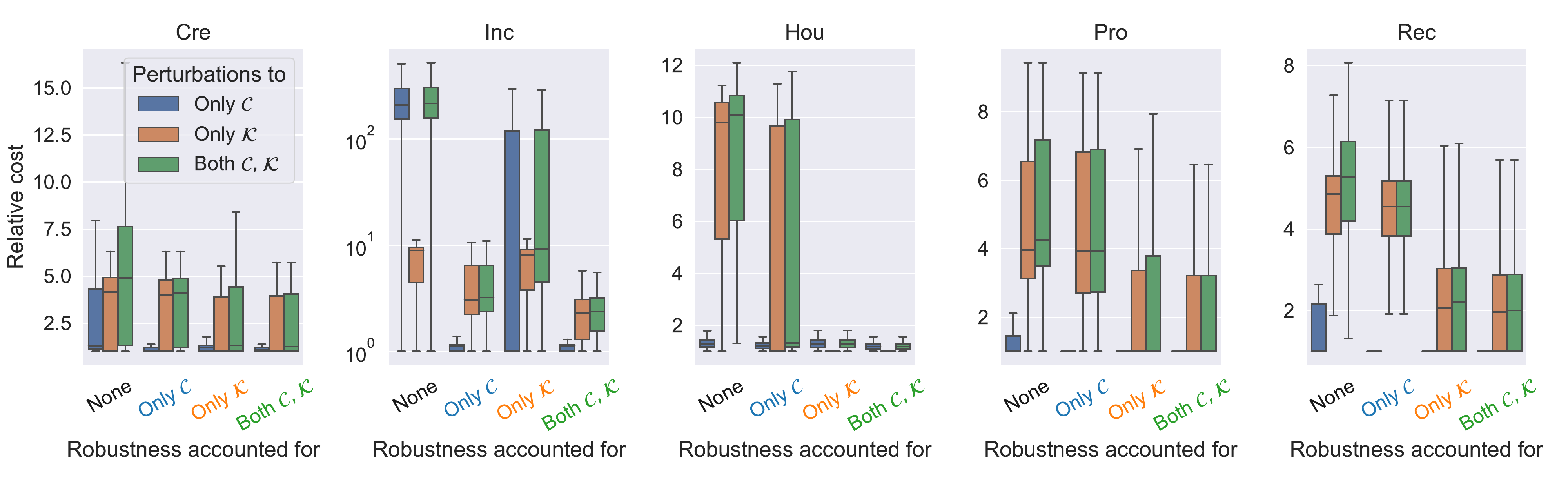}\\
    & Normally-distributed perturbations\\
    & \includegraphics[width=0.98\linewidth]{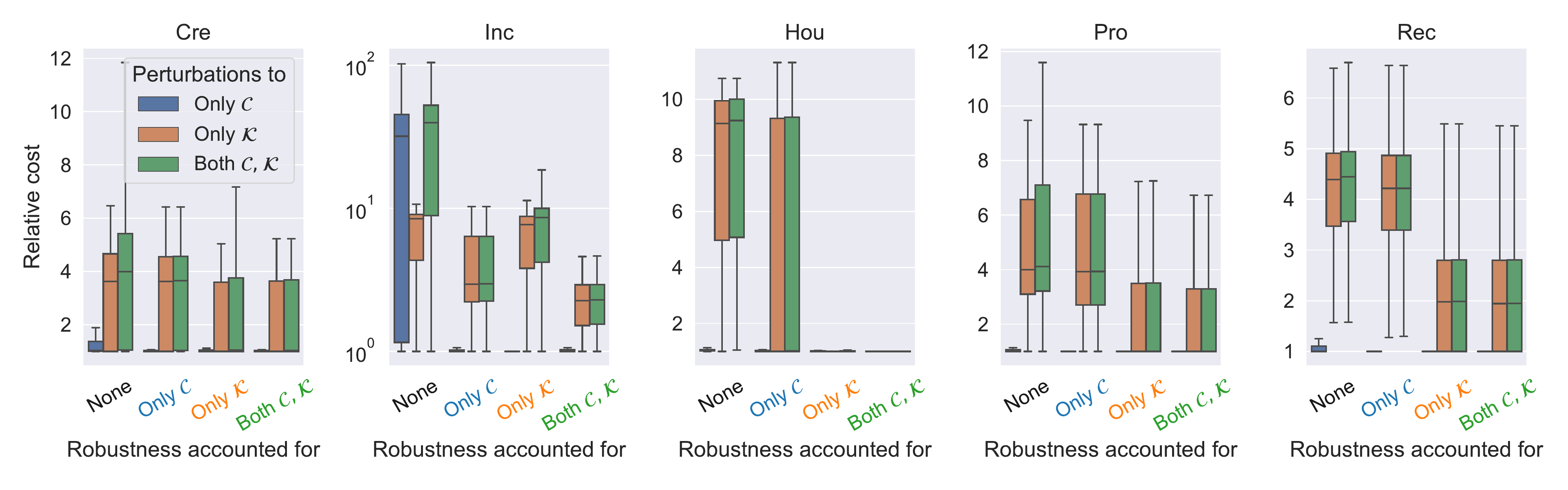}\\
    \end{tabular}
    \caption{
    Cost in terms of different configurations of accounting for robustness and under different perturbations, relative to the ideal cost \rev{(with random forest)}.
    Due to perturbations, t perturbations, the relative cost for when no notion of robustness is accounted for (label None) is typically much larger than the one for when the right notion of robustness is accounted for (matching color between box and label).
    The vertical axis for Inc is in logarithmic scale.}
    \label{fig:add-rel-effort-rf}
\end{figure}

\begin{figure}
    \centering
    \setlength{\tabcolsep}{0pt}
    \begin{tabular}{lc}
    & Uniformly-distributed perturbations\\
    \multirow{2}{*}{\rotatebox{90}{\rev{$f=\text{neural network}$}}} & 
    \includegraphics[width=0.98\linewidth]{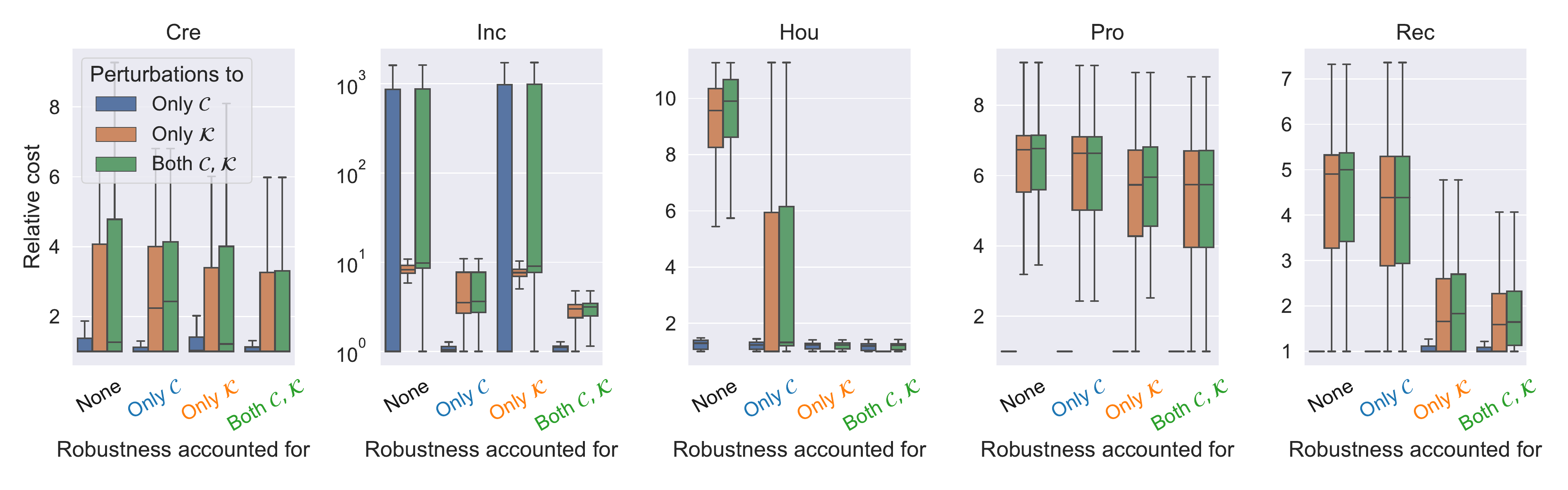}\\
    & Normally-distributed perturbations\\
    & \includegraphics[width=0.98\linewidth]{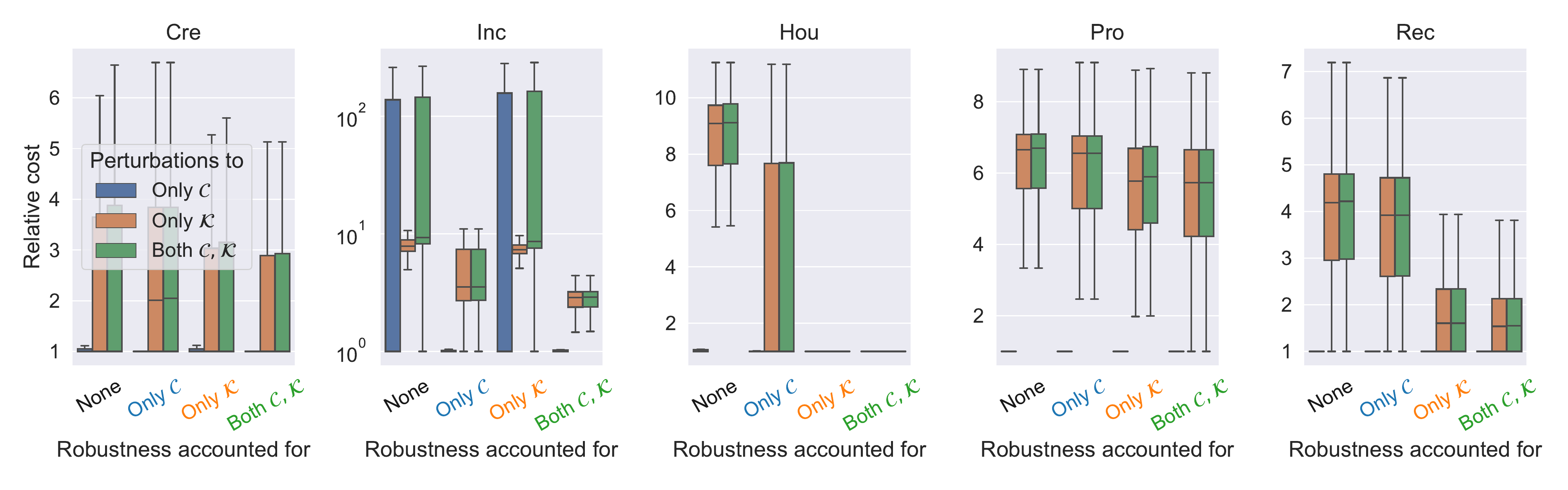}\\
    \end{tabular}
    \caption{
    Cost in terms of different configurations of accounting for robustness and under different perturbations, relative to the ideal cost \rev{(with neural network)}.
    Due to perturbations, the relative cost for when no notion of robustness is accounted for (label None) is typically much larger than the one for when the right notion of robustness is accounted for (matching color between box and label).
    The vertical axis for Inc is in logarithmic scale.}
    \label{fig:add-rel-effort-nn}
\end{figure}

\Cref{fig:add-rel-effort-rf,fig:add-rel-effort-nn} (\rev{for random forest and neural network, respectively)} show that when no robustness is accounted for (the left-most triplets of boxes in each plot), the relative cost can become dramatically large.
In other words, additional intervention to correct the perturbations can be extremely costly.
Whether the relative cost increases mostly due to perturbations to $\mathcal{C}$ (blue boxes) or to $\mathcal{K}$ (orange boxes) depends on the data set.
For example, perturbations to $\mathcal{K}$ have the largest effect on Rec, while those to $\mathcal{C}$ have the largest effect on Inc (by far)\rev{, across types of distribution and types of $f$}.
\rev{For both the random forest and the neural network, the relative cost ranges from around $5\times$ or $10\times$ the ideal cost, up to over $100\times$ (Inc, perturbations to $\mathcal{C}$) when not accounting for robustness.
}

\begin{figure}
    \setlength{\tabcolsep}{0pt}
    \begin{tabular}{lc}
    \centering
    \rotatebox{90}{\hspace{1cm}\rev{$f=\text{random forest}$}} & 
    \includegraphics[width=0.98\linewidth]{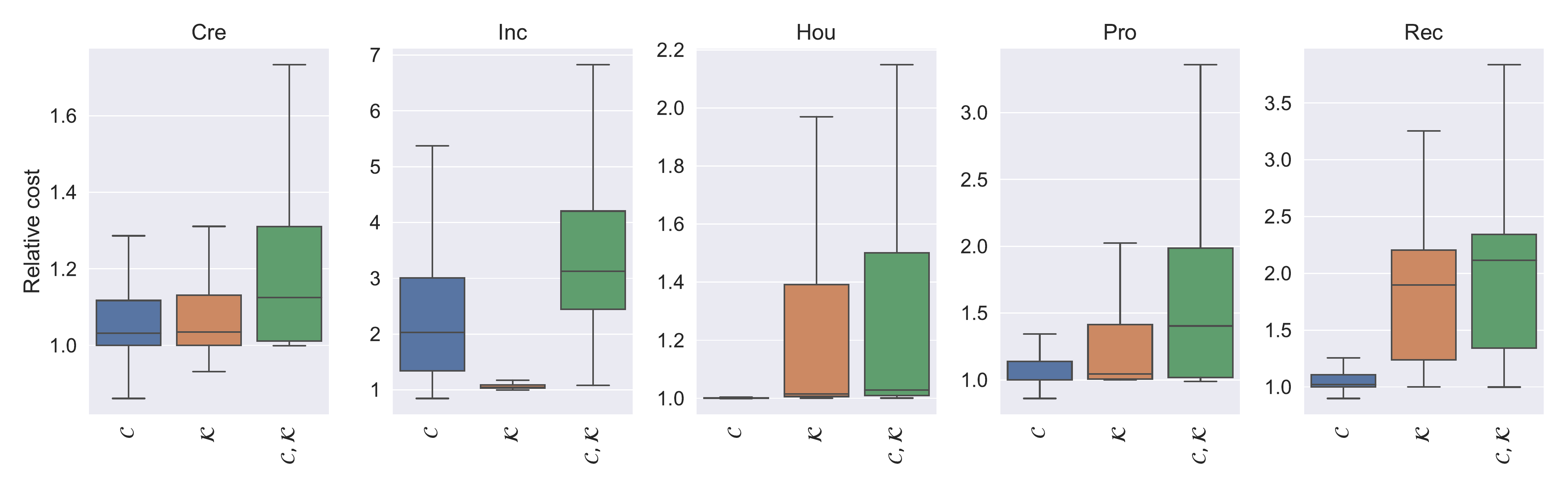}\\
    \rotatebox{90}{\hspace{1cm}\rev{$f=\text{neural network}$}} & \includegraphics[width=0.98\linewidth]{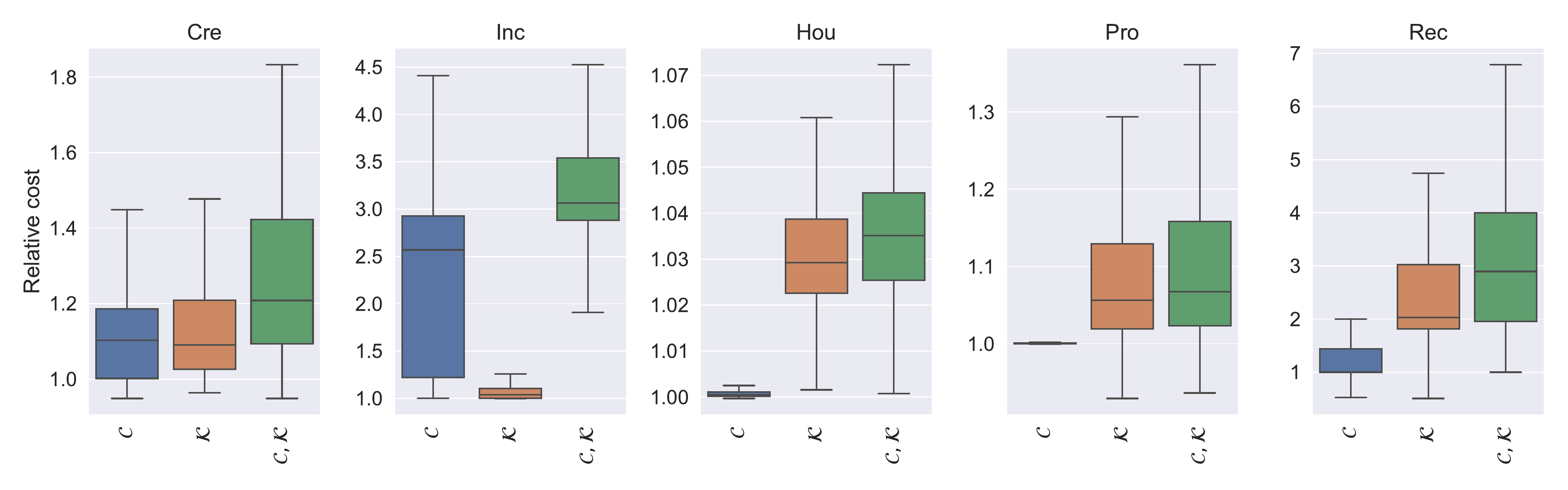}\\
    \end{tabular}
    \caption{
    \rev{Cost of accounting for robustness relative to not accounting for robustness (i.e., ideal cost) when no perturbations take place. 
    Note that the (rare) relative costs smaller than $1$ are due to a lack of optimality of the search algorithm.}}
    \label{fig:add-rel-effort-nop}
\end{figure}

When one accounts for the notion of robustness that is meant to deal with the respective type of perturbation, the relative cost often decreases substantially. 
Accounting for $\mathcal{C}$-robustness (second blue box from the left in each plot) counters perturbations to $\mathcal{C}$ very well \rev{on all the data sets}.
On Inc in particular, the relative cost improves by two orders of magnitude.
\rev{As found in \Cref{sec:results-rq2}, accounting for perturbations to $\mathcal{K}$ with the $\mathcal{K}$-robustness score can remain insufficient, as it can be observed on Cre and Inc for both types of $f$.}
Again, this is likely a limitation of using a simple heuristic such as the $\mathcal{K}$-robustness score to deal with $\mathcal{K}$-robustness.
Accounting for robustness w.r.t.~$\mathcal{C}$ (resp., $\mathcal{K}$) does not, in general, lead to smaller relative cost under perturbations to $\mathcal{K}$ (resp., $\mathcal{C}$). 
\rev{We confirm this general trend with statistical testing in \ref{sec:apdx-stat-sign}.}
Lastly, accounting for both $\mathcal{C}$- and $\mathcal{K}$-robustness (right-most triplets of boxes in each plot) offers protection (lower relative cost) from situations in which both types of perturbations take place.
\rev{In general across data sets and types of $f$}, the distribution of relative costs for when perturbations to both $\mathcal{C}$ and $\mathcal{K}$ take place and both $\mathcal{C}$- and $\mathcal{K}$-robustness are accounted for (right-most green box in each plot) is better than the distribution for when the same perturbations take place but no notion of robustness is accounted for (left-most green box in each plot).

\rev{Lastly,} since the ideal cost is computed when no notion of robustness is accounted for, part of the relative costs for when robustness is accounted for comes from the fact that robust counterfactual examples are generally farther away from $\mathbf{x}$ than non-robust ones.
\rev{\Cref{fig:add-rel-effort-nop} shows the cost increase that comes solely from accounting for robustness on the considered data sets and types of $f$, without any perturbation taking place}.
\rev{We remark that values smaller than \num{1} happen only because the discovered counterfactual examples can be suboptimal.}
\rev{Importantly, we find that the cost when  accounting for robustness is between $1\times$ and $7\times$ the ideal cost, i.e., when not accounting for robustness.
In general, this is significantly smaller than the increase incurred when perturbations take place and robustness is not accounted for, as reported before (generally between $5\times$ and $10\times$ the ideal cost, with up to $100\times$).
}

These results confirm that even though robust counterfactual explanations are, in principle, more costly to pursue than non-robust ones, if random perturbations take place, robust counterfactual explanations require much less additional intervention than non-robust ones.

\section{Discussion}
\label{sec:discussion}

Our experimental results provide a positive answer to all three research questions. 
Most often than not, counterfactual explanations are \emph{not} robust, be it in terms of the features whose value is prescribed to be changed ($\mathcal{C}$-robustness), or those whose value is prescribed to be kept as is ($\mathcal{K}$-robustness).
Moreover, non-robust counterfactual explanations are more susceptible to make it impossible for the user to remedy perturbations by additional intervention, and the cost of additional intervention is larger for non-robust counterfactual explanations than for robust ones.
Ultimately, it is clear that accounting for robustness is important.

Our experimental results \rev{suggest that accounting for robustness for features in $\mathcal{C}$ results in countering well perturbations to $\mathcal{C}$, and similarly, accounting for robustness for features in $\mathcal{K}$ results in countering well perturbations to $\mathcal{K}$.
Moreover, even though $f$ can learn non-linear feature interactions, accounting for $\mathcal{C}$ (or $\mathcal{K}$) has limited effect on contrasting perturbations to $\mathcal{K}$ (resp., $\mathcal{C}$).
Only in some cases (e.g., on Hou), robustness w.r.t.\ $\mathcal{C}$ has substantial repercussions on the effect of perturbations to $\mathcal{K}$ or vice versa.}

\rev{In addition to this, even if a counterfactual search algorithm does not guarantee that the discovered counterfactual example will be optimal, we experimentally see that incorporating our \Cref{def:c-robust-cfe} into the loss (\Cref{sec:incorp-rob}) produces a strong resilience to additional cost (\Cref{sec:results-rq3}) for perturbations to the features in $\mathcal{C}$.
Besides being effective, implementation of \Cref{def:c-robust-cfe} is also efficient (see \ref{sec:apdx-additional-runtime-robustness}).
}

\rev{What our results also show is that seeking robustness with respect to features in $\mathcal{K}$ is problematic.
This is because of \Cref{prop:general-f} and the fact that features in $\mathcal{K}$ are not aligned with the direction of intervention}.
Thus, we proposed to control for $\mathcal{K}$-robustness using an approximation, i.e., the $\mathcal{K}$-robustness score.
We found that seeking counterfactual examples that maximize the $\mathcal{K}$-robustness score is often but not always sufficient to obtain a good resilience to perturbations to the features in $\mathcal{K}$.
\rev{Moreover, the $\mathcal{K}$-robustness score requires to sample (and evaluate with $f$) multiple points, which is far more expensive than computing \Cref{def:c-robust-cfe}.}
Therefore, future work should consider whether a better method can be used than the $\mathcal{K}$-robustness score.
\rev{For example, if} information on $f$ is available, that information may be used to provide guarantees on the neighborhood of $\mathbf{z}$ (see, e.g., Theorem 2 in~\cite{dominguez2021adversarial} for linear $f$).

\rev{The assumption that features are independent is simplistic but often made in literature, because only a small number of works assume a causal model is available (e.g.,~\cite{karimi2020algorithmic,karimi2021algorithmic}).
Under the assumption of feature independence, as done here, one models the neighborhood of a counterfactual example with a box (under $L1$) or a hyper-sphere (under $L2$).
However, if certain features have a causal dependency on other features, this neighborhood morphs into other, possibly very complex shapes (e.g., when this dependency is not linear).
Importantly, if feature $i$ depends on $j$, then one cannot change $j$ without having that $i$ implicitly changes too.
Similarly, a perturbation happening to $j$ would implicitly alter $i$.
As our framework currently assumes independence, it is important to study to what extent separation between $\mathcal{C}$ and $\mathcal{K}$ remains possible and meaningful.
For many real-world problems, it is reasonable to expect that there exist groups of features that are truly independent from other groups of features.
Thus, the study of robustness for $\mathcal{C}$ and $\mathcal{K}$ could be carried out at a higher level, i.e., of feature groups in future work.
}

There is a number of further aspects worth mentioning when one wishes to implement a research work like on counterfactual explanations into practice, including this work.
For example, we use the $L1$-norm within Gower's distance to measure intervention cost.
In fact, literature works typically choose one distance measure (e.g., ours, or Gower's with $L2$-norm instead, or other variants, see~\Cref{sec:related}).
Of course, realistic implementation of intervention cost needs to use the a refined distance which may include mixing different types of norms, based on the features at play.
Similarly, one might wish to use different distributions to sample perturbations from (as opposed to only uniform or only normal as done in our synthetic experiments), \rev{and different functions to define the maximal extent of perturbation, which may e.g., account for the distribution of feature values.
For example, for denser areas of feature $i$, $p^+_i$ and $p^-_i$ should be smaller than for less dense areas.
Other desiderata may need to be included when seeking counterfactuals in practice (see, e.g.,~\cite{dandl2020multi,laugel2019unjustified}), including accounting for multiple types of robustness of the same time, such as those related to uncertainties of $f$~\cite{pawelczyk2020counterfactual,rawal2021algorithmic}
}


Lastly, we made subjective choices to define perturbations ($\mathbf{p}$) and plausibility constraints ($\mathcal{P}$) in the data sets.
We made these choices as best as we could, based on reading the meta-information in web sources and the papers that describe the data sets.
We have no doubt that domain experts would make much better choices than ours.
Nevertheless, we argue that this is not an important limitation because, as long as the community agrees that our choices are reasonable, they suffice to provide \rev{a sensible test bed for benchmarking robustness.
Hopefully, other researchers will find our annotations to be useful for future experiments on the robustness of counterfactual explanations.
Similarly, we hope that other researchers will find CoGS to be an interesting algorithm to benchmark against}.

\section{\rev{Related work}}\label{sec:related}

A number of works in literature propose several new desiderata that are largely orthogonal to our notions of robustness but can be important to enhance the practical usability of counterfactual explanations.
For example, Dandl et al.~\cite{dandl2020multi} consider, besides proximity of $\mathbf{z}$ to $\mathbf{x}$ according to different distances, whether other training points $\mathbf{x}^\prime$ are sufficiently close to $\mathbf{z}$ for it to reasonably belong to the training data distribution.
A similar desideratum is considered in~\cite{sharma2020certifai} and~\cite{van2019interpretable}; 
the latter work employs neural autoencoders to that end.
\rev{\cite{dhurandhar2018explanations} remarks the importance of sparsity for explanations, with the concepts of \emph{pertinent negatives} (the minimal features that should be different to (more) confidently predict the given class) and \emph{pertinent positives} (the minimal features that help correctly identifying the class).
}
Laugel et al.~\cite{laugel2019dangers,laugel2019unjustified} require that $\mathbf{z}$ can always be reached from a training point $\mathbf{x}^\prime$ without having to cross the decision boundary of $f$, for $\mathbf{z}$ not to be the result of an artifact in the decision boundary of $f$.
In~\cite{karimi2020algorithmic} and~\cite{mothilal2020explaining}, counterfactual explanations are studied through the lens of causality.
For recent surveys on counterfactual explanations, the reader is referred to~\cite{verma2020counterfactual,stepin2021survey,karimi2021survey}.

We now focus on works that deal with some notion of robustness and/or perturbations explicitly.
\rev{Artelt et al.~\cite{artelt2021evaluating} present theoretical results on the effect of perturbations (e.g., under linear $f$), evaluate the effect of different type of perturbations (Gaussian, uniform, masking) with three classifiers, and find that counterfactual explanations that obey plausibility constraints are more robust than counterfactual explanations that do not. 
Differently from us, Artelt et al.~do not consider sparsity and do not optimize for robustness.
}
The work by Karimi et al.~\cite{karimi2021algorithmic} extends~\cite{karimi2020algorithmic} to consider possible uncertainties in causal modelling.
In~\cite{slack2021counterfactual}, it is shown that a malicious actor can, in principle, jointly optimize small perturbations and the model $f$ such that, when applying the perturbations to points of a specific group (e.g., white males), the respective counterfactual explanations are much less costly than normal (in fact, counterfactual explanations are conceptually similar to adversarial examples, see, e.g.,~\cite{pawelczyk2021exploring,ballet2019imperceptible,freiesleben2021intriguing}).
Some works consider forms of robustness of counterfactual explanations with respect to changes of $f$ (e.g., whether $\mathbf{z}$ is still classified as $t$ if $f^\prime$ is used instead of $f$)~\cite{pawelczyk2020counterfactual,ferrario2022robustness} or updates to $f$ (e.g., after data distribution shift of temporal or geospatial nature)~\cite{ferrario2020series,rawal2021algorithmic}.
In~\cite{mochaourab2021robust}, robustness of counterfactual explanations is studied in the context of differentially-private support vector machines.
Dominguez et al.~\cite{dominguez2021adversarial} consider whether counterfactual explanations remain valid in presence of uncertainty on $\mathbf{x}$, and also account for causality.
\rev{We also note that Dominiguez et al.~consider a neighborhood of uncertainty around $\mathbf{x}$ which is akin to \Cref{def:p-neighborhood}; in fact, such sort of neighborhoods are common tools in post-hoc explanation methods, e.g., the Anchor explainer by \cite{ribeiro2018anchors} seeks representative points for a class by assessing that the prediction of $f$ for the points in their neighborhood is the same.}
\rev{Zhang et al.~\cite{zhang2018interpreting} propose a counterfactual search method based on linear programming that works for neural networks with ReLU activations; this work can be seen through the lens of robustness in that the method produces \emph{regions} of points that share the desired class.}
\rev{Finally, contemporary to our work, Fokkema et al.~\cite{fokkema2022attribution} provide important theoretical results that counterfactual explanations (and other XAI methods such as feature attribution ones) can be dramatically different when small perturbations are applied to the starting point $\mathbf{x}$ (or, in general, point to explain).}

To the best of our knowledge, there exists no other work prior to ours that attempts to exploit sparsity when assessing robustness, although sparsity is an important property for counterfactual explanations.
Moreover, existing works typically consider whether robustness helps preventing counterfactual explanations from becoming invalid, while we further consider that additional intervention may be possible, and assess the associated cost.

\section{Conclusion}
Counterfactual explanations can help us understand how black-box AI systems reach certain decisions, as well as what intervention is possible to alter such decisions.
For counterfactual explanations to be most useful in practice, we studied how they can be made \emph{robust} to adverse perturbations that may naturally happen due to unfortunate circumstances, to ensure that the intervention they prescribe remains valid, and potential additional intervention cost that may be needed remains limited.
We presented novel notions of robustness, which concern adverse perturbations to the features that a counterfactual explanation prescribes to change ($\mathcal{C}$-robustness) and to keep as they are ($\mathcal{K}$-robustness), respectively.
We have annotated five existing data sets with reasonable perturbations and plausibility constraints and developed a competitive counterfactual search algorithm to search for (robust) counterfactual explanations.
Our experimental results show that, most often than not, counterfactual explanations do not happen to be robust by accident. 
Consequently, if adverse perturbations take place, counterfactual explanations may require a much larger cost to be realized than anticipated, or even make it impossible for the user to achieve recourse.
Our definitions of robustness can be incorporated in the search process, and robust counterfactual explanations can be discovered.
We have shown that $\mathcal{C}$-robustness can be accounted for efficiently and effectively, while the same is not always true for $\mathcal{K}$-robustness.
\rev{This aspect should be taken into account when choosing what counterfactual explanation is best for the user.}
Overall, robust counterfactual explanations are resilient against invalidation and require much smaller additional intervention to contrast perturbations.

\section*{Acknowledgments}
We thank dr.~Stef C.~Maree for insightful early discussions.
This work made use of the Dutch national e-infrastructure with the support of
the SURF Cooperative using grant no. EINF-2512.
Funding: This publication is part of the project Robust Counterfactual Explanations (with project number EINF-2512) of the research program Computing Time on National Computer Facilities which is (partly) financed by the Dutch Research Council (NWO).

\bibliographystyle{elsarticle-num}
\bibliography{main}

\appendix

\section{Hyper-parameter optimization of random forest \rev{and neural network}}
\label{sec:apdx-hyperparams-random-forest}
To obtain a black-box model $f$ for a given cross-validation fold, we train a random forest model \rev{or a neural network (a multi layer perceptron for classification)} optimized with grid-search hyper-parameter tuning (with five-fold cross-validation on the training set).
The hyper-parameter settings we considered are listed in \Cref{tab:hyper-params-rf,tab:hyper-params-nn}, \rev{all other being Scikit-learn's default~\cite{pedregosa2011scikit} (v.~1.0.1)}.
For random forest, we one-hot encode categorical features when training and querying the random forest model (see the code \texttt{robust\_cfe/blackbox\_with\_preproc.py}).
\rev{For the neural network, we additionally scale numerical features to have mean of zero and standard deviation of one.}

The performance of tuned random forest on all folds in shown in \Cref{tab:performance-random-forest}, \rev{the respective one for the neural network is shown in \Cref{tab:performance-neural-net}}.

\begin{table}[h]
    \centering
    \caption{Hyper-parameter settings considered for tuning random forest.}
    \begin{tabular}{lc}
    \toprule
        Name & \rev{Options} \\
    \midrule
        No.~trees & $\{50, 500\}$ \\
        Min.~samples~split & $\{2, 8\}$\\
        Max.~features & \{$\sqrt{d}$, $d$ \} \\
    \bottomrule
    \end{tabular}
    \label{tab:hyper-params-rf}
\end{table}

\begin{table}[h]
    \centering
    \caption{\rev{Hyper-parameter settings considered for tuning the neural network}.}
    \begin{tabular}{lc}
    \toprule
        Name & Options \\
    \midrule
        Learning rate & $\{0.0001, 0.01\}$ \\
        Max.~iterations & $\{200,  1000\}$ \\
        Solver & $\{\text{Adam}, \text{SGD}\}$\\
    \bottomrule
    \end{tabular}
    \label{tab:hyper-params-nn}
\end{table}

\begin{table}[h]
    \centering
    \caption{Test accuracy of hyper-parameter-tuned random forests acting as black-box models $f$ for the considered data sets across five-fold cross-validation.} 
    \sisetup{separate-uncertainty,scientific-notation=fixed,round-mode=places}
    \begin{tabular}{cccccc}
    \toprule
    {Fold} & {Cre} & {Inc} & {Hou} & {Pro} & {Rec} \\
    \midrule
    0 & 0.71 & 0.86 & 0.93 & 0.79 & 0.8  \\ 
    1 & 0.78 & 0.82 & 0.9 & 0.77 & 0.82  \\ 
    2 & 0.78 & 0.79 & 0.91 & 0.78 & 0.78  \\ 
    3 & 0.74 & 0.82 & 0.91 & 0.82 & 0.77  \\ 
    4 & 0.76 & 0.83 & 0.97 & 0.78 & 0.8  \\ 
    \hline
    Avg. & 0.76 & 0.83 & 0.93 & 0.79 & 0.8  \\ 
    \bottomrule
    \end{tabular}
    \label{tab:performance-random-forest}
\end{table}

\begin{table}[h]
    \centering
    \caption{\rev{Test accuracy of hyper-parameter-tuned neural networks acting as black-box models $f$ for the considered data sets across five-fold cross-validation}.} 
    \sisetup{separate-uncertainty,scientific-notation=fixed,round-mode=places}
    \begin{tabular}{cccccc}
    \toprule
    {Fold} & {Cre} & {Inc} & {Hou} & {Pro} & {Rec} \\
    \midrule
    0 & 0.74 & 0.83 & 0.94 & 0.62 & 0.78  \\ 
    1 & 0.78 & 0.82 & 0.93 & 0.7 & 0.79  \\ 
    2 & 0.73 & 0.8 & 0.91 & 0.69 & 0.75  \\ 
    3 & 0.78 & 0.82 & 0.91 & 0.77 & 0.78  \\ 
    4 & 0.74 & 0.82 & 0.96 & 0.72 & 0.79  \\ 
    \hline
    Avg. & 0.75 & 0.82 & 0.93 & 0.7 & 0.78  \\ 
    \bottomrule
    \end{tabular}
    \label{tab:performance-neural-net}
\end{table}

\section{Additional results}
\label{sec:apdx-additional-results}

We provide additional results.
These are the (possibly non-permanent) invalidity caused by perturbations, as typically done in the literature of robustness, and effect of increasing $m$ for the computation of the $\mathcal{K}$-robustness score.

\subsection{Invalidity of counterfactual explanations}
\label{sec:apdx-invalidity}
We now show whether the fact that best-found counterfactual explanations are typically not robust is associated with a greater chance that perturbations can make them invalid, i.e., such that $f(\mathbf{z}^\prime) \neq t$ where $\mathbf{z}^\prime$ is the point to which $\mathbf{z}^\star$ is shifted by the perturbation.
Here, we do not consider whether further intervention may or may not be possible.
\Cref{fig:invalidity} shows the average frequency with which perturbations cause invalidity.
The frequencies are computed by applying, to each discovered counterfactual example, \num{100} perturbations that are sampled uniformly at random for categorical features (from the categorical possibilities) and uniformly or normally (with st.dev. of \num{0.1}) for numerical features (within the numerical intervals).
The figure shows that when no notion of robustness is accounted for, perturbations generally have a larger chance of causing invalidity of the counterfactual explanation.

Regarding $\mathcal{C}$-robustness and respective perturbations to (features in) $\mathcal{C}$ (i.e., $\mathcal{C}$-setbacks), recall that accounting for this notion of robustness is intended to provide counterfactual explanations with minimal additional intervention cost, the maximal $\mathcal{C}$-setback were to happen.
However, \rev{ideally}, the returned counterfactual example should still be near $\mathbf{x}$, i.e., it should be a point on the boundary of $f$ (exactly so, if the point is truly optimal). 
Thus, under optimality guarantees, $f(\mathbf{z}^\star + \mathbf{w}^{c,s}) \neq t$ (\Cref{prop:optimal-c}); 
This means that any $\mathcal{C}$-setback should result in invalidity (i.e., all entries for perturbations to $\mathcal{C}$ should report \num{1}).
\rev{This does not always happen in \Cref{fig:invalidity} because CoGS does not guarantee to discover optimal counterfactual examples and, thus, in many cases the returned example is not on the boundary, and the $\mathcal{C}$-setback is too small to cross the boundary.
The frequency of this phenomenon depends on the data set.
Also, while accounting for robustness w.r.t.~$\mathcal{C}$ should not, in theory, decrease invalidity rate but only make further intervention less costly, as confirmed in \Cref{sec:results-rq3}), we find that accounting for robustness w.r.t.~$\mathcal{C}$ lowers invalidity rate on Hou (e.g., most evident for both types of $f$ with normally-distributed perturbations).
}

When $\mathcal{K}$-robustness is accounted for, the best-found counterfactual explanation is supposed to be in a region such that the decision boundary is relatively loose with respect to the features in $\mathcal{K}$.
Consequently, accounting for $\mathcal{K}$-robustness \emph{should}, \rev{in fact, counter invalidity, as we do not wish risking that it becomes impossible to carry out further intervention due to the plausibility constraints.}
\rev{The figure shows that, in general, there can be a substantial gain in lowering invalidity by accounting for $\mathcal{K}$-robustness.}
At times, \rev{accounting for $\mathcal{K}$-robustness allows to reach almost zero invalidity}, see \rev{the cell that corresponds to robustness for $\mathcal{K}$ and perturbations to $\mathcal{K}$, on} Inc, Hou, and Pro, for both types of $f$ and sampling distributions.
\rev{However, it is not always the case that $\mathcal{K}$-robustness helps, due to the heuristic nature of the $\mathcal{K}$-robustness score: see, e.g., Cre}.

Lastly, we observe that the frequency of invalidity can raise when both notions of robustness are accounted for at the same time (e.g., on Inc \rev{for uniformly-distributed perturbations when using the neural network}).
\rev{Note that this is not necessarily a problem because invalidity from perturbations to $\mathcal{C}$ is expected to be high, as the goal of robustness w.r.t.~$\mathcal{C}$ is to be able to minimize additional intervention cost.}

\begin{figure}[h]
    \centering
    \setlength{\tabcolsep}{0pt}
    \begin{tabular}{lc}
    &
    Uniformly-distributed perturbations\\
    \multirow{2}{*}{\rev{\rotatebox{90}{$f=\text{random forest}$}}} 
    &
    \includegraphics[width=\linewidth]{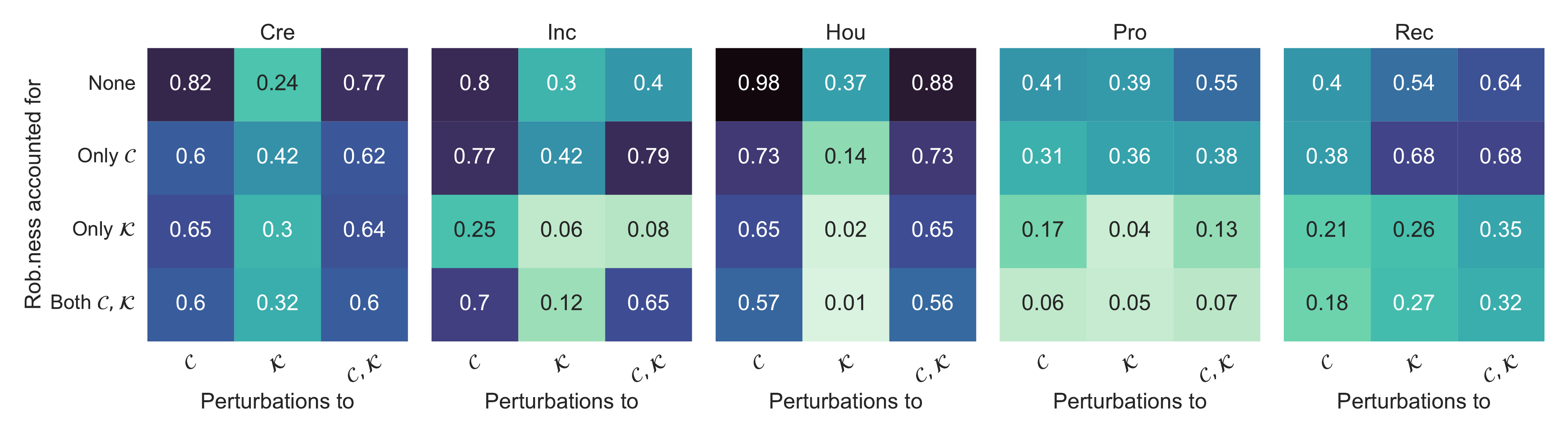}\\ 
    &
    Normally-distributed perturbations\\
    &
    \includegraphics[width=\linewidth]{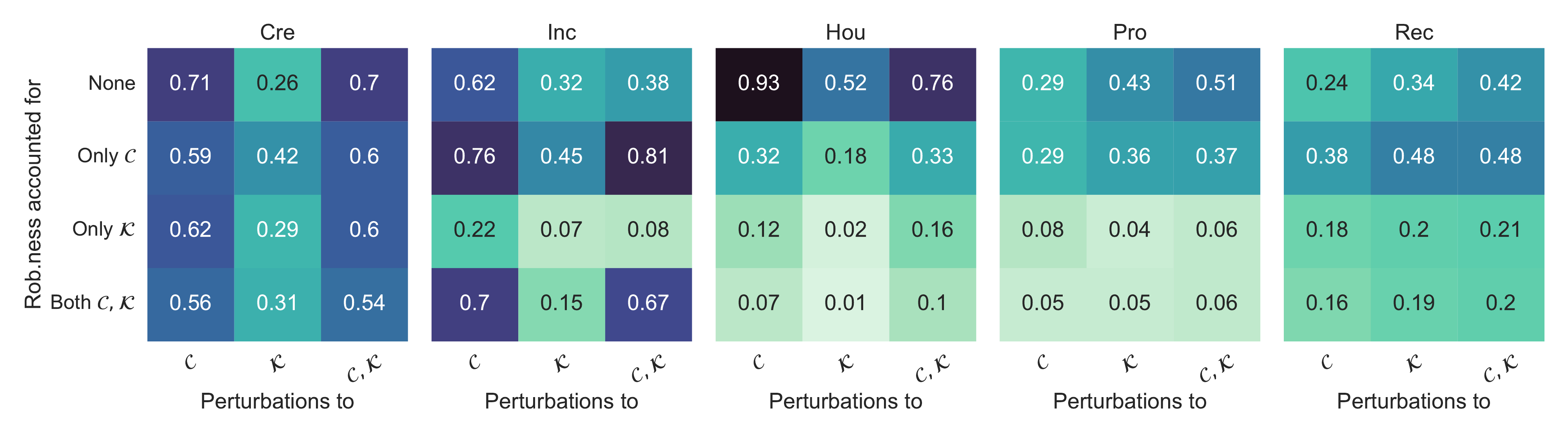}\\ 
    &
    Uniformly-distributed perturbations\\
    \multirow{2}{*}{\rev{\rotatebox{90}{$f=\text{neural network}$}}} 
    &
    \includegraphics[width=\linewidth]{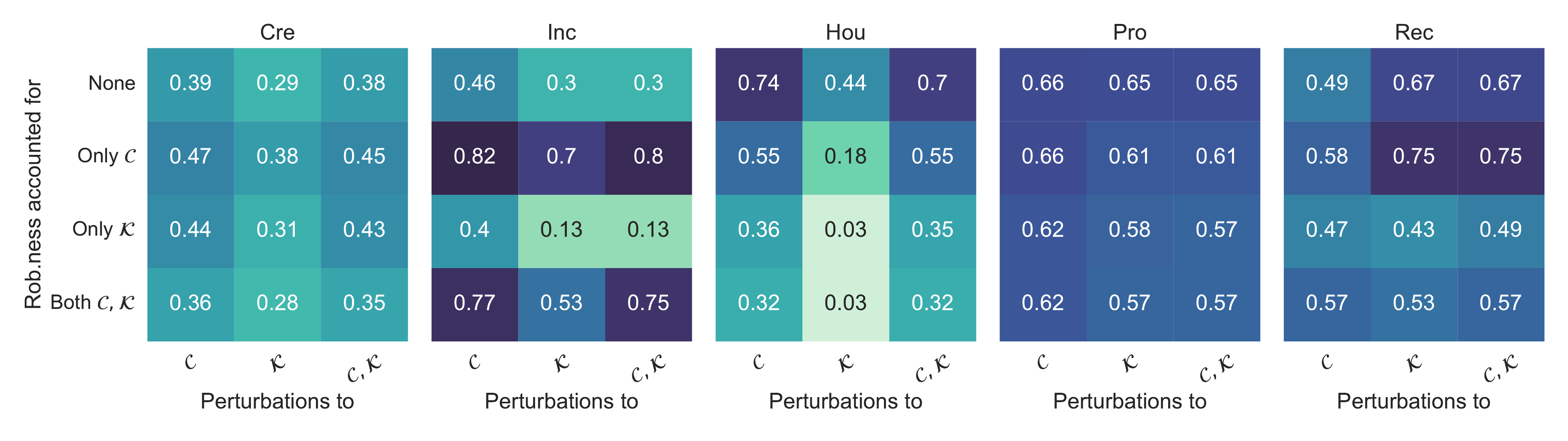}\\ 
    &
    Normally-distributed perturbations\\
    &
    \includegraphics[width=\linewidth]{pics_rev1/invalidity_normal_rf.pdf}\\ 
    \end{tabular}
    \caption{Mean frequency of invalidity of counterfactual explanations under different types of perturbations and when accounting for different types of robustness.
    Darker colors represent worse scenarios, i.e.,  larger average invalidity.}
    \label{fig:invalidity}
\end{figure}

\subsection{Setting $m$ for $\mathcal{K}$-robustness}
\label{sec:apdx-sensitivity-k-robustness}

We report results on setting the hyper-parameter $m$ for computing $\mathcal{K}$-robustness scores (see \Cref{eq:practical-k-robust}).
In particular, we run CoGS accounting for $\mathcal{K}$-robustness in the loss function, for $m \in \{0, 4, 16, 64\}$.
Note that using $m=0$ corresponds to \emph{not} accounting for $\mathcal{K}$-robustness.

\subsubsection{Achieved $\mathcal{K}$-robustness}
We consider how increasing $m$ improves $\mathcal{K}$-robustness, using an \rev{an approximated \emph{ground-truth}}.
\rev{We approximate the ground-truth of the true $\mathcal{K}$-robustness by calculating the $\mathcal{K}$-robustness score over} \num{1000} samples \rev{over the counterfactual example discovered using a specific $m$}.

\Cref{fig:increasing-m} shows the results obtained for this experiment.
We also consider the case in which $\mathcal{C}$-robustness is accounted for.
If $\mathcal{K}$-robustness is not accounted for ($m=0$), then the (approximated ground-truth) $\mathcal{K}$-robustness of the discovered counterfactual examples \rev{can be quite low, see, e.g., Rec for random forest (score approximately of $0.4$) and neural network (score below $0.4$)}.
As soon as a few samples are considered ($m=4$), the $\mathcal{K}$-robustness increases substantially (see, e.g., Cre).
Further increasing $m$ has diminishing returns (note that $m$ is increased exponentially).
Accounting for $\mathcal{C}$-robustness is largely orthogonal, meaning, it has no effect in terms of $\mathcal{K}$-robustness.

\begin{figure}[h]
    \centering
    $f=\text{random forest}$\\
    \includegraphics[width=\linewidth]{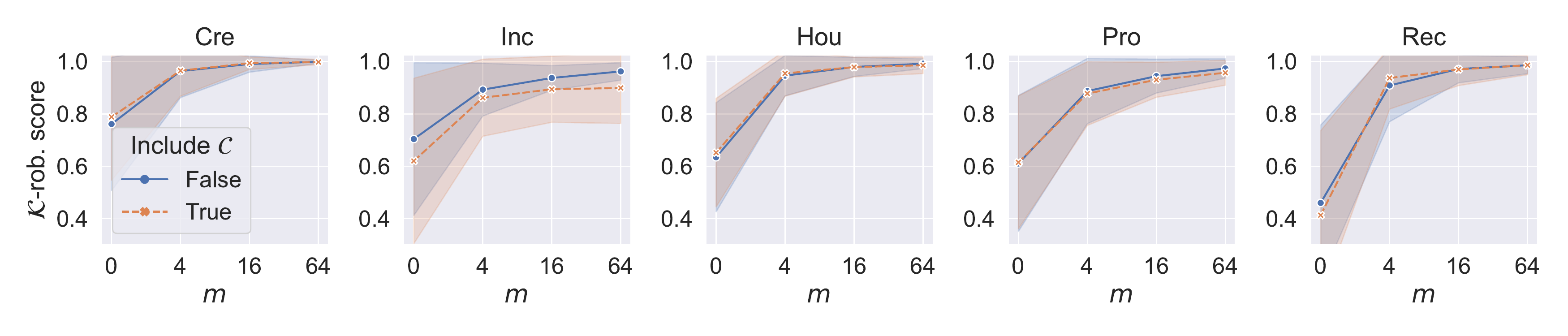}\\
    $f=\text{neural network}$\\
    \includegraphics[width=\linewidth]{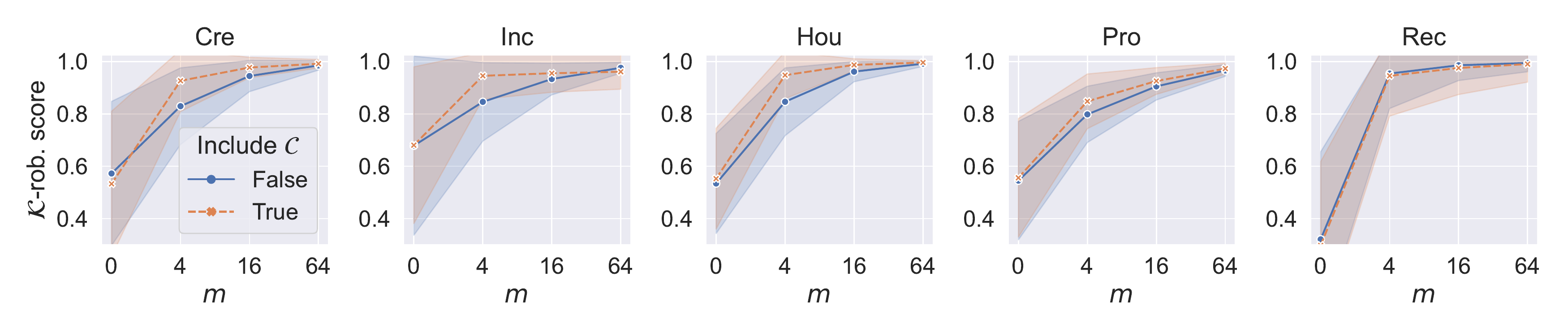}
    \caption{Approximated ground-truth $\mathcal{K}$-robustness scores (using $1000$ samples) for increasing $m$, to determine what value of $m$ is needed for good robustness to perturbations to  $\mathcal{K}$. 
    Shaded areas represent standard deviations.}
    \label{fig:increasing-m}
\end{figure}

\subsubsection{Additional required runtime}
\label{sec:apdx-additional-runtime-robustness}
\Cref{fig:cost-robustness-runtime} shows the additional runtime incurred between runs of CoGS that account for some notion of robustness and runs that do not account for it, \rev{in particular for increasing $m$ in the calculation of the $\mathcal{K}$-robustness score}.
The figure shows that accounting for $\mathcal{C}$-robustness comes at no significant extra cost in runtime.
This follows from the fact that we can use \Cref{def:c-robust-cfe} and thus only need to compute the maximal $\mathcal{C}$-setback.
Conversely, accounting for $\mathcal{K}$-robustness can come at a relatively large additional cost in runtime, which appears to be linear in $m$ (note that $m$ grows exponentially in the plots).
Fortunately, the experimental results of \ref{sec:apdx-sensitivity-k-robustness} suggest that small values of $m$ are often sufficient to obtain good $\mathcal{K}$-robustness scores.

\begin{figure}[h]
    \centering
    \rev{$f=\text{random forest}$}\\
    \includegraphics[width=\linewidth]{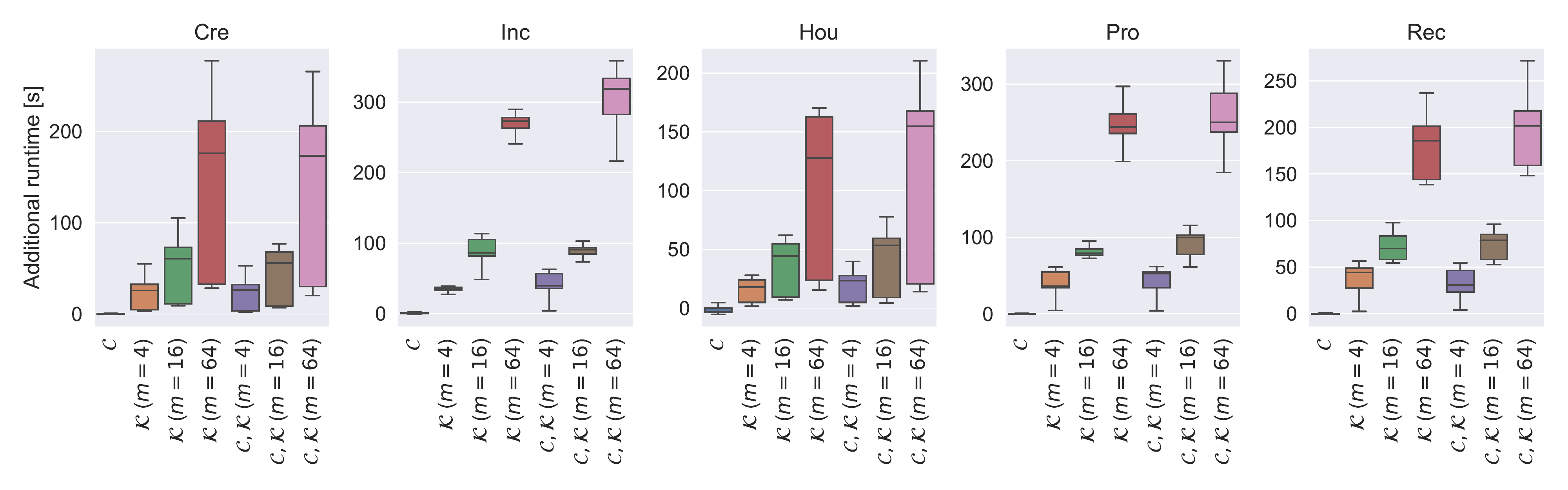}\\
    \rev{$f=\text{neural network}$}\\
    \includegraphics[width=\linewidth]{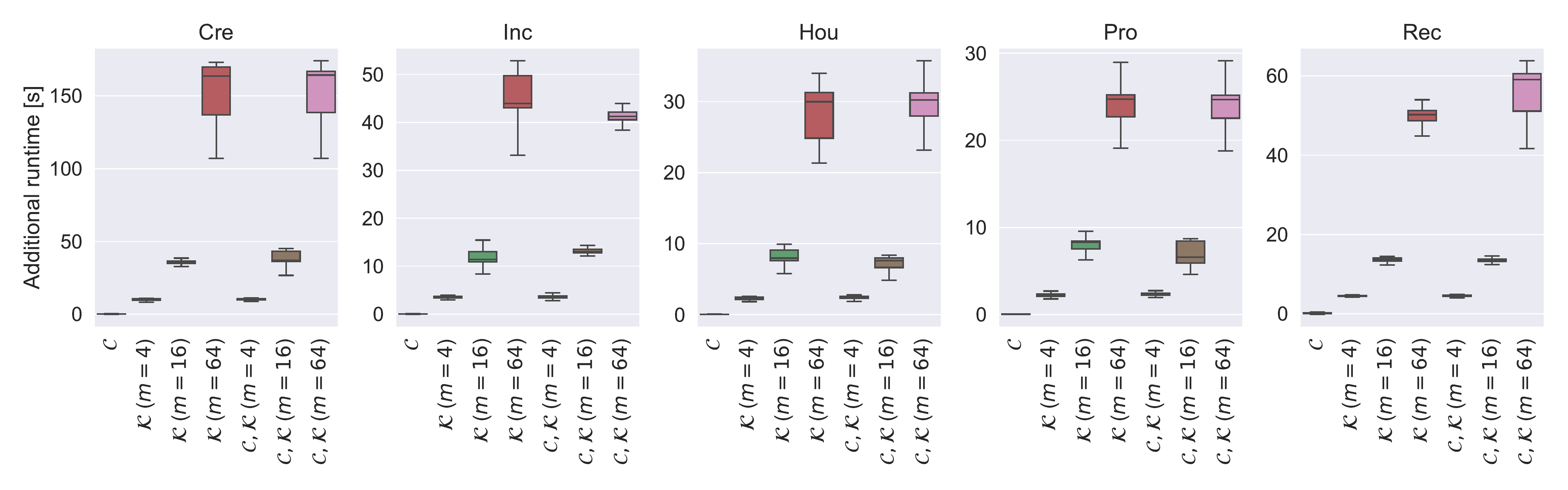}
    \caption{Additional runtime of CoGS for different configurations of accounting for robustness with respect to the runtime when not accounting for robustness.}
    \label{fig:cost-robustness-runtime}
\end{figure}

\subsubsection{Additional cost from accounting for robustness}
\label{sec:apdx-cost-robustness}
\rev{ \Cref{fig:cost-robustness-loss} expands on the results reported in \Cref{fig:add-rel-effort-nop} by including different values of $m$}.
We do not find major differences based on the setting of $m$ for computing the $\mathcal{K}$-robustness score, except for the tails of the respective distributions on Hou, and slightly less so on Pro \rev{(for both types of $f$)}.
Accounting for $\mathcal{C}$- and $\mathcal{K}$-robustness at the same time leads to larger costs than accounting for only one of the two, as it is reasonable to expect.
On average, the cost that comes from accounting for robustness alone is limited (up to $6.5\times$ the ideal cost, see Inc), especially in light of the results found for when perturbations take place, described in \Cref{sec:results-rq3} (additional intervention due to perturbations can lead to $100\times$ times larger costs for non-robust counterfactual explanations, see Inc on \Cref{fig:add-rel-effort-rf}).

\begin{figure}
    \centering
    \rev{$f=\text{random forest}$}\\
    \includegraphics[width=\linewidth]{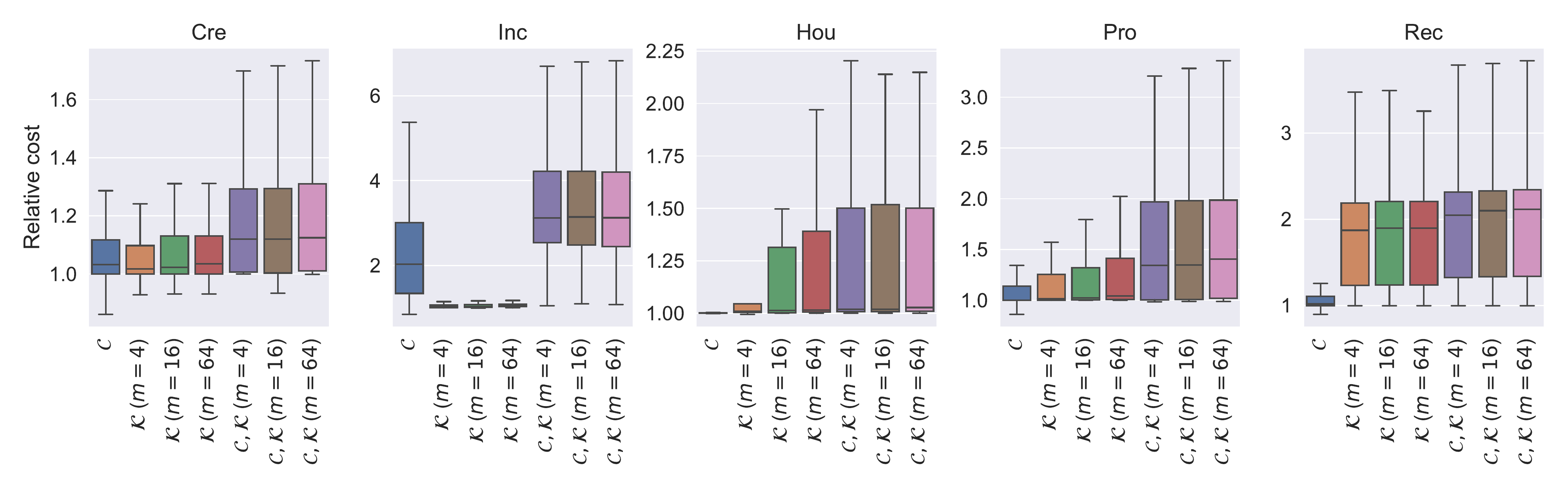}\\
    \rev{$f=\text{neural network}$}\\
    \includegraphics[width=\linewidth]{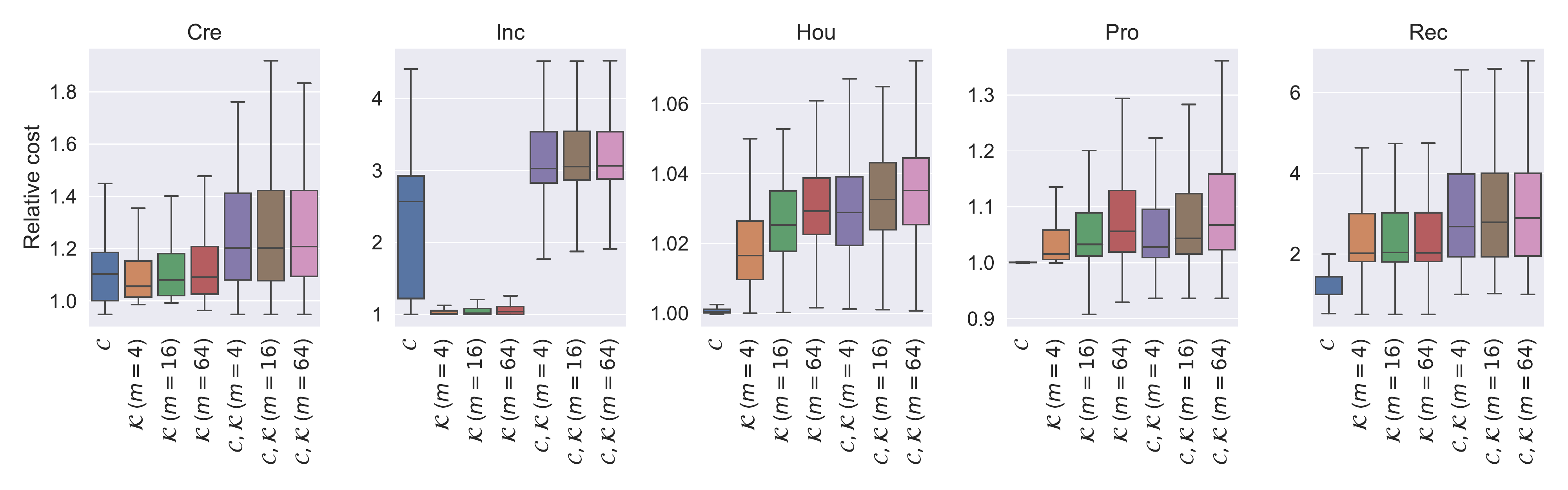}\\
    \caption{Cost of accounting for robustness relative to not accounting for robustness (i.e., ideal cost) when no perturbations take place for different values of $m$. 
    Note that the (rare) relative costs smaller than $1$ are due to a lack of optimality of the search algorithm.
    }
    \label{fig:cost-robustness-loss}
\end{figure}

\section{Statistical significance}
\label{sec:apdx-stat-sign}
We report the statistical significance for the results displayed in \Cref{sec:results-rq3}. 
For each data set and type of perturbation, we perform the Kruskall-Wallis tests (since we cannot assume normality) to determine whether significant differences are present between the relative cost induced by applying the different notion of robustness.
In all cases, the outcome of the test is that significant differences are present ($p\text{-value} \ll 0.01$).
Next, we perform post-hoc pairwise comparisons with the Mann-Whitney-U test to assess whether one notion of robustness protects from the considered perturbation significantly differently than another.
The result of the pairwise comparison analysis is shown in \Cref{tab:ss-cre,tab:ss-inc,tab:ss-hou,tab:ss-pro,tab:ss-rec}.

\rev{On Cre under $\mathcal{K}$-perturbations (middle part of \Cref{tab:ss-cre}), accounting for robustness w.r.t.\ $\mathcal{C}$ is not significantly different than not accounting for any notion of robustness ($p\text{-value}=0.52 > 0.01$);
similarly, accounting for robustness w.r.t.\ $\mathcal{K}$ induces the same relative cost as accounting for robustness w.r.t.\ both $\mathcal{C}$ and $\mathcal{K}$.
On Hou under $\mathcal{C}$-perturbations (top part of \Cref{tab:ss-hou}), accounting for $\mathcal{C}$ is not significantly different than accounting for $\mathcal{K}$ ($p\text{-value}=0.52 > 0.01$), while accounting for $\mathcal{K}$ is not significantly different than accounting for both $\mathcal{C}$ and $\mathcal{K}$ ($p\text{-value}=0.083 > 0.01$).
When perturbations happen to both $\mathcal{C}$ and $\mathcal{K}$ on Pro and Rec (bottom part of respective tables), accounting for $\mathcal{K}$ is not significantly different than accounting for both $\mathcal{C}$ and $\mathcal{K}$ ($p\text{-value}=0.014 > 0.01$ and $p\text{-value}=0.271 > 0.01$, respectively).
In general, the results match what can be seen in \Cref{fig:add-rel-effort-rf,fig:add-rel-effort-nn}.
Also, we note that in the majority of the cases, accounting for one notion of robustness \emph{is} significantly different than accounting for another (or for none).
}

\begin{table}[]
    \centering
    \caption{Result of pairwise comparison on the effect of accounting for different types of robustness for data set Cre under different perturbations (both random forest and neural network, both uniform and normal sampling distributions).
    The displayed $p$-values are obtained with the Mann-Whitney U test under Holm-Bonferroni correction and post Kruskal-Wallis test rejecting the null hypothesis with $p\text{-value} \ll 0.01$.}
\begin{tabular}{lcccc}
\toprule
 \multicolumn{5}{c}{Cre, perturbations to $\mathcal{C}$}\\
 Robustness & {None} & {Only $\mathcal{C}$} & {Only $\mathcal{K}$} & {Both $\mathcal{C}, \mathcal{K}$} \\ \midrule
{None} & 1.000 & 0.000 & 0.000 & 0.000 \\
{Only $\mathcal{C}$} & 0.000 & 1.000 & 0.000 & 0.125 \\
{Only $\mathcal{K}$} & 0.000 & 0.000 & 1.000 & 0.000 \\
{Both $\mathcal{C}, \mathcal{K}$} & 0.000 & 0.125 & 0.000 & 1.000 \\
\midrule
\midrule
 \multicolumn{5}{c}{Cre, perturbations to $\mathcal{K}$}\\
 Robustness & {None} & {Only $\mathcal{C}$} & {Only $\mathcal{K}$} & {Both $\mathcal{C}, \mathcal{K}$} \\ \midrule
{None} & 1.000 & 0.520 & 0.000 & 0.000 \\
{Only $\mathcal{C}$} & 0.520 & 1.000 & 0.000 & 0.000 \\
{Only $\mathcal{K}$} & 0.000 & 0.000 & 1.000 & 0.912 \\
{Both $\mathcal{C}, \mathcal{K}$} & 0.000 & 0.000 & 0.912 & 1.000 \\
\midrule
\midrule
 \multicolumn{5}{c}{Cre, perturbations to $\mathcal{C} \text{ and } \mathcal{K}$}\\
 Robustness & {None} & {Only $\mathcal{C}$} & {Only $\mathcal{K}$} & {Both $\mathcal{C}, \mathcal{K}$} \\ \midrule
{None} & 1.000 & 0.000 & 0.000 & 0.000 \\
{Only $\mathcal{C}$} & 0.000 & 1.000 & 0.000 & 0.000 \\
{Only $\mathcal{K}$} & 0.000 & 0.000 & 1.000 & 0.003 \\
{Both $\mathcal{C}, \mathcal{K}$} & 0.000 & 0.000 & 0.003 & 1.000 \\
\bottomrule
\end{tabular}
    \label{tab:ss-cre}
\end{table}

\begin{table}[]
    \centering
    \caption{Result of pairwise comparison on the effect of accounting for different types of robustness for data set Inc under different perturbations (both random forest and neural network, both uniform and normal sampling distributions).
    The displayed $p$-values are obtained with the Mann-Whitney U test under Holm-Bonferroni correction and post Kruskal-Wallis test rejecting the null hypothesis with $p\text{-value} \ll 0.01$.}
\begin{tabular}{lcccc}
\toprule
 \multicolumn{5}{c}{Inc, perturbations to $\mathcal{C}$}\\
 Robustness & {None} & {Only $\mathcal{C}$} & {Only $\mathcal{K}$} & {Both $\mathcal{C}, \mathcal{K}$} \\ \midrule
{None} & 1.000 & 0.000 & 0.000 & 0.000 \\
{Only $\mathcal{C}$} & 0.000 & 1.000 & 0.000 & 0.000 \\
{Only $\mathcal{K}$} & 0.000 & 0.000 & 1.000 & 0.000 \\
{Both $\mathcal{C}, \mathcal{K}$} & 0.000 & 0.000 & 0.000 & 1.000 \\
\midrule
\midrule
 \multicolumn{5}{c}{Inc, perturbations to $\mathcal{K}$}\\
 Robustness & {None} & {Only $\mathcal{C}$} & {Only $\mathcal{K}$} & {Both $\mathcal{C}, \mathcal{K}$} \\ \midrule
{None} & 1.000 & 0.000 & 0.000 & 0.000 \\
{Only $\mathcal{C}$} & 0.000 & 1.000 & 0.000 & 0.000 \\
{Only $\mathcal{K}$} & 0.000 & 0.000 & 1.000 & 0.000 \\
{Both $\mathcal{C}, \mathcal{K}$} & 0.000 & 0.000 & 0.000 & 1.000 \\
\midrule
\midrule
 \multicolumn{5}{c}{Inc, perturbations to $\mathcal{C} \text{ and } \mathcal{K}$}\\
 Robustness & {None} & {Only $\mathcal{C}$} & {Only $\mathcal{K}$} & {Both $\mathcal{C}, \mathcal{K}$} \\ \midrule
{None} & 1.000 & 0.000 & 0.000 & 0.000 \\
{Only $\mathcal{C}$} & 0.000 & 1.000 & 0.000 & 0.000 \\
{Only $\mathcal{K}$} & 0.000 & 0.000 & 1.000 & 0.000 \\
{Both $\mathcal{C}, \mathcal{K}$} & 0.000 & 0.000 & 0.000 & 1.000 \\
\bottomrule
\end{tabular}
    \label{tab:ss-inc}
\end{table}

\begin{table}[]
    \centering
    \caption{Result of pairwise comparison on the effect of accounting for different types of robustness for data set Hou under different perturbations (both random forest and neural network, both uniform and normal sampling distributions).
    The displayed $p$-values are obtained with the Mann-Whitney U test under Holm-Bonferroni correction and post Kruskal-Wallis test rejecting the null hypothesis with $p\text{-value} \ll 0.01$.}
\begin{tabular}{lcccc}
\toprule
 \multicolumn{5}{c}{Hou, perturbations to $\mathcal{C}$}\\
 Robustness & {None} & {Only $\mathcal{C}$} & {Only $\mathcal{K}$} & {Both $\mathcal{C}, \mathcal{K}$} \\ \midrule
{None} & 1.000 & 0.000 & 0.000 & 0.000 \\
{Only $\mathcal{C}$} & 0.000 & 1.000 & 0.261 & 0.006 \\
{Only $\mathcal{K}$} & 0.000 & 0.261 & 1.000 & 0.083 \\
{Both $\mathcal{C}, \mathcal{K}$} & 0.000 & 0.006 & 0.083 & 1.000 \\
\midrule
\midrule
 \multicolumn{5}{c}{Hou, perturbations to $\mathcal{K}$}\\
 Robustness & {None} & {Only $\mathcal{C}$} & {Only $\mathcal{K}$} & {Both $\mathcal{C}, \mathcal{K}$} \\ \midrule
{None} & 1.000 & 0.000 & 0.000 & 0.000 \\
{Only $\mathcal{C}$} & 0.000 & 1.000 & 0.000 & 0.000 \\
{Only $\mathcal{K}$} & 0.000 & 0.000 & 1.000 & 0.070 \\
{Both $\mathcal{C}, \mathcal{K}$} & 0.000 & 0.000 & 0.070 & 1.000 \\
\midrule
\midrule
 \multicolumn{5}{c}{Hou, perturbations to $\mathcal{C} \text{ and } \mathcal{K}$}\\
 Robustness & {None} & {Only $\mathcal{C}$} & {Only $\mathcal{K}$} & {Both $\mathcal{C}, \mathcal{K}$} \\ \midrule
{None} & 1.000 & 0.000 & 0.000 & 0.000 \\
{Only $\mathcal{C}$} & 0.000 & 1.000 & 0.000 & 0.000 \\
{Only $\mathcal{K}$} & 0.000 & 0.000 & 1.000 & 0.008 \\
{Both $\mathcal{C}, \mathcal{K}$} & 0.000 & 0.000 & 0.008 & 1.000 \\
\bottomrule
\end{tabular}
    \label{tab:ss-hou}
\end{table}

\begin{table}[]
    \centering
    \caption{Result of pairwise comparison on the effect of accounting for different types of robustness for data set Pro under different perturbations (both random forest and neural network, both uniform and normal sampling distributions).
    The displayed $p$-values are obtained with the Mann-Whitney U test under Holm-Bonferroni correction and post Kruskal-Wallis test rejecting the null hypothesis with $p\text{-value} \ll 0.01$.}
\begin{tabular}{lcccc}
\toprule
 \multicolumn{5}{c}{Pro, perturbations to $\mathcal{C}$}\\
 Robustness & {None} & {Only $\mathcal{C}$} & {Only $\mathcal{K}$} & {Both $\mathcal{C}, \mathcal{K}$} \\ \midrule
{None} & 1.000 & 0.000 & 0.000 & 0.000 \\
{Only $\mathcal{C}$} & 0.000 & 1.000 & 0.000 & 0.008 \\
{Only $\mathcal{K}$} & 0.000 & 0.000 & 1.000 & 0.000 \\
{Both $\mathcal{C}, \mathcal{K}$} & 0.000 & 0.008 & 0.000 & 1.000 \\
\midrule
\midrule
 \multicolumn{5}{c}{Pro, perturbations to $\mathcal{K}$}\\
 Robustness & {None} & {Only $\mathcal{C}$} & {Only $\mathcal{K}$} & {Both $\mathcal{C}, \mathcal{K}$} \\ \midrule
{None} & 1.000 & 0.000 & 0.000 & 0.000 \\
{Only $\mathcal{C}$} & 0.000 & 1.000 & 0.000 & 0.000 \\
{Only $\mathcal{K}$} & 0.000 & 0.000 & 1.000 & 0.760 \\
{Both $\mathcal{C}, \mathcal{K}$} & 0.000 & 0.000 & 0.760 & 1.000 \\
\midrule
\midrule
 \multicolumn{5}{c}{Pro, perturbations to $\mathcal{C} \text{ and } \mathcal{K}$}\\
 Robustness & {None} & {Only $\mathcal{C}$} & {Only $\mathcal{K}$} & {Both $\mathcal{C}, \mathcal{K}$} \\ \midrule
{None} & 1.000 & 0.000 & 0.000 & 0.000 \\
{Only $\mathcal{C}$} & 0.000 & 1.000 & 0.000 & 0.000 \\
{Only $\mathcal{K}$} & 0.000 & 0.000 & 1.000 & 0.014 \\
{Both $\mathcal{C}, \mathcal{K}$} & 0.000 & 0.000 & 0.014 & 1.000 \\
\bottomrule
\end{tabular}
    \label{tab:ss-pro}
\end{table}

\begin{table}[]
    \centering
    \caption{Result of pairwise comparison on the effect of accounting for different types of robustness for data set Rec under different perturbations (both random forest and neural network, both uniform and normal sampling distributions).
    The displayed $p$-values are obtained with the Mann-Whitney U test under Holm-Bonferroni correction and post Kruskal-Wallis test rejecting the null hypothesis with $p\text{-value} \ll 0.01$.}
\begin{tabular}{lcccc}
\toprule
 \multicolumn{5}{c}{Rec, perturbations to $\mathcal{C}$}\\
 Robustness & {None} & {Only $\mathcal{C}$} & {Only $\mathcal{K}$} & {Both $\mathcal{C}, \mathcal{K}$} \\ \midrule
{None} & 1.000 & 0.000 & 0.000 & 0.000 \\
{Only $\mathcal{C}$} & 0.000 & 1.000 & 0.000 & 0.000 \\
{Only $\mathcal{K}$} & 0.000 & 0.000 & 1.000 & 0.115 \\
{Both $\mathcal{C}, \mathcal{K}$} & 0.000 & 0.000 & 0.115 & 1.000 \\
\midrule
\midrule
 \multicolumn{5}{c}{Rec, perturbations to $\mathcal{K}$}\\
 Robustness & {None} & {Only $\mathcal{C}$} & {Only $\mathcal{K}$} & {Both $\mathcal{C}, \mathcal{K}$} \\ \midrule
{None} & 1.000 & 0.000 & 0.000 & 0.000 \\
{Only $\mathcal{C}$} & 0.000 & 1.000 & 0.000 & 0.000 \\
{Only $\mathcal{K}$} & 0.000 & 0.000 & 1.000 & 0.651 \\
{Both $\mathcal{C}, \mathcal{K}$} & 0.000 & 0.000 & 0.651 & 1.000 \\
\midrule
\midrule
 \multicolumn{5}{c}{Rec, perturbations to $\mathcal{C} \text{ and } \mathcal{K}$}\\
 Robustness & {None} & {Only $\mathcal{C}$} & {Only $\mathcal{K}$} & {Both $\mathcal{C}, \mathcal{K}$} \\ \midrule
{None} & 1.000 & 0.000 & 0.000 & 0.000 \\
{Only $\mathcal{C}$} & 0.000 & 1.000 & 0.000 & 0.000 \\
{Only $\mathcal{K}$} & 0.000 & 0.000 & 1.000 & 0.271 \\
{Both $\mathcal{C}, \mathcal{K}$} & 0.000 & 0.000 & 0.271 & 1.000 \\
\bottomrule
\end{tabular}
    \label{tab:ss-rec}
\end{table}

\end{document}